\newtheorem{theorem}{Theorem}
\newtheorem{lemma}{Lemma}
\newtheorem{definition}{Definition}
\theoremstyle{definition}
\theoremstyle{definition}
\theoremstyle{definition}
\theoremstyle{definition}
\theoremstyle{definition}
\newcommand{\equivtree}{\equiv_{\textnormal{tr}}}
\newcommand{\nodes}{\mathcal{N}}
\newcommand{\node}{N}
\newcommand{\domain}{\mathcal{D}}
\newcommand{\range}{\mathcal{R}}
\newcommand{\nodedomain}{\mathcal{D}(\node)}
\newcommand{\depth}{\textnormal{depth}}
\newcommand{\counts}{\tilde{c}}
\newcommand{\modpi}{\tilde{\pi}}
\newcommand{\anc}{\textnormal{anc}}
\newcommand{\desc}{\textnormal{desc}}
\newcommand{\maxdepthparam}{M}
\newcommand{\minsplit}{n_{\min}}
\newcommand{\new}{\textnormal{new}}
\newcommand{\init}{\textnormal{\,init}}
\newcommand{\piall}{\Sigma}
\newcommand{\pindp}{\Pi}
\newcommand{\lleft}{\textnormal{left}}
\newcommand{\rright}{\textnormal{right}}
\DeclareMathOperator*{\argmin}{arg\,min\,}
\DeclareMathOperator*{\argmax}{arg\,max\,}
\DeclareMathOperator*{\Id}{id}
\newcommand{\Z}{\ensuremath{\mathbb{Z}}}
\newcommand{\nobs}{n}
\newcommand{\nvar}{d}
\newcommand{\ndim}{\nvar}
\newcommand{\x}{x}
\newcommand{\xvec}{\bm{\x}}
\newcommand{\y}{y}
\newcommand{\yvec}{\bm{\y}}
\newcommand{\z}{z}
\newcommand{\zvec}{\bm{\z}}
\newcommand{\zmat}{Z}
\newcommand{\zset}{\mathcal{\zmat}}
\newcommand{\E}{\mathbb{E}}
\newcommand{\one}{\mathbbm{1}}
\newcommand{\tree}{\mathcal{T}}
\newcommand{\Xdata}{X}
\newcommand{\wrapproof}[1]{
\iftrue
#1
\fi
}
\newcommand{\treepath}{\mathcal{P}}
\newcommand{\rcset}{\mathcal{R}}
\begin{document}

\twocolumn[
\icmltitle{Discrete Tree Flows via Tree-Structured Permutations}

\icmlsetsymbol{equal}{*}

\begin{icmlauthorlist}
\icmlauthor{Mai Elkady}{equal,yyy}
\icmlauthor{Jim Lim}{equal,comp}
\icmlauthor{David I. Inouye}{comp}

\end{icmlauthorlist}

\icmlaffiliation{yyy}{Department of Computer Science, Purdue University }
\icmlaffiliation{comp}{Department of Electrical and Computer Engineering, Purdue University}

\icmlcorrespondingauthor{Mai Elkady}{melkady1@purdue.edu}
\icmlcorrespondingauthor{Jim Lim}{lim316@purdue.edu}

\icmlkeywords{Machine Learning, Flow models, Discrete Generative models, ICML}

\vskip 0.3in
]

\printAffiliationsAndNotice{\icmlEqualContribution} 

\begin{abstract}
While normalizing flows for continuous data have been extensively researched, flows for discrete data have only recently been explored. These prior models, however, suffer from limitations that are distinct from those of continuous flows. Most notably, discrete flow-based models cannot be straightforwardly optimized with conventional deep learning methods because gradients of discrete functions are undefined or zero. Previous works approximate pseudo-gradients of the discrete functions but do not solve the problem on a fundamental level. In addition to that, backpropagation can be computationally burdensome compared to alternative discrete algorithms such as decision tree algorithms.  Our approach seeks to reduce computational burden and remove the need for pseudo-gradients by developing a discrete flow based on decision trees---building upon the success of efficient tree-based methods for classification and regression for discrete data. We first define a tree-structured permutation (TSP) that compactly encodes a permutation of discrete data where the inverse is easy to compute; thus, we can efficiently compute the density value and sample new data. We then propose a decision tree algorithm to build TSPs that learns the tree structure and permutations at each node via novel criteria. We empirically demonstrate the feasibility of our method on multiple datasets.
\end{abstract}

\section{Introduction}

 Discrete categorical data is abundant in numerous applications and domains, from DNA sequences and medical records to text data and many forms of tabular data. Analyzing discrete data by modelling and inferring its distribution is crucial in many of those applications and can encourage innovation in discrete data utilization such as generating new data that is of similar distribution and properties to our original data. However, discrete data can be inherently hard to model. 

Probabilistic graphical models are an important classical way to model discrete distributions such as the Ising model for binary data \citep{wainwright2008graphical}, multivariate Poisson distributions \citep{inouye2017review} for count data, mixture models \citep{ghojogh2020fitting}, admixture models \citep{inouye2014admixture}, and discrete variational autoencoders (VAE) \citep{oord2018neural}, \citep{razavi2019generating}.
However, some of these graphical models lack tractable exact likelihood computation or sampling procedures.

Autoregressive models for discrete data can provide exact likelihood computation (e.g., \cite{germain2015made}), however, their structures hinder sampling speed as variables are conditionally dependent on one another and must be traversed in a fixed order.

A relatively new approach in tackling this problem is normalizing flows \cite{rezende2016variational}, which leverage invertible models to combine different advantages of latent variable models by having a latent variable representation of the data, while also allowing for exact likelihood computation and fast sampling. These models have mostly focused on continuous random variables \cite{dinh2017density},\cite{NIPS2016_ddeebdee},\cite{papamakarios2018masked},
but have recently been introduced for discrete random variables as well. 

The two key components of discrete flows are a base distribution (similar to continuous flows) and a permutation of the discrete configuration values (a discrete version of invertible functions) \cite{tran2019discrete}.
\emph{Discrete flows} have a similar change of variable structure via invertible models as seen in this comparison between continuous change of variables and discrete change of variables:
\begin{align}
\begin{aligned}
    P_x(x) &= Q_z(f(x))\det{\left|\frac{df(x)}{dx} \right|} \quad \textnormal{(Continuous)} \\
    P_x(x) &= Q_z(f(x)) \quad \textnormal{(Discrete)} 
\end{aligned}
    \label{eqn:change-of-variables-both}
\end{align}
where $Q_z$ is some (possibly learnable) base distribution that is usually a simple distribution (e.g., a Gaussian in the continuous flows case or an independent categorical distribution for discrete flows), and $f$ is an invertible function.
Importantly, in the discrete case, the only invertible functions are permutations over the possible discrete configurations of $x$; thus, there is no Jacobian determinant term because permutations do not change volume.
However, given that the number of discrete configurations of $d$ features with $k$ possible discrete values is $k^d$, the number of all possible permutations of all configurations is $k^d!$.
Optimizing over all possible permutations is computationally intractable.
Thus, parameterizing and optimizing over a subset of possible permutations is critical for practical algorithms and generalizability.

Different methods were proposed to handle modeling discrete data using normalizing flows, among these are:

\paragraph{Using a straight-through gradient estimator (STE) to approximate the gradients:} \citet{tran2019discrete} introduces Autoregressive and Bipartite flows (AF and BF) to model \emph{categorical} discrete data and proposes to parameterize the permutations using neural networks, and to tackle discrete functions non-differentiability they resorted to using an STE \cite{bengio2013estimating} along with a Gumbel-softmax distribution for back-propagation. This work suggests that gradient bias from the discrete gradient approximations may be a key issue, \citet{berg2020idf} later show that the architecture of the coupling layers is significantly more important than the gradient bias issue.
\citet{hoogeboom2019integer} introduces integar discrete flows (IDF) and \citet{berg2020idf} improves upon it in IDF++. Both works also use STEs but focus on \emph{integer} discrete data.

\paragraph{Using a mixture of continuous and discrete training:} \citet{lindt2021discrete} introduced Discrete Denoising Flows (DDFs) to model \emph{categorical} discrete data. They proposed to separate learning into a continuous learning step and a discrete step: learn an NN probabilistic classifier on half of the features to predict the other half of the features (i.e., a bipartite coupling layer) and then sort each feature in the second half based on the predicted logits (i.e., the discrete operation). This approach sidesteps the possible gradient bias.

\paragraph{Solving the problem in the continuous space, then projecting the solution in the discrete space:}

\citet{ziegler2019latent} introduced 3 different flow architectures (which we'll refer to as latent flows (LF)) that can be used as the prior in a VAE model with a variable length latent code which is used to encode the discrete data in a continuous space. In a similar fashion, \citet{lippe2021categorical} introduces categorical normalizing flows (CNF) and proposes to use an encoder to project categorical discrete variables to the continuous space using variational inference and then utilize a continuous flow model to solve the problem. \citet{hoogeboom2021argmax} introduced ARGMAX flows (ARGMAXF) that use an argmax based transformation, and a probabilistic right inverse to lift the discrete categorical data into the continuous space. These approaches don't allow for a discrete latent space to be achieved, nor for exact likelihood calculation, but overcomes issues that are common for categorical data when using dequantization. A comparison of all these previous works is presented in \autoref{tab:methodss_comparisons}.

\begin{table}[t]
\caption{A comparison of different methods for solving the discrete flows problem in terms of whether the method handles categorical data, will produce a discrete latent space and is trained by Exact likelihood Optimization(ELO)}
\label{tab:methodss_comparisons}
\begin{center}
\begin{small}
\begin{sc}
\begin{tabular}{p{0.18\linewidth}p{0.2\linewidth}p{0.2\linewidth}p{0.2\linewidth}}
\toprule
 & \shortstack{Categorical\\ data}  & \shortstack{Discrete \\latent space} & \shortstack{Training by\\ ELO} \\
\midrule
 AF, BF & \hfil $\surd$  & \hfil $\surd$  & \hfil $\surd$  \\ 
 DDF & \hfil$\surd$  & \hfil $\surd$  & \hfil $\surd$  \\
 IDF & \hfil $\times$ & \hfil $\surd$  & \hfil $\surd$  \\
 IDF++ & \hfil $\times$ & \hfil $\surd$  & \hfil $\surd$  \\
 LF  & \hfil $\surd$  & \hfil $\times$ & \hfil $\times$ \\
 CNF & \hfil $\surd$  & \hfil $\times$ & \hfil $\times$ \\
 ARGMAXF & \hfil $\surd$  & \hfil $\times$ & \hfil $\times$ \\  
\bottomrule
\end{tabular}
\end{sc}
\end{small}
\end{center}
\vskip -0.33in
\end{table}
Most of these prior methods don't address the discrete nature of the problem on a fundamental level and may be computationally expensive (as they usually involve optimizing many neural network parameters which translates to an increase in training times) compared to alternative discrete-oriented algorithms such as those based on decision trees---which have seen wide success in classification and regression for discrete data (e.g., XGBoost \citep{chen2016xgboost}).
Thus, we seek to answer the following research question: \textbf{Can we design a more computationally efficient discrete flow algorithm using decision trees that handles discrete data on a fundamental level?}

To answer this, we propose a novel tree-structured permutation (TSP) model that can compactly represent permutations and a novel decision tree algorithm that optimizes over the space of these permutations.
Moreover, for more powerful permutations, we can iteratively build up a sequence of TSPs to form a deep permutation which we refer to as Discrete Tree Flows (DTFs).
We summarize our contributions as follows:
\begin{itemize}
    \item We define Tree-Structured Permutations (TSP) that compactly encode permutations at each node of the tree and we prove what constraints are required on these permutations such that the whole TSP is efficiently invertible.

    \item We also propose a novel decision tree algorithm for building TSPs that includes building the tree structure based on a splitting critera, and two passes to learn and apply permutations. We theoretically prove the viability of our algorithm as well.
    
    \item Finally, we demonstrate the feasibility of our method on simulated and real-world categorical datasets.

\end{itemize}

\section{Model: Discrete Tree Flows (DTF)}

We introduce a new model for discrete flows called Tree Structured Permutation (TSP) that utilizes trees for compactly parameterizing a set of permutations based on decision trees.
Our Discrete Tree Flow (DTF) model is merely a composition of multiple TSPs. 
First, we briefly define some notation that will be used throughout the paper.

\paragraph{Notation}
We will denote a discrete dataset as $\Xdata \in \mathcal{Z}^{n\times d}$ where $n$ is the number of samples, $d$ is the number of dimensions, and $\mathcal{Z}$ is a set of discrete values, and where the maximum number of possible discrete values per feature (i.e., the number of categories) is $k$.
Let $\pi$ denote an \emph{independent} permutation, i.e., $\pi(\xvec) = [\pi_0(x_0), \pi_1(x_1), \cdots, \pi_d(x_d)]$ where $\pi_j(x_j)$ is a 1D permutation, and let $\pindp$ denote the set of independent permutations.
Similarly, let $\sigma$ denote a general (possibly non-independent) permutation (e.g., $\sigma_{\tree}$), and let $\Sigma$ denote a set of general (possibly non-independent) permutations.
Given an independent permutation $\pi$ and a count matrix $c \in \Z^{d \times k}$, let $\pi[c]$ denote the operation of permuting the entries of the count matrix by $\pi$ (note that this is different than applying the permutation to a category value $a$ as in $\pi(a)$) and can be defined as:
$\pi[c] \triangleq [\pi_0[c(0)], \pi_1[c(1)], \cdots, \pi_d[c(d)]]\,,$
where
$\pi_j[c(j)] \triangleq [c(j, \pi_j^{-1}(0)), c(j, \pi_j^{-1}(1)), \cdots, c(j, \pi_j^{-1}(k))] \,$, and where the first entry and the second entry in $c(\cdot,\cdot)$ specify a feature and a category, respectively in the count matrix.
We let $\circ$ denote the composition operator, e.g., $\pi_1 \circ \pi_2(x) = \pi_1(\pi_2(x))$.
We will denote a binary decision tree by it's set of nodes $\tree \triangleq \{ \node_j \}_{j=0}^{|\nodes|-1}$ where each node (except a leaf node) has two child nodes and $\node_0$ is the root node.
Each node will have an associated split feature $s$, split value(s) $v$, and node permutation $\pi_{\node}$.

\subsection{Tree-Structured Permutations}
We define a \emph{tree-structured permutation} to be a binary decision tree where each node $\node$ is described by both a permutation $\pi$ and the usual split information (i.e., a split feature $s$ and split value $v$).
Informally, to evaluate a TSP, an input vector traverses the tree from the root to a leaf node based on the split information, and the node permutations are applied as soon as the data reaches the node.
Thus, a TSP node will first permute the input data and then forward the data to the left or right node depending on the split information.
To ensure the TSP is computationally tractable, we choose to restrict our allowable permutations to the natural and computationally tractable class of independent feature-wise permutations, denoted by $\pindp$, which allows each feature to be permuted independently of the other features (i.e., the permutation of one feature cannot depend on the permutations of other features).
This class significantly reduces the number of permutations compared to all possible permutations, i.e., $|\pindp| = (k!)^d  \ll  (k^d)! = |\piall|$, where $\piall$ is the set of all possible permutations of $k^d$ possible configurations.
We illustrate the forward traversal with an example in Figure~\ref{fig:demo_1}.
We formally define our tree-structured permutation as follows:
\begin{definition}[Tree-Structured Permutation]
A \emph{tree-structured permutation} is defined as $\sigma_{\tree}(\xvec) \triangleq f_{\node_0}(\xvec)$, where
$\node_0$ is the root node and 
\begin{align}
    f_\node(x) &\triangleq \begin{cases}
    \pi_\node(x), & \textnormal{if leaf node} \\
    f_{\node_{\lleft}}(\pi_\node(x)), & \textnormal{if }\, \pi_{\node}(x)_s \in v \\
    f_{\node_{\rright}}(\pi_\node(x)), & \textnormal{otherwise} 
    \end{cases}
\end{align}
where $s \in \{0,1,\dots,k-1\}$ denotes the split feature for the node and $v$ are the set of values that determine if the observation goes left in the tree.
\end{definition}

We also define the set of configurations that will reach a particular node as follows:
\begin{restatable}[Node Domain]{definition}{NodeDomain}
\label{def:node-domain}
The \emph{node domain} of the root node is all possible configurations, i.e., $\domain(\node_0) \triangleq \zset^\ndim$, and \emph{node domains} of children nodes are defined recursively as:
\begin{align*}
    \domain(\node_{\lleft}) &\triangleq \{ \pi_{\node}(\xvec): \xvec \in \domain(\node), \pi_{\node}(\xvec)_s \in v\} \\
    \domain(\node_{\rright}) &\triangleq \{ \pi_{\node}(\xvec): \xvec \in \domain(\node), \pi_{\node}(\xvec)_s \not\in v\} \,,
\end{align*}

where $s$ and $v$ are the split feature and value(s) of node $\node$.
Similarly, let
$\domain_j(\node) \triangleq \{x_j : \xvec \in \domain(\node) \}$ denote the domain of the $j$-th feature.
\end{restatable}

\begin{figure}[!ht]
\centering
\includegraphics[trim=1cm 0cm 1cm 1cm, clip, width=\linewidth ]{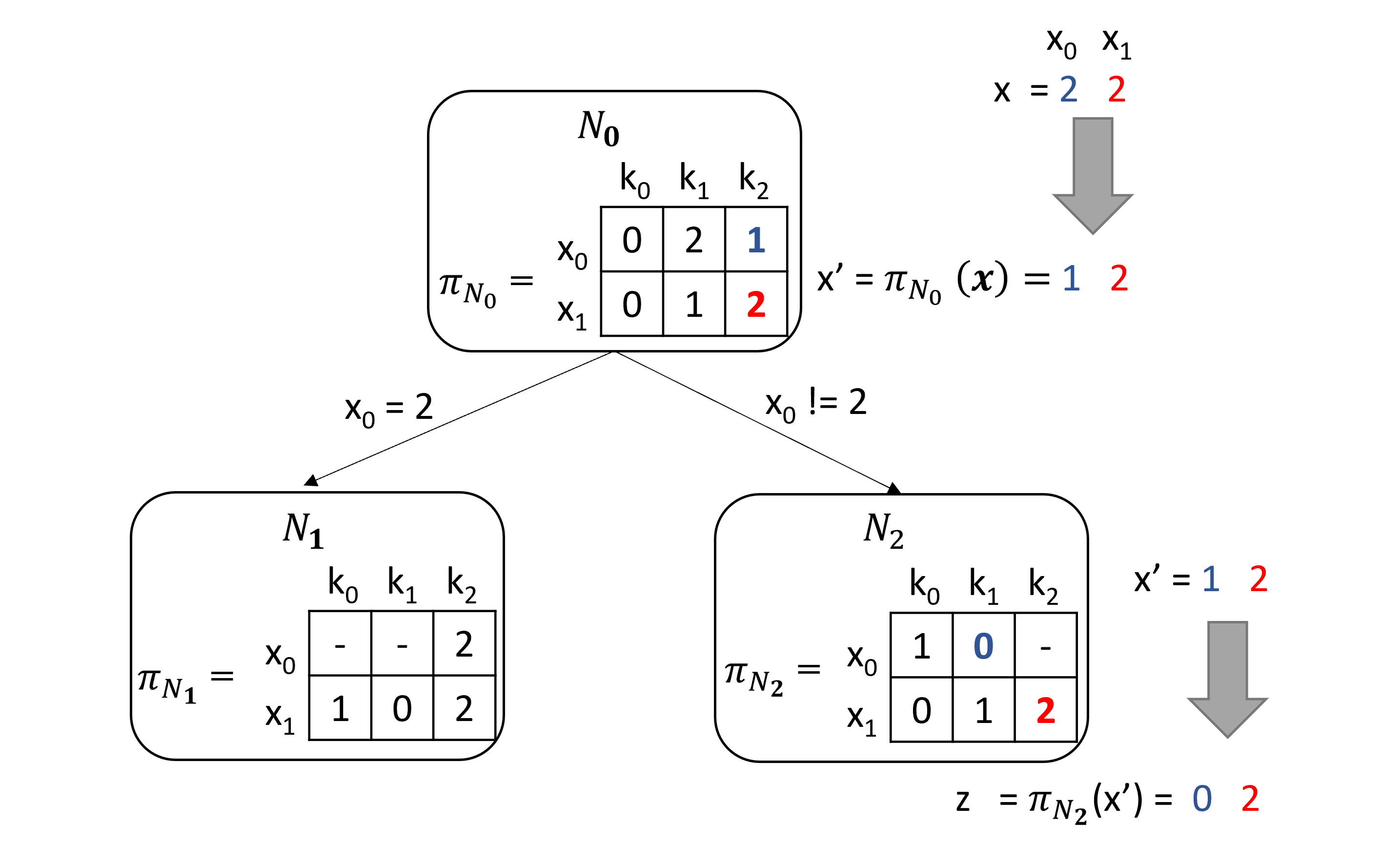}
\vspace{-1em}
\caption{An example of evaluating the forward pass of a TSP on datapoint x yielding datapoint z. At each node the input data will be permuted by the node's permutation $\pi_{\node}$.
We will represent the independent permutation $\pi$ by a $d \times k$ matrix (which is different from a regular permutation matrix).
The row indices correspond to each feature ($x_0$ and $x_1$), the column indices correspond to categories ($k_0, k_1, k_2$), and the matrix entries correspond to the new category value (i.e., the permuted value).
A value of {-} in the matrix indicates entries that are outside the node's domain $\domain(\node)$. 
As an example, consider applying the permutation to the input $\xvec$.
The first dimension ($x_0$) has a category value 2. In the permutation representation matrix the entry at row index 0 and column index 2 is 1, so the datapoint will be permuted to 1 at the 0th dimension after passing through $\node_0$. The data then passes to the left or right nodes depending on the split information which are presented on the arrows.}
\label{fig:demo_1}
\vspace{-1em}
\end{figure}

\subsection{Invertibility of TSPs}
\label{Invertibility}

To ensure our TSPs are invertible (and thus applicable to discrete flows), we prove that a simple and intuitive constraint on the node permutations is sufficient as defined next.

\begin{theorem}
\label{thm:invertibility-constraint}
TSP Invertibility Constraint:
A TSP is invertible if the range of the node permutations is equal to the node domain, i.e., $\forall \xvec \in \domain(\node), \pi_{\node}(\xvec) \in \domain(\node)$.
\end{theorem}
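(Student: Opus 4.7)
The plan is to argue by structural induction on the decision tree that for every node $\node$ the subtree map $f_\node$, restricted to $\domain(\node)$, is a bijection from $\domain(\node)$ onto $\domain(\node)$. Applying this claim at the root $\node_0$, whose node domain is $\zset^\ndim$ by Definition~\ref{def:node-domain}, yields that $\sigma_\tree = f_{\node_0}$ is a bijection on the entire configuration space, which is precisely invertibility.

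In the base case of a leaf, $f_\node$ coincides with $\pi_\node$, which is an independent permutation and therefore injective on all of $\zset^\ndim$. The hypothesis $\pi_\node(\domain(\node)) \subseteq \domain(\node)$ combined with pigeonhole on the finite set $\domain(\node)$ upgrades this containment to equality, so $f_\node$ restricts to a bijection from $\domain(\node)$ onto $\domain(\node)$.

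For the inductive step at an internal node $\node$ with children $\node_\lleft, \node_\rright$, split feature $s$, and split value set $v$, I would first reuse the pigeonhole argument to conclude that $\pi_\node$ is a bijection from $\domain(\node)$ onto $\domain(\node)$. Definition~\ref{def:node-domain} then shows that splitting on coordinate $s$ decomposes $\pi_\node(\domain(\node)) = \domain(\node)$ as the disjoint union $\domain(\node_\lleft) \sqcup \domain(\node_\rright)$. Composing this with the two child bijections supplied by the inductive hypothesis assembles a bijection $f_\node : \domain(\node) \to \domain(\node_\lleft) \cup \domain(\node_\rright) = \domain(\node)$, as required.

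The main obstacle I anticipate is precisely the partition step: verifying not only that the two child domains are disjoint (which is immediate from $x_s \in v$ versus $x_s \notin v$) but that they actually cover $\domain(\node)$. This covering relies on $\pi_\node(\domain(\node))$ filling all of $\domain(\node)$, which is why the theorem's ``range equals domain'' hypothesis must be strong enough to support the pigeonhole upgrade; this is the single nontrivial ingredient of the argument. Once the induction completes, an inverse of $\sigma_\tree$ admits a straightforward recursive description: given $\zvec$, at each node determine which child's domain contains $\zvec$ to descend the tree, and then walk back up applying $\pi_\node^{-1}$ in reverse order.
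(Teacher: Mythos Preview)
Your argument is correct, and it takes a genuinely different route from the paper. The paper does not induct on the tree structure to prove bijectivity directly; instead it isolates an auxiliary statement (\cref{lem:recoverability-of-tree-path}) asserting that the forward traversal path of any input $\xvec$ can be recovered from the output $\yvec=\sigma_\tree(\xvec)$ by traversing the tree on $\yvec$ using only the split criteria and \emph{without} applying the node permutations. That lemma is proved by induction on the depth along a single root-to-leaf path, with a small contradiction argument at each level showing that $\yvec$ cannot branch differently from $\xvec$ because descendant domains only shrink. Invertibility then follows by composing the inverse node permutations along the recovered path.

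Your structural induction is cleaner in that the pigeonhole step turns the hypothesis $\pi_\node(\domain(\node))\subseteq\domain(\node)$ into equality in one line, after which the partition $\domain(\node)=\domain(\node_\lleft)\sqcup\domain(\node_\rright)$ and the assembly of the bijection are immediate; you never need the case analysis the paper performs. What the paper's approach buys is that its lemma is explicitly algorithmic: it tells you that the inverse is computed by a \emph{forward} pass on $\yvec$ using the split tests alone, which is exactly the implemented procedure. Your recursive inverse (``determine which child's domain contains $\zvec$'') is in fact the same algorithm---membership in $\domain(\node_\lleft)$ is decided by testing $z_s\in v$---but the paper's formulation makes that operational content more visible.
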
 

The proof is constructive and relies on the following lemma that is proved by induction in Appendix~\ref{inv_proofs}.

\begin{restatable}{lemma}{RecoverabilityOfTreePath}
\label{lem:recoverability-of-tree-path}
If the invertibility constraint is satisfied, the TSP tree traversal path for any input can be recovered from the output.
\end{restatable}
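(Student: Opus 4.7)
The plan is to prove the lemma by establishing two structural facts about node domains under the invertibility constraint, then showing that these facts force the output $\zvec$ to uniquely identify the leaf reached (and hence the entire root-to-leaf path). The key insight is that the invertibility constraint makes each $\pi_\node$ restrict to a bijection on $\domain(\node)$, so the running value stays inside the current node's domain throughout the traversal.

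First, I would show by a short induction on the tree that $\domain(\node') \subseteq \domain(\node)$ whenever $\node'$ is a descendant of $\node$. This follows directly from \autoref{def:node-domain} together with the invertibility hypothesis: the defining set for $\domain(\node_{\lleft})$ (and analogously for $\domain(\node_{\rright})$) consists of images $\pi_\node(\xvec)$ with $\xvec \in \domain(\node)$, and by the invertibility constraint these images themselves lie in $\domain(\node)$. Second, I would argue that the domains of two distinct leaves are disjoint: letting $\node^\star$ be their lowest common ancestor, the containment result places each leaf's domain inside the domain of the corresponding child of $\node^\star$, and the left and right child domains of $\node^\star$ are disjoint by construction because they are distinguished by whether coordinate $s$ of the vector lies in the split value set $v$.

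With these two facts in hand, the recovery procedure falls out directly. Tracing the forward evaluation on an input $\xvec \in \domain(\node_0) = \zset^\ndim$, the invertibility constraint applied at every node along the path keeps the running value inside the current node's domain; in particular, the final output satisfies $\zvec \in \domain(\node_L)$ for the leaf $\node_L$ that produced it. By the disjointness established above, $\zvec$ lies in the domain of exactly one leaf, so $\node_L$ is uniquely determined by $\zvec$, and the root-to-leaf path is then simply the unique path in the tree to that leaf.

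The main obstacle I expect is not mathematical depth but careful bookkeeping: precisely stating what the \emph{state at a node} is (the value \emph{after} applying $\pi_\node$, on which the split condition is evaluated) and making sure the two invariants---descendant containment and sibling disjointness---are chained correctly through the recursive definition of $\domain$. Once these invariants are phrased cleanly, no additional induction on depth is needed; the conclusion is a direct corollary of the uniqueness of the leaf whose domain contains $\zvec$.
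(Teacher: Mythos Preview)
Your proposal is correct but takes a different route from the paper. The paper argues by induction on depth: it defines the candidate path $\treepath_{(i)}'(\yvec)$ obtained by traversing the tree with the output $\yvec = \sigma_{\tree}(\xvec)$ \emph{without} applying any node permutations, and proves level by level that $\treepath_{(i)}'(\yvec) = \treepath_{(i)}(\xvec)$; the inductive step is a contradiction argument on the split feature, using that once the split coordinate enters a child's domain the invertibility constraint pins it there for all deeper levels. Your argument is instead global and structural: descendant containment $\domain(\node') \subseteq \domain(\node)$ plus sibling disjointness yields pairwise disjoint leaf domains, and a short trace shows the output lands in exactly one of them. The paper's route has the practical advantage of \emph{directly} exhibiting the efficient recovery procedure (forward-traverse on the output without permuting), which the paper explicitly highlights after stating the lemma; your route gives the same procedure implicitly, since deciding which child's domain contains the output at each internal node is just the split test on coordinate $s$, but it makes the underlying reason cleaner and avoids the per-level contradiction bookkeeping. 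Your argument is also essentially the domain-side specialization of the paper's separate necessary-and-sufficient disjoint-ranges theorem, so it unifies the two results.
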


Given this lemma, the sequence of permutations that was applied to an input point can be recovered and the inverse is then merely the inverse of this sequence of permutations (which are all invertible themselves).

While this gives us understanding about the invertibility of TSPs, the proof of \cref{lem:recoverability-of-tree-path} (in Appendix~\ref{inv_proofs}) naturally gives an efficient algorithm for computing the inverse.
We show that we can compute the inverse by traversing the tree in the forward direction (i.e., from root to leaves) without applying the node permutations but keeping track of the path.
Once we reach a leaf, we apply the permutations along the path in reverse order to compute the inverse.
While we provide a simple sufficient condition for invertibility, we prove a more complex necessary and sufficient condition for invertibility in Appendix~\ref{inv_proofs}.

\subsection{Expressivity of DTFs }
We prove in Appendix~\ref{universal_proof} that a composition of TSPs does not restrict expressivity, i.e., a sequential composition of TSPs can produce a universal permutation.
As a proof sketch, we demonstrate that a single TSP can swap configurations that differ in only one feature value while keeping all other configurations the same.
Then, we prove that there exists a snake-like path that connects all possible configurations such that adjacent configurations on the path differ in only one feature value.
Finally, we use the fact that all transpositions (i.e., swaps) between two adjacent configurations is a generating set of the whole permutation space.
Thus, by composing TSPs into a DTF we can express any possible permutation (see Appendix~\ref{universal_proof} for full proof).

\section{Algorithm: Learning Discrete Tree Flows}
Our goal is to minimize the Negative Log Likelihood (NLL) assuming an independent base distribution $Q_{\zvec}$, i.e., 

\begin{align}
    \argmin_{\sigma_{\tree}} \min_{Q_{\zvec}} -\frac{1}{n} \sum_{i=1}^\nobs \log Q_{\zvec}(\sigma_{\tree}(\xvec_i)) \,.
\end{align}
Note that solving the inner minimization problem over $Q_{\zvec}$ is known in closed-form based merely on the discrete value counts along each dimension independently.
While at first it may seem that we must learn the permutations and the splits simultaneously, we show that we can decouple this problem into two subproblems: 1) Estimate decision tree structure and 2) Estimate node permutations.

First, our algorithm estimates the decision tree structure (i.e., the split features and values) via a split criteria.
Second, given the decision tree structure, our algorithm globally optimizes the node permutations over all TSPs with equivalent tree structure (the equivalence of tree structure is defined later). Intuitively, this finds the node permutations that will make the different conditional distributions align as much as possible---thereby moving the joint distribution towards independence. 
We give the pseudocode of our algorithms in Appendix~\ref{code}. 

\subsection{Splitting Criteria for Structure Learning}
\label{learning_splits}
As in every decision tree algorithm, we need to define an appropriate splitting criteria.
We choose to implement a simple baseline of randomly splitting, i.e., choosing a split feature at random then a split value also at random. We refer to a DTF that uses this approach as $\textnormal{DTF}_{RND}$. 
We also propose another approach for splitting, 
that is a heuristic we call \emph{greedy local permutation} (GLP), that relies on theorizing about doing a permutation for a specific split and choosing the split that leads to the best decrease in NLL if that hypothetical permutation is to be applied. 

\paragraph{Greedy local permutation splitting criteria}
For this approach, we evaluate the possible splits based on the potential decrease in NLL that will be attained if we were to split the data accordingly and do a local greedy permutation if needed. We define a local greedy permutation to be the permutation of data that sorts the categorical values in ascending order for each dimension. We present a visual example of how this splitting criteria works in Appendix~\ref{GLP_example}. 
More formally, we define the minimum permuted negative log likelihood criteria as follows 
\begin{align}
    &\textnormal{MinPermNLL}(s,v) = \min_{Q,\pi} -\sum_{i=1}^\nobs \log Q_{\pi}(\xvec_i) \,,
\end{align}
where $Q_{\pi}(\xvec) \triangleq I(x_{s}\! \in \!v)Q(\xvec) + I(x_{s}\! \not\in \!v)Q(\pi(\xvec))$, $Q$ is a shared independent distribution, and the permutation for the split feature is the identity (no permutations), i.e., $\pi_s = \pi_{\Id}$ and $I$ is an indicator function that is 1 if the condition is true and 0 otherwise.
Because $Q$ is independent and $\pi$ is a feature-wise permutation, this split criterion can be trivially decomposed into $\ndim$ independent subproblems that can be solved exactly in 1D using sorting.
Concretely, without loss of generality, we assume that the counts on the left of the split are already in ascending order.
The optimal $\pi$ is merely the permutation such that the counts on the right are also sorted in ascending order, and the minimum likelihood $Q$ is equal to the empirical frequencies after sorting. 
We give the pseudocode of our ConstructTree algorithm and our FindBestSplit algorithm below in \autoref{alg:construct-tree} and \autoref{alg:find-best-split}, respectively. 
\begin{algorithm}[H]
\caption{ConstructTree: Learn node splits and count data at leaves} 
\label{alg:construct-tree} 
\begin{algorithmic}[1]
\REQUIRE Node $\node$, training data at node $\Xdata$, max depth $\maxdepthparam$, min samples to split $\minsplit$, split score function $\phi(\cdot, \cdot)$
\ENSURE Root node of decision tree structure $\node_0$
\IF{$|\node.X| \geq \minsplit$ \AND $\node.\text{depth} < \maxdepthparam$}
    \STATE $(\node.s, \node.v, \Xdata_{\lleft},\,\, \Xdata_{\rright}, \domain_{\lleft}, \domain_{\rright}) \gets \textnormal{FindBestSplit}(\Xdata, \node.\domain, \phi)$
    \STATE $\node.\lleft \gets \text{CreateNode}(\domain_{\lleft}, \node.\text{depth} + 1, \pi=\pi_{\text{Id}})$
    \STATE $\node.\rright \gets \text{CreateNode}(\domain_{\rright}, \node.\text{depth} + 1, \pi=\pi_{\text{Id}})$
    \STATE $\textnormal{ConstructTree}(\node.\lleft, \Xdata_\lleft)$
    \STATE $\textnormal{ConstructTree}(\node.\rright, \Xdata_\rright)$
\ELSE
    \STATE $\node.\counts^{\textnormal{init}} \gets \textnormal{CountsPerDimension}(\Xdata)$ \label{alg-line:init-counts}
\ENDIF
\STATE return $\node$
\end{algorithmic}
\end{algorithm}

\begin{algorithm}[H]
\caption{FindBestSplit: Find best split of data}
\label{alg:find-best-split} 
\begin{algorithmic}[1]
\REQUIRE Node data $\Xdata$, node domain $\domain$, split score function $\phi(\cdot,\cdot)$ that will depend on the splitting critera we use
\ENSURE Split feature $s$, split values $v$, left and right data $(\Xdata_{\lleft}, \Xdata_{\rright})$, left and right domains $(\domain_\lleft, \domain_\rright)$
\STATE $\mathcal{S} \gets \textnormal{GeneratePossibleSplits}(\domain)$
\STATE $s^*, v^* \gets \argmax_{(s,v) \in \mathcal{S}} \phi(\text{LeftSplit}(\Xdata, s, v), 
\newline
\text{RightSplit}(\Xdata, s, v))$
\STATE $(\Xdata_\lleft, \Xdata_\rright) \gets \text{Split}(\Xdata, s, v)$
\STATE $(\domain_\lleft, \domain_\rright) \gets \text{CreateChildDomains}(\domain, s, v)$
\STATE return $(s^*, v^*, \Xdata_\lleft, \Xdata_\rright, \domain_\lleft, \domain_\rright)$
\end{algorithmic}
\end{algorithm}
\subsection{Learning Node Permutations}
\label{learning_perm}

We first explore the theoretically optimal node permutations given the decision tree structure learned in the first step.
Then we will present our algorithm for learning node permutations and prove that it learns the theoretically optimal node permutations. 

\subsubsection{Theoretically Optimal Node Permutations}
Before we can provide our optimality theorem, we need to introduce several important definitions.
The next two definitions formalize the idea that the conditional distributions at each leaf node should be as close as possible to the marginal distribution over all leaf nodes.
\begin{restatable}[TSP Node Counts]{definition}{NodeCounts}
\label{def:node-counts}
Given a training dataset $\Xdata\in\zset^{\nobs\times\ndim}$, we define the TSP \emph{node counts} $c_{\node} \in \Z_+^{d \times k}$ for a node $\node$ as follows:
\begin{align}
    c_{\node}(j,a) &= \sum_{i=1}^\nobs \one(\sigma_{\tree}(\xvec_{i})_j = a \land \sigma_{\tree}(\xvec_i) \in \domain(\node))
\end{align}
where $j \in \{0,1,\dots,d-1\}$, $a \in \{0,1,\dots, k\}$, and $\one$ is an indicator function that is 1 if the condition is true and 0 otherwise.
\end{restatable}

\begin{restatable}[Rank Consistency]{definition}{RankConsistency}
\label{def:rank-consistency}
A TSP $\sigma_{\tree}$ is rank consistent if and only if there exists an independent permutation $\pi$ such that $\forall \node, \pi[c_{\node}] \in \rcset(\domain(\node))$,
where $\rcset(\domain(\node))$ is the set of \emph{rank consistent} count matrices $\rcset$ w.r.t. to a node's domain $\domain(\node)$ defined as
$\rcset(\domain) \triangleq \{c \in \Z_+^{d \times k}: \forall j, \,\, c(j) \in \rcset(\domain_j)\}\,,$
and $\rcset(\domain_j)$ is the set of rank consistent count vectors defined as:
$\rcset(\domain_j) \triangleq \{c(j) \in  \Z_+^k: \forall a<b, \in \domain, c(j,a) \leq c(j,b), \forall \ell \not\in \domain, c(j, \ell)=0 \} \,.$
\end{restatable}

We also need to formalize the notion that TSPs can have an equivalent decision tree structure even if they have different node permutations. 
\begin{restatable}[TSP Tree Equivalence]{definition}{TreeEquivalence}
\label{def:tree-equivalence}
Two TSPs $\sigma^{A}_{\tree}, \sigma^{B}_{\tree}$ are \emph{tree equivalent}, denoted by $\sigma^{A}_{\tree} \equivtree \sigma^{B}_{\tree}$ if and only if they have the same graph structure (i.e., same nodes and edges), and $\forall j, \xvec, \, \sigma_{\tree}^{A}(\xvec) \in \domain(\node_j^{A}) \Leftrightarrow \sigma_{\tree}^{B}(\xvec) \in \domain(\node_j^{B})$ where $j$ is the index of the node.
\end{restatable}
The proof that this is a true equivalence relation and more discussion is in the Appendix~\ref{tree_eq}. 
This means that any input value will traverse the same path through both TSPs and land at an equivalent leaf node in either structure.
Note that if the node permutations are different between the two TSPs, then the split values will also be different but equivalent. For example, if previously the node had an identity permutation and split on the value $v$, an equivalent node with permutation $\pi$ should split on $\pi(v)$. The tree equivalence definition ensures this is true for the whole tree and not just a single split.
Given these definitions, we can now provide our main optimality result.
\begin{restatable}[Optimality of rank consistent TSPs]{theorem}{OptimalityOfRankConsistentTsps}
\label{thm:optimality-of-rank-consistent-tsps}
Given a TSP tree $\tree^*$, rank consistent TSPs attain the optimal negative log-likelihood among TSPs that are tree equivalent, i.e., rank consistent TSPs are optimal solutions to the problem:
\begin{align}
    \argmin_{\sigma_{\tree} \in \Sigma(\tree^*)} \min_{Q_{\zvec}} -\frac{1}{n}\sum_{i=1}^n \log Q_{\zvec}(\sigma_{\tree}(\xvec_i)) \,,
\end{align}
where $\Sigma(\tree)$ is the set of TSPs whose trees are tree equivalent to $\tree^*$, and $Q_{\zvec}$ is an independent distribution.
\end{restatable}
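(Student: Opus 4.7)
The plan is to reduce the joint problem to a family of one-dimensional marginal entropy minimizations and then use a rearrangement inequality to show that rank consistency achieves each minimum. First I would solve the inner minimization over $Q_{\zvec}$: since $Q_{\zvec}(\zvec)=\prod_j Q_j(z_j)$, the NLL decouples across dimensions as $-\sum_j \sum_a \bar{p}_j(a)\log Q_j(a)$, where $\bar{p}_j(a)=\bar{c}(j,a)/n$ and $\bar{c}(j,a)=\sum_{\node\in\leafset} c_{\node}(j,a)$ is the marginal count over leaves. The optimal $Q_j$ is the empirical marginal $\bar{p}_j$, collapsing the objective to $\sum_j H(\bar{p}_j)$, so it suffices to minimize each marginal entropy independently over $\Sigma(\tree^*)$.

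Next I would pin down the degrees of freedom available within the equivalence class. By the definition of tree equivalence, the partition of training points into leaves is invariant, so for each leaf $\node$ and each dimension $j$ the multiset of raw input values reaching $\node$ is fixed. Because node permutations are feature-wise independent, the composed permutation $\Psi_{\node,j}$ along the root-to-$\node$ path in dimension $j$ is a bijection of $\domain_j(\node)$; and by tuning the leaf permutation $\pi_{\node,j}$ (legal by the invertibility constraint of \cref{thm:invertibility-constraint}), $\Psi_{\node,j}$ can realize any bijection of $\domain_j(\node)$, independently at each leaf. Hence each count vector $c_{\node}(j,\cdot)$ has a fixed multiset of values and may be rearranged arbitrarily within $\domain_j(\node)$.

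The heart of the argument is a rearrangement inequality. Since $\phi(x)=x\log x$ is convex, minimizing $H(\bar{p}_j)$ is equivalent (up to an additive constant) to maximizing $\sum_a \phi\bigl(\sum_{\node}c_{\node}(j,a)\bigr)$. A pairwise exchange argument shows this is maximized precisely when the family $\{c_{\node}(j,\cdot)\}_{\node}$ is \emph{comonotonically arranged}: there is a single total order on the values with respect to which every $c_{\node}(j,\cdot)$ is sorted ascending within $\domain_j(\node)$ and vanishes outside. Rank consistency delivers exactly such a common order through the certifying independent permutation $\pi$, since $\pi[c_{\node}]\in\rcset(\domain(\node))$ for every $\node$ is the statement that every count vector is sorted ascending with respect to the order induced by $\pi$. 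Consequently rank consistent TSPs minimize every $H(\bar{p}_j)$ simultaneously, attaining the optimum of the full NLL.

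The main obstacle is that the classical rearrangement inequality is typically stated for vectors with identical supports, whereas here the leaf domains $\domain_j(\node)$ may overlap only partially. I would circumvent this using the pairwise exchange itself: for any pair of leaves $\node_1,\node_2$ whose count vectors are not comonotone with respect to some common ordering of $\domain_j(\node_1)\cup\domain_j(\node_2)$, swapping two entries inside one leaf's count vector (a move allowed by Step 2) strictly increases $\sum_a \phi(\bar{c}(j,a))$ by convexity of $\phi$ applied to the four affected sums. Iterating the exchange across all leaf pairs terminates at the comonotone arrangement, which coincides globally with a single sorting permutation $\pi$, i.e., with rank consistency.
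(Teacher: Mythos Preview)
Your approach parallels the paper's at the high level: both solve the inner minimization to reduce the objective to $\sum_j H(\hat Q_j)$, and both finish with a swap that strictly lowers entropy whenever rank consistency fails. The packaging differs. The paper proceeds by contradiction and performs the swap at an arbitrary rank-inconsistent \emph{node} $\tilde\node$ (conjugating the swap through all descendants), and then spends most of its effort on an explicit induction over tree depth to verify that the modified TSP remains tree equivalent. You instead observe that leaf permutations may be freely reset within each $\domain_j(\node)$ without disturbing tree equivalence, which sidesteps that induction entirely and recasts the problem as a multi-sequence rearrangement under a convex $\phi$. Your route is shorter and more conceptual, and it makes explicit why the optimum should be comonotone rather than discovering it through contradiction.

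There is, however, a real gap in your pairwise exchange. The claim that whenever two leaves $\node_1,\node_2$ are non-comonotone at positions $a,b$, swapping the two entries in one of them strictly increases $\sum_a\phi(\bar c(j,a))$ is false as stated: the remaining leaves can push $\bar c$ in a direction that makes the swap hurt. With three leaves on $\{0,1\}$ carrying counts $(1,0)$, $(0,1)$, $(0,100)$, leaves $1$ and $2$ are non-comonotone, yet swapping leaf $2$ to $(1,0)$ moves $\bar c$ from $(1,101)$ to $(2,100)$, \emph{decreasing} the spread and hence $\phi(\bar c(0))+\phi(\bar c(1))$. The fix is to choose the leaf whose order disagrees with the \emph{current global order} of $\bar c$: if $\bar c(a)\le\bar c(b)$ but some leaf has $c_\node(a)>c_\node(b)$, swapping in that leaf strictly increases $|\bar c(a)-\bar c(b)|$ (both endpoints move outward), and convexity of $\phi$ then gives the strict improvement. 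This is precisely the comparison the paper's proof makes---it normalizes so the root order is the identity and then locates a node inconsistent with that order---so once you specify the swap direction correctly the two arguments converge.
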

The proof by contradiction is presented in Appendix~\ref{rank_proofs}.
Informally, if this property does not hold, there exists a pair of counts at a leaf node that could be switched and decrease the negative log likelihood leading to a contradiction.
Intuitively, this theorem states that making the conditional distributions after transformation of each leaf (i.e., local counts of each leaf node)  approximately match the marginal distributions (i.e., the global counts) is theoretically optimal---essentially the alignment of conditional distributions is the best way to get approximate independence.
This will lead to better log likelihood because we assume the prior distribution is independent.

\subsubsection{Two-Pass Algorithm to Learn Optimal Node Permutations}
\label{twopass}
Given the theoretical results of the previous section and the tree structure that we learned via one of our splitting criteria (an example of such tree is presented in the leftmost part of Figure~\ref{fig:passes}), we begin our two pass algorithm.
In the first pass (see \autoref{alg:learn-local-permutations}: LearnLocalPermutations), we traverse the tree in a bottom-up fashion to learn the local node permutations ($\modpi$), which are different from the node's final permutation $\pi$.
These auxiliary local node permutations $\modpi$ relate the local counts $\tilde{c}$ to the node counts of the new tree constructed in the next pass (this is formalized in 
\autoref{thm:relation-local-and-new-counts} in \autoref{alg_proofs}).
$\modpi$ is learned by sorting  the initial (unsorted) local node counts $\counts^{\init}$ in each dimension $j$ in ascending order only on the node's domain $\domain(\node)$.
Because the algorithm recursively does this from bottom to top, all local counts would be globally ranked (i.e., rank consistent).
We demonstrate this in the middle part of Figure~\ref{fig:passes} where at the leaf level we sort the counts and permute the data based on the sorting outcome, at the upper levels, we either simply combine the permuted counts from the level below (for all features except the split feature as seen on the left part of the tree) or combine the data, sort again and permute for the split feature (since combining can lead to a distortion in the sorted order, as seen in the right part of the tree).
However, this first pass does not construct a valid TSP.
So in the second pass (see \autoref{alg:construct-equivalent-tree}: ConstructEquivalentTree), we traverse the tree in a top-down fashion to construct an equivalent TSP (i.e., changing the node permutations $\pi_{\node}$ and split values $v$) given the auxiliary local permutations $\modpi$ learned in the first pass.
This new tree will be equivalent in tree structure to the original tree but now will be rank consistent and therefore optimal as per \autoref{thm:optimality-of-rank-consistent-tsps}.
This is demonstrated in the rightmost side of Figure~\ref{fig:passes}.
In this example, we see that if we were to start from the original data (the data at $N_0$ on the leftmost side of the figure) and apply the new TSP (with updated $\pi$ and $v$, we would arrive to the sorted data at the leaves, and our tree would be rank consistent.

\begin{algorithm}[H]
\caption{LearnLocalPermutations: Learn local permutations to sort local counts} 
\label{alg:learn-local-permutations} 
\begin{algorithmic}[1]
\REQUIRE Node $\node$ 
\ENSURE Modified node $\node$ with local permutations ($\modpi$)
\IF{$\node$ is a non-leaf internal node}
    \STATE $\node.\lleft \gets \textnormal{LearnLocalPermutations}(\node.\lleft)$
    \STATE $\node.\rright \gets \textnormal{LearnLocalPermutations}(\node.\rright)$
    \STATE $\node.\counts^{\init} \gets \node.\lleft.\counts + \node.\rright.\counts$ \label{alg-line:combine-counts}
\ENDIF
\STATE $\forall j, \node.\modpi_j \gets \textnormal{Sort1D}(\node.\counts_j^{\init}, \nodedomain_j)$
\STATE $\node.\counts \gets \text{PermuteCounts}(\node.\modpi, \node.\counts^{\init})$ \label{alg-line:sort-counts}
\STATE return $\node$
\end{algorithmic}
\end{algorithm}
\begin{algorithm}[H]
\caption{ConstructEquivalentTree: Construct a new tree with equivalent splits and permutations based on local permutations}
\label{alg:construct-equivalent-tree} 
\begin{algorithmic}[1]
\REQUIRE Node $\node$ with local permutations $\node.\modpi$, ancestor permutation $\pi_{anc}$
\ENSURE Modified node $\node$ with valid but equivalent permutation subtree structure
\STATE $\node.\pi \gets \pi_{anc} \circ \node.\modpi \circ \pi_{anc}^{-1}$
\IF{$\node$ is internal non-leaf node}
    \STATE $\node.v \gets \node.\modpi \circ \pi_{anc}(\node.v)$
    \STATE $\pi_{anc}^{\text{child}} \gets \node.\pi \circ \pi_{anc}$
    \STATE $\textnormal{ConstructEquivalentTree}(\node.\lleft, \pi_{anc}^{\text{child}})$
    \STATE $\textnormal{ConstructEquivalentTree}(\node.\rright, \pi_{anc}^{\text{child}})$
\ENDIF
\STATE return $\node$
\end{algorithmic}
\end{algorithm}

\begin{figure*}[htp]
\centering
\includegraphics[trim=0cm 0.7cm 0cm 2cm,width=.3\textwidth ]{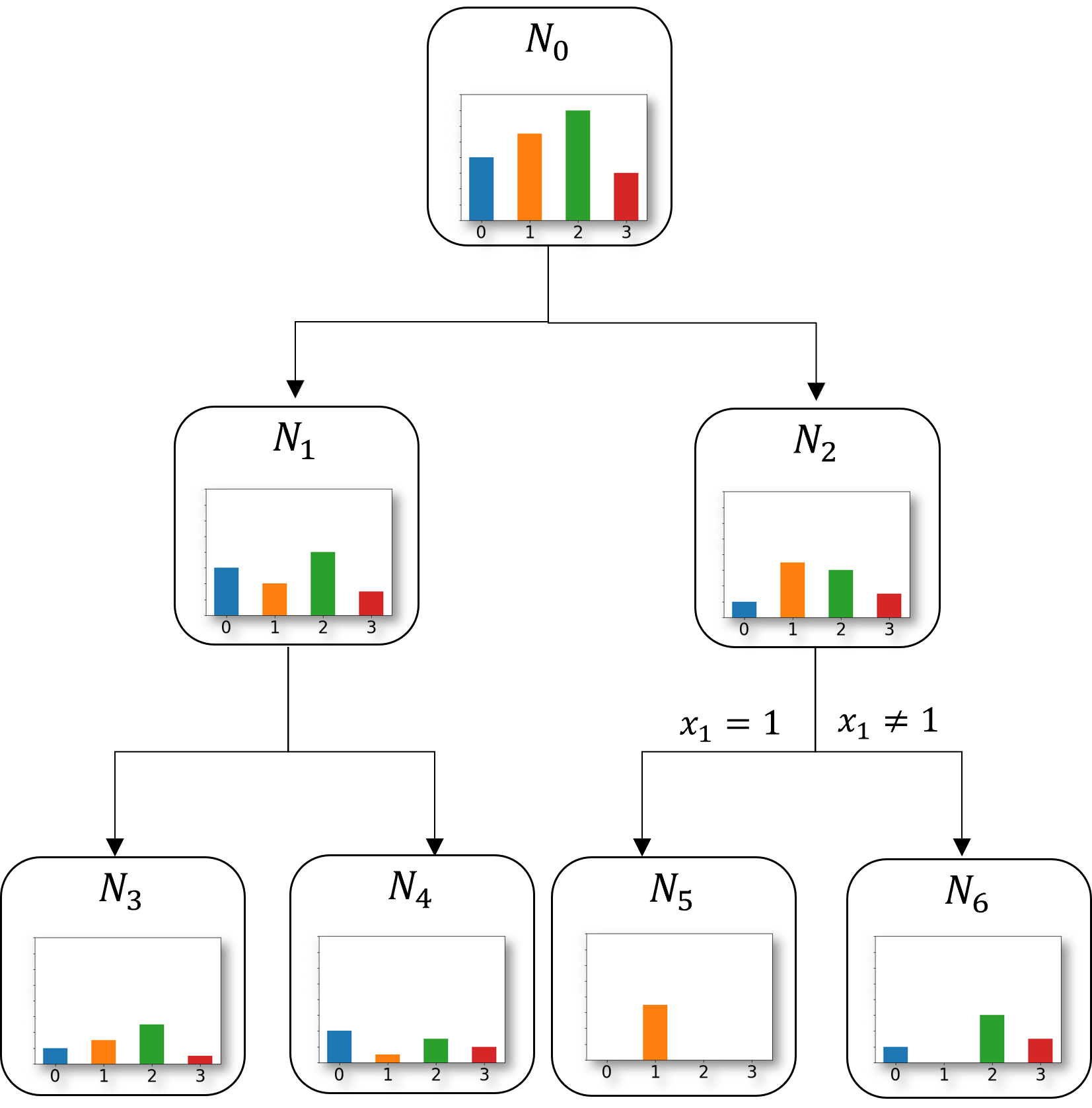}\hfill
\includegraphics[trim=0cm 0.7cm 0cm 2cm,width=.3\textwidth]{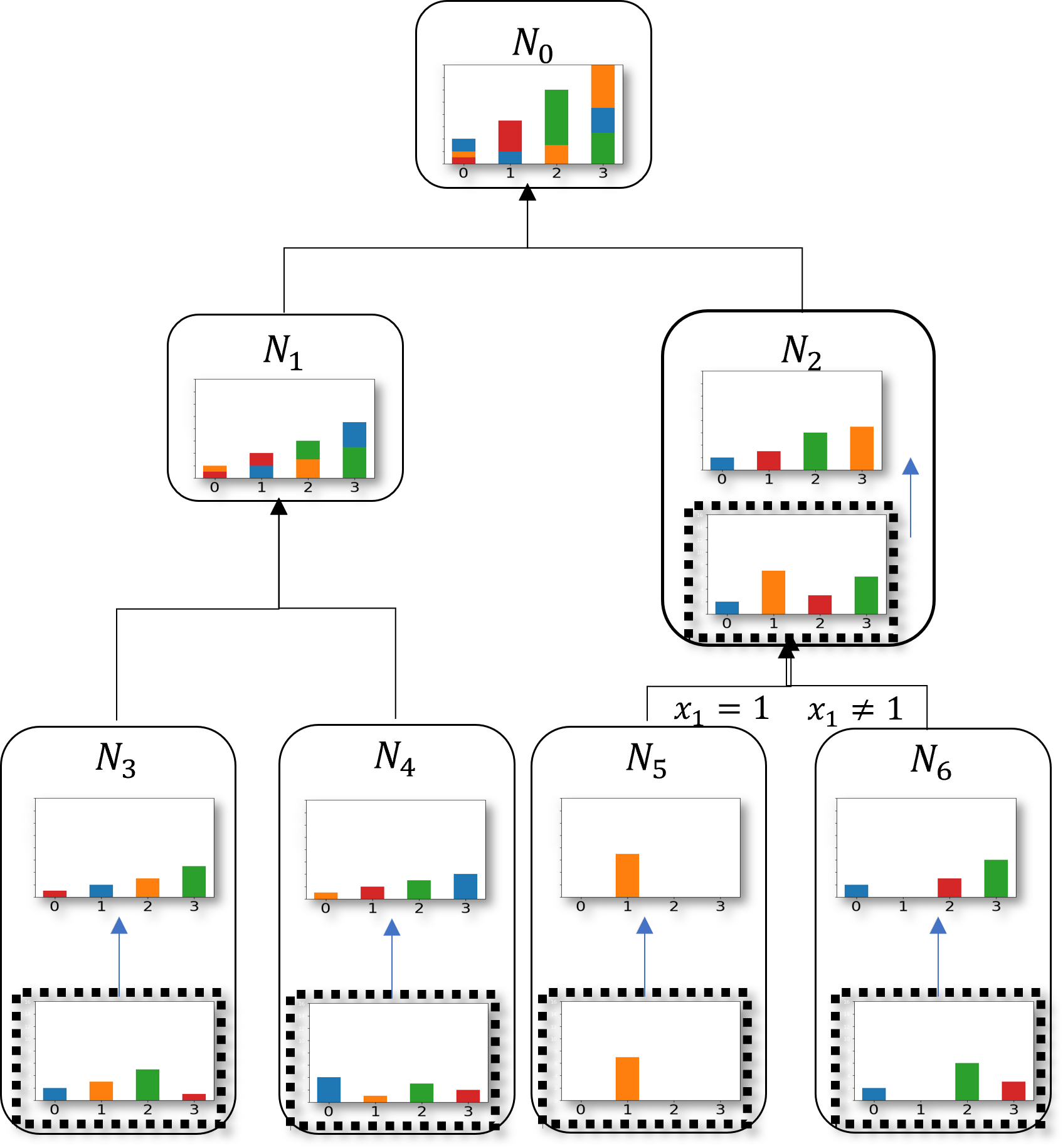}\hfill
\includegraphics[trim=0cm 0.7cm 0cm 0.5cm,width=.3\textwidth]{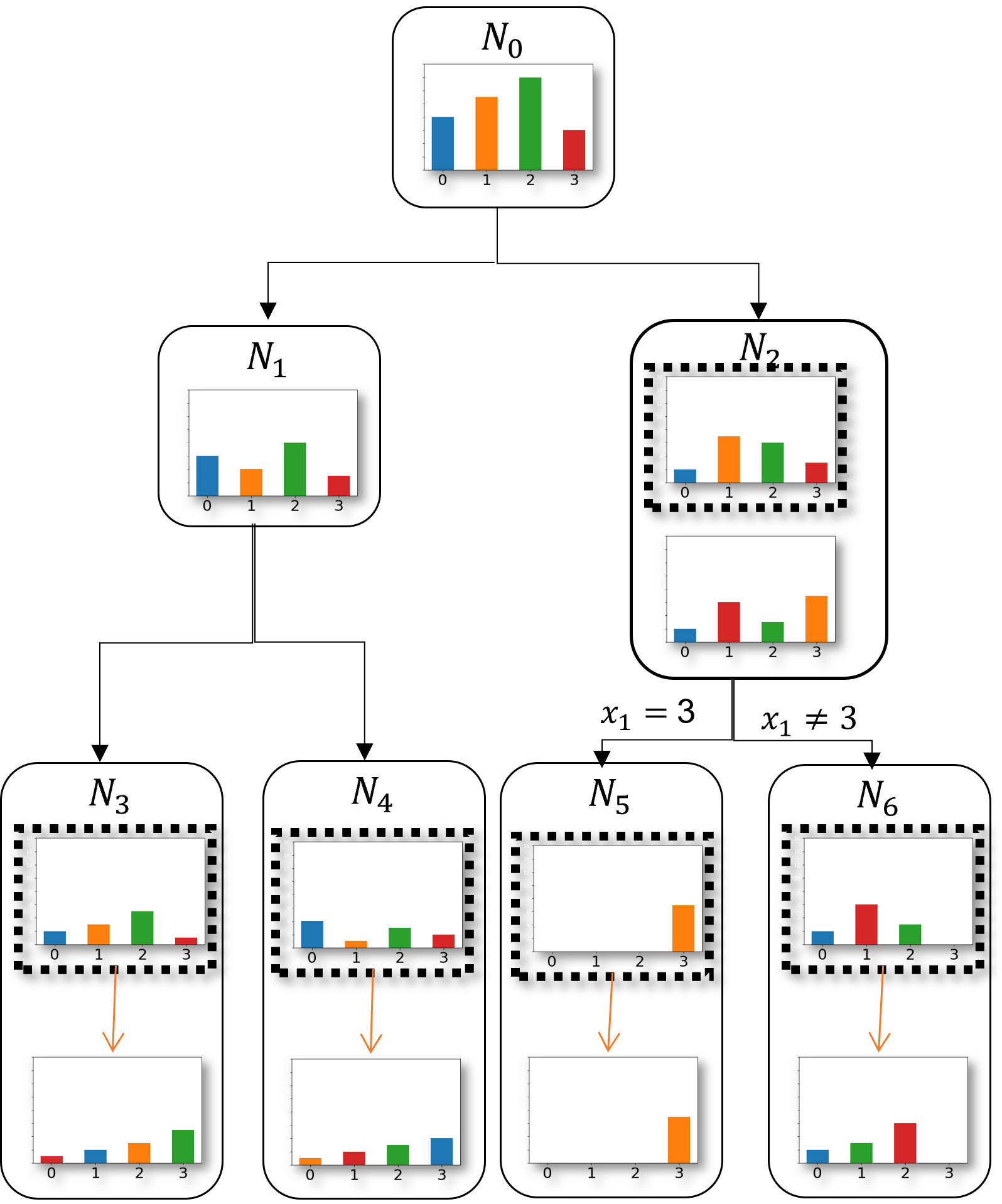}
\caption{On the left, we demonstrate tree construction. The bar plots in each node indicate the counts of the categories (0,1,2 and 3) for the feature ($x_1$), we show just $x_1$'s counts for simplicity, but the same idea applies to other features. The arrows are left blank when the data is split based on another feature that's not $x_1$. The different colors of the bars are to easily track the changes in counts when permutations are applied. In the middle, we demonstrate the operation of pass 1. The dashed lines around borders of the bar plots indicate that these are intermediate steps. The blue arrows indicate that we are learning and applying the local permutations for $x_1$ at the corresponding node. On the right, we demonstrate the resulting tree after both passes. The orange arrows indicate the new node permutations. Also, note how the split feature is now based on category 3 instead of 1 (i.e., we also updated $v$). Another more detailed example is presented in Appendix~\ref{twoPass}.}
\label{fig:passes}
\end{figure*}

We then prove two lemmas regarding this two-pass algorithm (proofs in \autoref{alg_proofs}).
The first lemma ensures that \autoref{alg:construct-equivalent-tree}: ConstructEquivalentTree produces a valid TSP that has equivalent tree structure.
The second lemma proves that the permutations learned in \autoref{alg:learn-local-permutations}: LearnLocalPermutations produce a rank consistent TSP. 
The proof of algorithm optimality follows easily from these two lemmas and \autoref{thm:optimality-of-rank-consistent-tsps}.
\begin{restatable}{lemma}{ConstructEquivalentTree}
\label{thm:construct-equivalent-tree}
The ConstructEquivalentTree algorithm produces a new TSP that has equivalent tree structure to the original TSP from the ConstructTree algorithm.
\end{restatable}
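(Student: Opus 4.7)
The plan is to verify the two requirements of \autoref{def:tree-equivalence} in turn. The graph-structure requirement is immediate from inspection of \autoref{alg:construct-equivalent-tree}: the algorithm never inserts, deletes, or rewires a node; it only overwrites the fields $\node.\pi$ and $\node.v$ in place. Hence the node and edge sets of the new TSP coincide with those of the tree produced by \autoref{alg:construct-tree}. All the real content lies in the second requirement, namely that for every input $\xvec$ and every node index $j$, the traversals in the two TSPs visit $\node_j$ simultaneously.

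I would establish this traversal-equivalence by induction on the depth of the current node. The invariant I would maintain is: whenever the recursive forward pass of the original tree is called on $\xvec$ at some node $\node$, the forward pass of the new tree is also called at $\node$, and the argument value supplied at that call in the new tree equals $\pi_{anc}(\xvec)$, where $\pi_{anc}$ is the composition of all ancestor node permutations in the new tree (so $\pi_{anc} = \pi_{\Id}$ at the root). For the inductive step, I would apply the updated node permutation $\node.\pi = \pi_{anc} \circ \node.\modpi \circ \pi_{anc}^{-1}$ to the incoming value $\pi_{anc}(\xvec)$; the inner $\pi_{anc}^{-1}$ cancels the ancestor composition, leaving $\pi_{anc} \circ \node.\modpi(\xvec)$ as the post-permutation state. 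I would then verify that the branching test at $\node$ against the updated set $\node.v$ returns the same direction as the test $\xvec_s \in \node.v^{\text{old}}$ did in the original tree, so the traversals descend to equivalent children. The invariant is preserved into the child because the algorithm passes $\pi_{anc}^{\text{child}} = \node.\pi \circ \pi_{anc}$, which (by the same cancellation) simplifies to $\pi_{anc} \circ \node.\modpi$ --- exactly the cumulative permutation applied to $\xvec$ to produce the argument of the child's call in the new tree.

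The hardest part will be the split-test verification: confirming that the update to $\node.v$ on line 3 of \autoref{alg:construct-equivalent-tree} is exactly the adjustment needed so that set-membership is preserved despite the permutations now threaded through the ancestors. The useful simplification here is that $\pindp$ consists of feature-wise independent permutations, so only the $s$-th components of $\pi_{anc}$ and $\node.\modpi$ act on the split feature, and each such 1D component is a bijection of the category set $\{0,\dots,k-1\}$; the membership test on the permuted value can then be pulled back to the original test on $\xvec_s$ purely by inverting componentwise. Once this reduction is spelled out at one internal node, the induction closes, and combined with the structural invariance this yields tree equivalence of the two TSPs.
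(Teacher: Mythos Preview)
Your proposal is correct and follows essentially the same route as the paper: induction on depth, maintaining the invariant that the value arriving at node $\node$ in the new tree is $\pi_{\anc}(\xvec)$, exploiting the telescoping cancellation in $\node.\pi = \pi_{\anc}\circ\modpi\circ\pi_{\anc}^{-1}$, and then checking that the updated split set $\node.v$ makes the left/right decision agree with the original tree. The only item the paper adds that you do not mention is a verification that each new node permutation still maps $\domain(\node^{\new})$ into itself (so that the result is a valid, invertible TSP per \autoref{thm:invertibility-constraint}); this follows from \eqref{eqn:new-domain-anc}-style reasoning once your invariant is in place, so it is a small addendum rather than a missing idea.
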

\begin{restatable}{lemma}{LearnLocalPermutations}
\label{thm:learn-local-permutations}
The local permutations learned in LearnLocalPermutations ensure that the equivalent TSP will be rank consistent.
\end{restatable}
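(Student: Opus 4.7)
The plan is to combine an inductive argument on the local counts produced by LearnLocalPermutations with the relationship formalized in \autoref{thm:relation-local-and-new-counts} between these local counts and the final node counts of the equivalent TSP.

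In the first step, I would show by structural induction from leaves to root that, after LearnLocalPermutations terminates, the local count matrix at every node satisfies $\node.\counts \in \rcset(\domain(\node))$. The base case is a leaf $\node$: its initial local counts $\node.\counts^{\init}$ are the empirical counts of the training data routed to $\node$ under the original (identity-permuted) tree, so entries outside $\domain_j(\node)$ are automatically zero by the definition of the node's domain, and the Sort1D step followed by PermuteCounts places the in-domain entries into ascending order for every feature $j$. For the inductive step at an internal node $\node$, the hypothesis gives that $\node.\lleft.\counts$ and $\node.\rright.\counts$ are each rank consistent on their respective child domains. Their sum on the combine-counts line has support exactly on $\domain_j(\node)=\domain_j(\node.\lleft)\cup\domain_j(\node.\rright)$ with zeros outside, but in general fails to be sorted because rank order is not preserved by adding two sorted vectors with different supports. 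The subsequent Sort1D and PermuteCounts steps re-sort this combined count within $\domain_j(\node)$, completing the induction.

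In the second step, I would invoke \autoref{thm:relation-local-and-new-counts}, which provides a single independent permutation $\pi$ such that $\pi[c_{\node}] = \node.\counts$ for every node $\node$ in the equivalent TSP produced by ConstructEquivalentTree, where $c_{\node}$ denotes the node counts per \autoref{def:node-counts}. Combined with the inductive conclusion $\node.\counts \in \rcset(\domain(\node))$, this gives $\pi[c_{\node}] \in \rcset(\domain(\node))$ simultaneously for every node $\node$, which is exactly the rank consistency condition of \autoref{def:rank-consistency}.

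The main obstacle is ensuring that the second step yields a \emph{single} global independent permutation $\pi$ valid at every node, rather than a node-dependent one. This hinges on the specific conjugation pattern $\node.\pi = \pi_{\anc} \circ \modpi \circ \pi_{\anc}^{-1}$ used in ConstructEquivalentTree: as data traverses the equivalent tree from the root down to any node, the cumulative permutation telescopes so that all intermediate $\pi_{\anc}$ factors cancel, leaving only the composition of local $\modpi$ permutations along the root-to-node path. Carefully pushing this cancellation through to the count matrices (so that the difference between $c_{\node}$ and $\node.\counts$ is a common permutation independent of $\node$) is the delicate bookkeeping; once it is in place, the lemma follows directly from the inductive sortedness of $\node.\counts$.
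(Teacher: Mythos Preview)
Your first step is correct and matches the paper: LearnLocalPermutations produces local counts with $\tilde{c}_\node \in \rcset(\domain(\node))$ at every node, by induction from leaves to root.

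Your second step, however, misreads \autoref{thm:relation-local-and-new-counts}. That lemma does \emph{not} supply a single independent permutation relating the new node counts to the local counts; it gives $c_{\node^\new} = \pi_{\anc(\node)}^{\new}[\tilde{c}_\node]$, where $\pi_{\anc(\node)}^{\new}$ is the composition of the new node permutations along the root-to-$\node$ path and therefore genuinely depends on $\node$. Your telescoping observation is exactly \autoref{Lemma:local-reverse-order}, which rewrites this ancestor permutation as $\modpi_{\node_{[0]}} \circ \cdots \circ \modpi_{\node_{[m-1]}}$; but that product still varies with the path, so it cannot serve as the common $\pi$ you are looking for. Consequently your plan, as stated, does not establish rank consistency: you would only get $\pi_\node[c_{\node^\new}] \in \rcset(\domain(\node))$ for a permutation $\pi_\node$ that changes from node to node, which is not the condition in \autoref{def:rank-consistency}.

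The paper's route is different in exactly this place. Rather than seeking a global $\pi$ relating $c_{\node^\new}$ to $\tilde{c}_\node$, it shows that each (node-dependent) $\pi_{\anc(\node)}^{\new}$ \emph{preserves} rank consistency, so that $c_{\node^\new}$ is itself rank consistent and the identity permutation serves as the single $\pi$ in \autoref{def:rank-consistency}. This preservation is the real work: one argues featurewise that for any feature $j$ not equal to the split feature at an ancestor the local permutation $\modpi_{\node,j}$ is the identity (because the child domains coincide on feature $j$ and the sum of two sorted vectors on the same support is sorted), while for the split feature one uses that the child supports are disjoint, so the Sort1D step acts like a merge that cannot disturb the within-child order. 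This merge-sort-style argument is the missing idea in your proposal.
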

\begin{restatable}[Algorithm finds optimal permutations]{theorem}{AlgorithmOptimal}
\label{thm:algorithm-optimal}
Given a tree structure $\tree$, our permutation learning algorithm finds the optimal permutations for the nodes in terms of negative log likelihood.
\end{restatable}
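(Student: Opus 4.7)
The plan is to derive this theorem as an almost immediate corollary of the two lemmas (\autoref{thm:construct-equivalent-tree} and \autoref{thm:learn-local-permutations}) combined with \autoref{thm:optimality-of-rank-consistent-tsps}. Concretely, I would first fix the tree structure $\tree$ returned by \textnormal{ConstructTree} and consider the TSP $\sigma_{\tree'}$ that the two-pass algorithm produces from it, where $\tree'$ denotes the new tree obtained after \textnormal{ConstructEquivalentTree}. The goal is to show that $\sigma_{\tree'}$ is a minimizer of the NLL over the class $\Sigma(\tree)$ of TSPs whose graph structures are tree-equivalent to $\tree$.

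The first step is to invoke \autoref{thm:construct-equivalent-tree} to conclude that $\sigma_{\tree'} \equivtree \sigma_{\tree}$, so that $\sigma_{\tree'} \in \Sigma(\tree)$. This guarantees that when we compare NLL values, we are comparing inside the correct equivalence class. The second step is to invoke \autoref{thm:learn-local-permutations}, which directly certifies that $\sigma_{\tree'}$ is rank consistent in the sense of \autoref{def:rank-consistency}, i.e., there is an independent permutation $\pi$ such that $\pi[c_{\node}] \in \rcset(\domain(\node))$ for every node $\node$ of $\tree'$.

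The third step is to apply \autoref{thm:optimality-of-rank-consistent-tsps} to this rank consistent TSP: it asserts that every rank consistent TSP in $\Sigma(\tree)$ is a minimizer of $-\tfrac{1}{n}\sum_i \log Q_{\zvec}(\sigma_{\tree}(\xvec_i))$ jointly over $\sigma_{\tree} \in \Sigma(\tree)$ and independent $Q_{\zvec}$. Thus $\sigma_{\tree'}$ is optimal among all tree-equivalent TSPs, which is precisely the claim of the theorem (optimizing over node permutations while holding the tree structure $\tree$ fixed is the same as optimizing over $\Sigma(\tree)$, because varying the node permutations is exactly what generates the equivalence class up to the corresponding split-value updates).

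The proof itself is therefore essentially a three-line combination argument; the real work lives in the lemmas. The main obstacle I anticipate is only a notational one: I need to be careful that the "optimal permutations for the nodes" in the theorem statement is interpreted as optimality over $\Sigma(\tree)$ rather than over node permutations with the split values held fixed — since the two-pass algorithm updates both $\pi_\node$ and $v$ jointly to preserve tree equivalence, the correct search space is the full equivalence class $\Sigma(\tree)$, and this is exactly what \autoref{thm:optimality-of-rank-consistent-tsps} optimizes over. Once this framing is made explicit, no further calculation is needed.
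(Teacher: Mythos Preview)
Your proposal is correct and matches the paper's own proof essentially line for line: the paper also simply combines \autoref{thm:construct-equivalent-tree} (equivalent tree structure) and \autoref{thm:learn-local-permutations} (rank consistency) and then invokes \autoref{thm:optimality-of-rank-consistent-tsps}. Your additional remark clarifying that optimizing over node permutations corresponds to optimizing over $\Sigma(\tree)$ is a helpful notational point that the paper leaves implicit.
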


This result is particularly notable since the LearnLocalPermutations and ConstructEquivalentTree can be computed in closed-form using only sorting operations on count matrices and manipulations of independent permutations.
Thus, the overall computational complexity is linear (up to log factors) in all relevant parameters.

\section{Experiments}
To test our approach, we compare different results including the negative log likelihood (NLL),  training time (TT), and the number of model parameters (NP), against models for discrete flows that are suitable for categorical data and that allow for discrete latent spaces to be achieved by training by optimizing the exact likelihood. The models that fit these criteria are the two models that were introduced in \cite{tran2019discrete} and implemented in  \cite{bricken_trentbrickpytorchdiscreteflows_2021} (the Autoregressive Flow with Independent Base distribution (AF), and Bipartite Flow with independent base distribution (BF)) and the Discrete Denoising flows (DDF) that were introduced in \cite{lindt2021discrete}. 

Our model, Discrete Tree Flows (DTF), utilizes an independent base distribution, and has three variations depending on the splitting criteria that we use. $
\textnormal{DTF}_{GLP}, \textnormal{DTF}_{RND}$ is when our models uses the greedy local permutation, and random splitting critera respectively. To calculate the number of parameters in our DTF model, we counted the number of permutations that are not the identity in each node and also added a contribution of two from each node to account for the parameters of the split feature and split value that are encoded in the node. 
All the timing results we report are from running our experiments on an Intel\textregistered Core\textsuperscript{TM} i9-10920X 3.50GHz for CPU, and Nvidia GeForce RTX\textsuperscript{TM} 3090 for GPU. 
Note that we had to modify the implementation of the BF model in \cite{bricken_trentbrickpytorchdiscreteflows_2021} as it had some bugs, the details of those modifications are presented in Appendix~\ref{modify_bipartite}.

\subsection{Synthetic data}
We carry out a set of experiments on synthetic data and report the results for comparing our $\textnormal{DTF}_{GLP}$ model with AF, BF and DDF in \autoref{tab:syn_exp_results}. A full table that includes 
$\textnormal{DTF}_{RND}$ is presented in Appendix~\ref{app:syn_results_full}. In this table we report the results for the best performing model among those we have tried (We list all models we tried in Appendix~\ref{model_space} and the best models that we report the results for in Appendix~\ref{models_used}). The training times for the different models vary depending on the model's hyperparamteres, the size of the data $n\times d$ and the number of categories $k$.

\paragraph{Discretized Gaussian mixture model (8 Gaussian)}
Following the experiments in the \citep{tran2019discrete} paper that was also reproduced in \cite{lindt2021discrete}, we generate data from a mixture of 8 Gaussian distributions and then \emph{discretize} each feature into 91 categorical bins so that $d=2$ and $k=91$ dataset.
We choose to draw 12,800 samples in total, reserving 10,240 samples for training and 2,560 for testing.
Visualizations of samples trained on this discretized Gaussian mixture model dataset can be found in Appendix~\ref{visuals}.

\paragraph{Gaussian Copula Synthetic Data }
We also created synthetic data with pairwise dependencies using a Gaussian copula model with discrete marginals (as a reference for copula models see \citep{nelsen2007introduction}).
The generating process (summarized in Appendix~\ref{Copula}) is used to generate 3 datasets with varying degrees of dependency among the features.
For all distributions, we set $d=4$ and sampled $n = 10000$ and used an inverse CDF of a Bernoulli distribution to generate binary data with $k = 2$.
The first dataset has a very high correlation (COP-H) where the feature are very dependant, the second with moderate correlations (COP-M), the third with weak correlations (COP-W).

\paragraph{Synthetic Data Experimental Results:}

As can be observed from \autoref{tab:syn_exp_results}, Our method with the GLP splitting criteria outperforms all other methods in terms of NLL and training time (even when comparing training times on GPU for the other models), with the exception of the 8 Gaussian dataset where our model gives very comparable result to the best model (DDF) but at a fraction of the training time. 

\begin{table}[t]
\caption{Average of NLL and Training Time on CPU (TTC), and GPU (TTG) across 5 folds ($\pm$std) for synthetic datasets, the full table is available in Appendix~\ref{app:syn_results_full}}
\label{tab:syn_exp_results}
\begin{center}
\begin{small}
\begin{sc}
\begin{tabular}{p{0.038\linewidth}p{0.19\linewidth}p{0.19\linewidth}p{0.19\linewidth}p{0.18\linewidth}}
\toprule
 & AF  & BF & DDF & $\textnormal{DTF}_{GLP}$ \\
 
\midrule

\multicolumn{5}{c}{{\small \textbf{8Gaussian}}} \\
NLL &  6.92 \tiny{($\pm$ 0.06)} & 7.21 \tiny{($\pm$ 0.09)} & \textbf{6.42 \tiny{($\pm$ 0.03)}} &  6.5 \tiny{($\pm$ 0.03)}\\
TTC & 155.9 \tiny{($\pm$ 2.2)}  & 231.6 \tiny{($\pm$ 5.2)} & 119.8 \tiny{($\pm$ 0.8)} & \textbf{ 7.3 \tiny{($\pm$ 0.1)}}\\
TTG & 42.2 \tiny{($\pm$ 3.5)}  & 135.8 \tiny{($\pm$ 0.2)} & 79.7 \tiny{($\pm$ 0.8)} & NA\\

\multicolumn{5}{c}{{\small \textbf{COP-H}}} \\
$\textnormal{NLL}$ &  1.53 \tiny{($\pm$ 0.02)} & 1.47 \tiny{($\pm$ 0.06)} & 1.46 \tiny{($\pm$ 0.1)} & \textbf{1.33 \tiny{($\pm$ 0.02)}} \\
TTC  & 10.7 \tiny{($\pm$ 0.2)}  & 13.2 \tiny{($\pm$ 0.2)} & 58.1 \tiny{($\pm$ 1.0)} & \textbf{$\leq$0.1 \tiny{($\pm$ 0.0)}}\\
TTG & 14.6 \tiny{ ($\pm$ 0.4)}  & 20.9 \tiny{($\pm$0.3)} & 48.8 \tiny{($\pm$0.9)} & NA\\

\multicolumn{5}{c}{{\small \textbf{COP-M}}} \\
$\textnormal{NLL}$ &  1.76 \tiny{($\pm$ 0.1)} &  1.62 \tiny{($\pm$ 0.05)}& 1.51 \tiny{($\pm$ 0.16)} &  \textbf{1.4 \tiny{($\pm$ 0.02)}} \\
TTC  & 10.6 \tiny{($\pm$ 0.02)}  & 13.3 \tiny{($\pm$ 0.06)} & 77.9 \tiny{($\pm$ 1.8)} & \textbf{$\leq$0.1\tiny{($\pm$ 0.0)}}\\
TTG & 14.6 \tiny{($\pm$ 0.43)}  & 20.8 \tiny{($\pm$ 0.8)} & 66.9 \tiny{($\pm$0.5)} & NA\\

\multicolumn{5}{c}{{\small \textbf{COP-W}}} \\
$\textnormal{NLL}$ &  2.42 \tiny{($\pm$ 0.02)} & 2.35 \tiny{($\pm$ 0.03)} & 2.29 \tiny{($\pm$ 0.07)} & \textbf{2.22 \tiny{($\pm$ 0.02)}}\\
TTC  & 10.5 \tiny{($\pm$ 0.01)}  & 13.2 \tiny{($\pm$ 0.1)} & 77.3 \tiny{($\pm$ 1.7)} & \textbf{$\leq$0.1 \tiny{($\pm$ 0.0)}}\\
TTG & 13.9\tiny{($\pm$ 0.2)} & 19.2 \tiny{($\pm$ 0.2)} & 67.5\tiny{($\pm$  0.1)} & NA\\

\bottomrule
\end{tabular}
\end{sc}
\end{small}
\end{center}
\vspace{-2em}
\end{table}

\begin{table*}[ht]
\caption{Experiment results for real datasets. The lower the number the better. TTC, and TTG are the training times in seconds on CPU and GPU respectively.}
\label{tab:real_exp_results}
\centering
\begin{tabular}{p{0.18\linewidth}p{0.12\linewidth}p{0.12\linewidth}p{0.12\linewidth}p{0.12\linewidth}p{0.12\linewidth}}
\hline
& AF & BF & DDF & $\textnormal{DTF}_{GLP}$ & $\textnormal{DTF}_{RND}$\\
\hline

{\small \textbf{Mushroom Dataset}} & \multicolumn{5}{c}{}\\
$\textnormal{NLL}$ & 24.87 \tiny{($\pm$ 2.28)} & 23.02 \tiny{($\pm$ 2.3)} & 19.18 \tiny{($\pm$ 3.48)} & 
\textbf{14.15 \tiny{($\pm$ 2.44)}} & 16.66 \tiny{($\pm$ 2.98)}\\

$\textnormal{TTC}$ &  29.3 \tiny{($\pm$ 2.0)} & 20.9 \tiny{($\pm$ 2.7)} & 175.8 \tiny{($\pm$ 1.9)} & 
9.9 \tiny{($\pm$ 0.2)} & \textbf{0.5 \tiny{($\pm$ 0.0)}}\\

$\textnormal{TTG}$ & 7.7 \tiny{ ($\pm$ 1.0)}  & \textbf{6.0 \tiny{ ($\pm$ 1.3)}} & 75.2 \tiny{ ($\pm$ 1.0)} & 
NA & NA\\
$\textnormal{Number of Parameters}$ & 337952  & 522720 & 3290452 
& 7604 \tiny{($\pm$ 578)} & 13544 \tiny{($\pm$ 2352)}\\

\hline
{\small \textbf{MNIST Dataset}} & \multicolumn{5}{c}{}\\
$\textnormal{NLL}$ & 206.014 \tiny{($\pm$ 0.32)} &  205.94 \tiny{($\pm$ 0.26)} & \textbf{144.78 \tiny{($\pm$ 10.52)}} 
& 177.75 \tiny{($\pm$ 0.56)} & 187.44 \tiny{($\pm$ 1.17)}\\

$\textnormal{TTC}$ & 12104.6 \tiny{($\pm$ 359.2)} & 3290.5 \tiny{($\pm$ 13.3)} & 2909.3 \tiny{($\pm$ 45.4)} 
& 5213.7 \tiny{($\pm$ 204.9)} & \textbf{105.6 \tiny{($\pm$ 0.1)}}\\

$\textnormal{TTG}$ & \textbf{305.6 \tiny{ ($\pm$ 31.4)}}  & 308.7 \tiny{ ($\pm$ 4.1)} & 334.3 \tiny{ ($\pm$ 8.7)} & 
NA & NA\\

$\textnormal{Number of Parameters}$ & 44283456 & 42591578 & 2679408 & 
89583 \tiny{($\pm$ 2846)} & 19549 \tiny{($\pm$ 2061)}\\
\hline
{\small \textbf{Genetic Dataset}} & \multicolumn{5}{c}{}\\
$\textnormal{NLL}$ & 490.55 \tiny{($\pm$ 0.69)} & 471.54 \tiny{($\pm$ 1.87)} & 446.86 \tiny{($\pm$ 8.64)} 
& \textbf{437.19 \tiny{($\pm$ 1.02)}} & 470.9 \tiny{($\pm$ 6.1)}\\
$\textnormal{TTC}$ & 834.0\tiny{($\pm$ 2.1)} & 
251.6 \tiny{($\pm$ 0.5)} & 
209.4 \tiny{($\pm$ 0.6)} 
& 411.5 \tiny{($\pm$ 2.3)} & \textbf{5.9 \tiny{($\pm$ 0.0)}}\\

$\textnormal{TTG}$ & \textbf{23.8 \tiny{ ($\pm$ 0.9)}}  & 38.0 \tiny{ ($\pm$ 5.4)} &  29.5 \tiny{($\pm$1.1)} & 
NA & NA\\

$\textnormal{Number of Parameters}$ & 46686780 & 4484964 & 1205630 
& 9014 \tiny{($\pm$ 454)} & 14174 \tiny{($\pm$ 568)}\\
\hline

\end{tabular}
\vspace{-1em}
\end{table*}

\subsection{Real Data}
We follow the same settings used in the synthetic data experiments, and proceed to investigate the performance of our model on real datasets including some that are high dimensional (MNIST and Genetic).

\paragraph{Mushroom Dataset}
This dataset includes different attributes of mushrooms\footnote{\url{https://archive.ics.uci.edu/ml/datasets/Mushroom}},and has $n = 8,124$, $d = 22$ and the maximum number of categories in any column is $k=12$. 

\paragraph{Vectorized binary MNIST Dataset}
We then investigate the performance of our algorithm for the the vectorized and binarized MNIST dataset \cite{deng2012mnist}, where ``vectorized'' means that we do not leverage the image structure of the dataset but merely treat the data as a 784-dimensional vector.

\paragraph{Genetic Dataset}
Given that our algorithm is a general purpose discrete flow algorithm, we also investigate the performance of DTF on a realistic high-dimensional discrete genetic dataset to emphasize the generality of our method.
Specifically, we use a dataset of $n = 2504$ individuals with $d = 805$ sampled single nucleotide polymorphism (SNPs) that was made available by \cite{10.1371/journal.pgen.1009303}.
Where the SNPs were encoded as binary data ($k=2$).

\paragraph{Real Data Results}

We present results for the real-world datasets averaged across 3 folds for MNIST and Genetic data and across 5 folds for the mushroom data in Table~\ref{tab:real_exp_results}.
We notice some interesting results, our $\textnormal{DTF}_{RND}$ model always have the fastest training time, and its NLL results are either very comparable to the other model and sometimes even gives better results than the other models (it always exceeds the performance of AF and BF but sometimes falls behind DDF). This suggests that even with a very simple splitting criteria such as random splitting, our two-pass algorithm is efficient enough to learn interesting aspects about the data. Our $\textnormal{DTF}_{GLP}$ usually gives the best NLL results (with the exception of MNIST), but tends to take more training time, as finding the best splits dominate the training time in this case. This was not an issue with lower dimensional datasets (like the synthetic data), but becomes more prominent when the dimension of the data increases (as in MNIST and Genetic), especially when comparing our model's CPU training times to the other model's GPU training times. 
The only case where our model doesn't seem in par with the best NLL result is for MNIST. While our model gives better results than AF and BF, it fails to out-preform DDF. This suggests that our model is better suited for tabular data, unlike MNIST which presume a more dimension-by-dimension dependent vectorized dataset. Our models also has a smaller number of parameters, and they're not a fixed set of parameters as with the other models, our parameters can grow depending on the data and what permutations are deemed necessary, which makes it more flexible than a typical flow model.

\section{Discussion and Conclusion}

We presented a novel framework for discrete normalizing flows called DTF that relies on tree-structured permutations (TSPs), which we define and develop. We prove that our learning algorithm finds the optimal node permutations given a decision tree structure.
Empirically, our model results demonstrate that DTF outperforms prior approaches in terms of NLL while being substantially faster for most experiments.

We note that our model does have some limitations. First, while we do guarantee permutations optimality given the tree structure, the tree structure itself is not guaranteed to be optimal since the tree is grown greedily as with most decision tree algorithms. Moreover, our splitting is required to be axis aligned in our current framework and thus cannot perform complex split operations. We believe this could be addressed by constructing coupling-like TSPs, similar to coupling layers in continuous normalizing flows. Specifically, the split functions could be arbitrarily complex functions of half of the features while the node permutations only permute the other half of the features. This would allow deep models to be used for the split functions.
This idea of coupling-like TSP can also address another issue of handling very high dimensional data (e.g., $d>1000$) because our non random split algorithms are naively $O(d^2)$.
This can also be handled in our current approach by drawing from techniques used in decision tree algorithms.
Second, similar to other methods, large values of $k$ may be challenging for the split criteria.
For random splitting, this should be straightforward by selecting a random set of categorical values to go to the left (possibly based on counts for each category).
For GLP, we implemented the simplest case of choosing only one value to go left.
However, we could greedily select the next best feature to go to the left but this would increase the computational complexity by the max number of values that go to the left.
Third, we have focused on tabular categorical discrete data (even MNIST is vectorized and treated as a 784-dimensional vector). Extending to discrete image data  or text-based data is non-trivial.
Prior gradient-based works such as AF and DDF were able to leverage CNNs or NLP-based transformer architectures.
Thus, we do not expect our current tabular-focused DTF approach to be competitive on these tasks.
As we mentioned earlier, it may be possible to extend our framework to use arbitrary NNs for the split function if we partition the features into fixed and free sets (as is done in standard flow coupling layers).
Or, for NLP-based applications, our framework may be extended to have an autoregressive-like structure to our TSPs.
Ultimately, we hope that our paper lays the groundwork for developing practical and effective discrete flows using decision tree algorithms.

\section*{Acknowledgements}
All authors acknowledge support from the Army Research Lab through contract number W911NF-2020-221.

\bibliography{ref}
\bibliographystyle{icml2022}

\newpage
\appendix
\onecolumn

\onecolumn
\section{Overview}
We have organized our appendix as follows:
\begin{itemize}
    \item \textbf{Appendix~\ref{GLP_example}} provides a visual explanation of the GLP splitting criteria.
    \item \textbf{Appendix~\ref{twoPass}} provides a visual example of our two pass algorithm.
    \item \textbf{Appendix~\ref{universal_proof}} describes the expressivity of DTF models by proving they are universal permutation models.
    \item \textbf{Appendix~\ref{inv_proofs}} ultimately proves invertibility of TSPs by including a proof of \cref{lem:recoverability-of-tree-path}, proof of \cref{thm:invertibility-constraint}, and a necessary and sufficient condition for invertibility with its proof.
    \item \textbf{Appendix~\ref{rank_proofs}} proves the optimality of rank consistent of TSPs with the proof of \cref{thm:optimality-of-rank-consistent-tsps}.
    \item \textbf{Appendix~\ref{tree_eq}} proves that  \cref{def:tree-equivalence} is an equivalence relation.
    \item \textbf{Appendix~\ref{code}} provides pseudo code for all algorithms.
    \item \textbf{Appendix~\ref{alg_proofs}} includes proofs of the algorithms by proving \autoref{thm:algorithm-optimal} with multiple lemmas and definitions.
    \item \textbf{Appendix~\ref{exp_details}} provides more experimental details and results (i.e. modification and details of codes and architecture, extra details of dataset configuration, and more table results and figures).
\end{itemize}
We give a reference table of the notations used throughout the paper and appendix below on \autoref{table:notation}:
\clearpage

\begin{table}[!ht]
    \caption{Notation}
    \label{table:notation}
    \begin{tabularx}{\textwidth}{cX}
        \toprule
        \multicolumn{2}{l}{{\underline{indices:}}}\\
        $\xvec$ & input data \\
        $d$ & number of dimension/features \\
        $n$ & number of samples \\
        $k$ & number of categories \\
        $s$ & split feature \\
        $j$ & denotes feature that usually excludes the split feature (e.g. $j \neq s$) \\
        $v$ & split value \\
        $\tree$ & tree \\
        $\node$ & node \\
        $\pi$ & independent permutation usually on a specific node \\
        $\sigma_{\tree}$ & a tree-structured permutation based on tree $\tree$ \\
        $\treepath$ & tree traversal path of a specific input $\xvec$ \\
        $\domain$ & categorical domain (e.g. $\domain(\node)$ would denote all categorical domain of the specific node) \\
        $Q_{\zvec}$ & independent base distribution \\
        $\Sigma$ & an exhaustive set of equivalent trees (e.g. $\Sigma(\tree)$ would be set of all TSPs that are equivalent to $\tree$) \\
        $c(j,a)$ & count value of feature $j$ for categorical value $a$ usually for a specific node (can be represented by a matrix) \\
        $\rcset$ & denotes rank consistency for count matrices w.r.t. to a node domain (e.g. $\rcset(\domain(\node))$)\\
        \\
        \multicolumn{2}{l}{{\underline{subscript:}}}\\
        $[\cdot]$ & the level on the tree usually a subscript for $\node$ (e.g. $\node_{[n]}$ would represent the nodes on level $n$ of the tree) \\
        $(\cdot)$ & the max depth of tree usually a subscript for $\tree$ (e.g. $\tree_{(n)}$ would represent a tree with max depth $n$) \\
        $\lleft$ & represents the left child (e.g. $\node_{\lleft}$ would denote the left child node of that specific node) \\
        $\rright$ & represents the right child (e.g. $\node_{\rright}$ would denote the right child node of that specific node) \\
        $\anc$ & denotes all the ancestral nodes on the current node and is usually a subscript for $\pi$ (see \autoref{eqn:anc_def}, \autoref{eqn:inverse_anc_def}, \autoref{eqn:general_new_ancestor_perm}) \\
        $\textnormal{leaf}$ & the leaf nodes (e.g. $\tree_{\textnormal{leaf}}$ would be the leaf nodes of this specific tree) \\
        \\
        \multicolumn{2}{l}{{\underline{superscript:}}}\\
        new & represents the new tree constructed after Alg~\ref{alg:construct-equivalent-tree} (e.g. $\pi^{\new}$ represents new tree permutation and $c^{\new}$ represents new counts)\\ 
        init & represents unsorted local counts only used with the local counts notation (e.g. $c^{\textnormal{init}}$)\\
        \\
        \multicolumn{2}{l}{{\underline{others:}}}\\
        $\pi[\cdot]$ & applies permutation to the values of vectors (i.e. $\pi[c]\triangleq[\pi_0[c(0)], \pi_1[c(1)], \cdots, \pi_d[c(d)]]$)\\ 
        $\tilde{\cdot}$ & represents ``local'' auxiliary objects (e.g. $\tilde{\pi}$ represents local permutation and $\tilde{c}$ represents local counts at each node computed in Alg~\ref{alg:construct-tree} and \ref{alg:learn-local-permutations}) \\
        $\equivtree$ & represents equivalence relation between two sets in this case we use it for trees (e.g. $\sigma_{\tree}^A \equivtree \sigma_{\tree}^B$) \\
        \bottomrule
    \end{tabularx}
\end{table}

\section{A visual explanation of the GLP splitting criteria}
\label{GLP_example}
\begin{figure}[H]
\centering
\includegraphics[width=\linewidth]{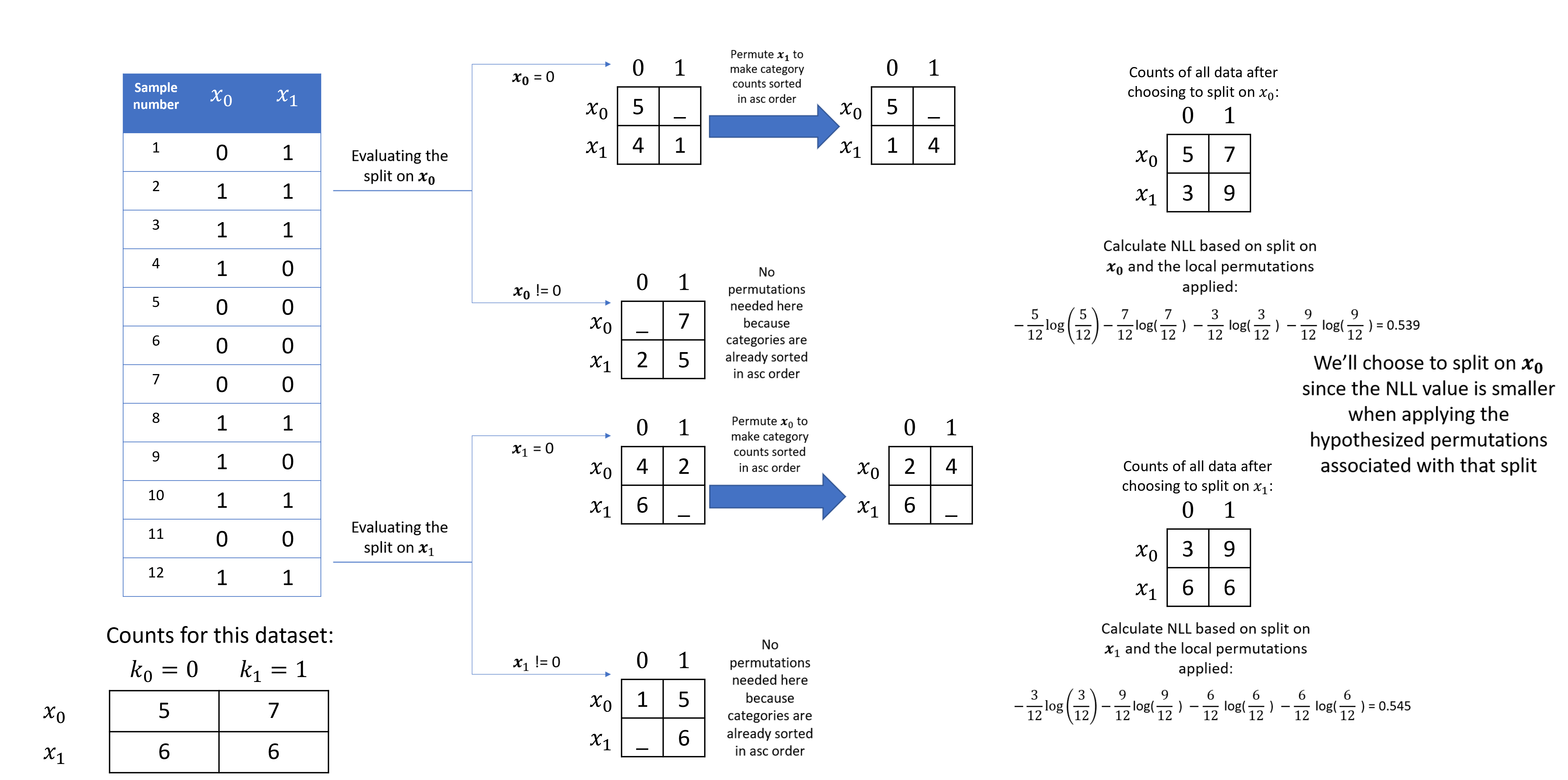}
\caption{An example for how the GLP criteria is evaluated.
}
\label{fig:glp_example}
\end{figure}

\newpage

\section{A visual example of the two pass algorithm}
\label{twoPass}

\label{example}

\begin{figure}[H]
\centering
\includegraphics[width=0.6\linewidth]{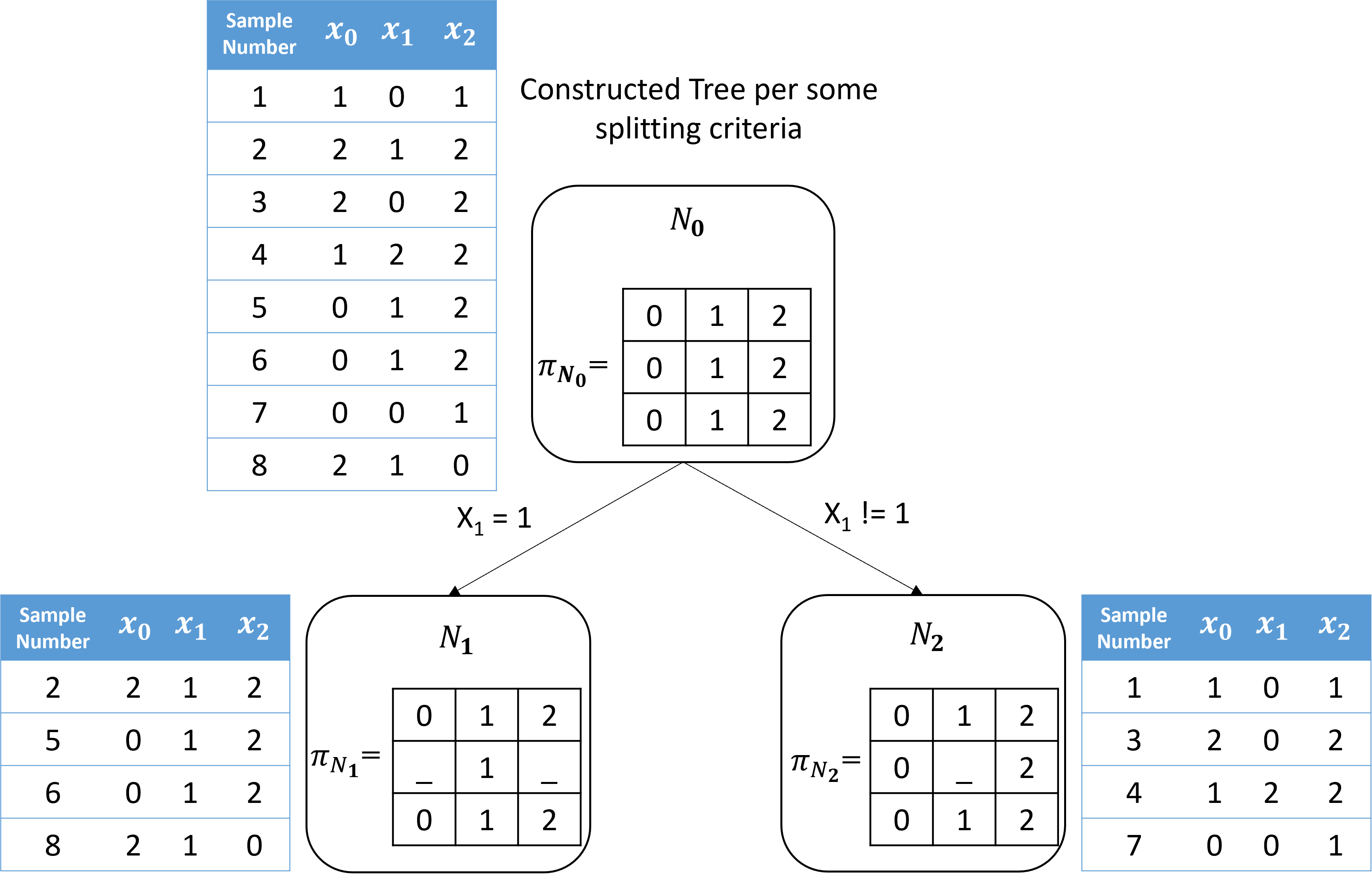}
\caption{An example of the tree constructed using some splitting criteria of choice, the permutation matrices are just the identity since we only learn the split information here.
}
\label{fig:pass1}
\end{figure}

\begin{figure}[H]
\centering
\includegraphics[width=0.6\linewidth]{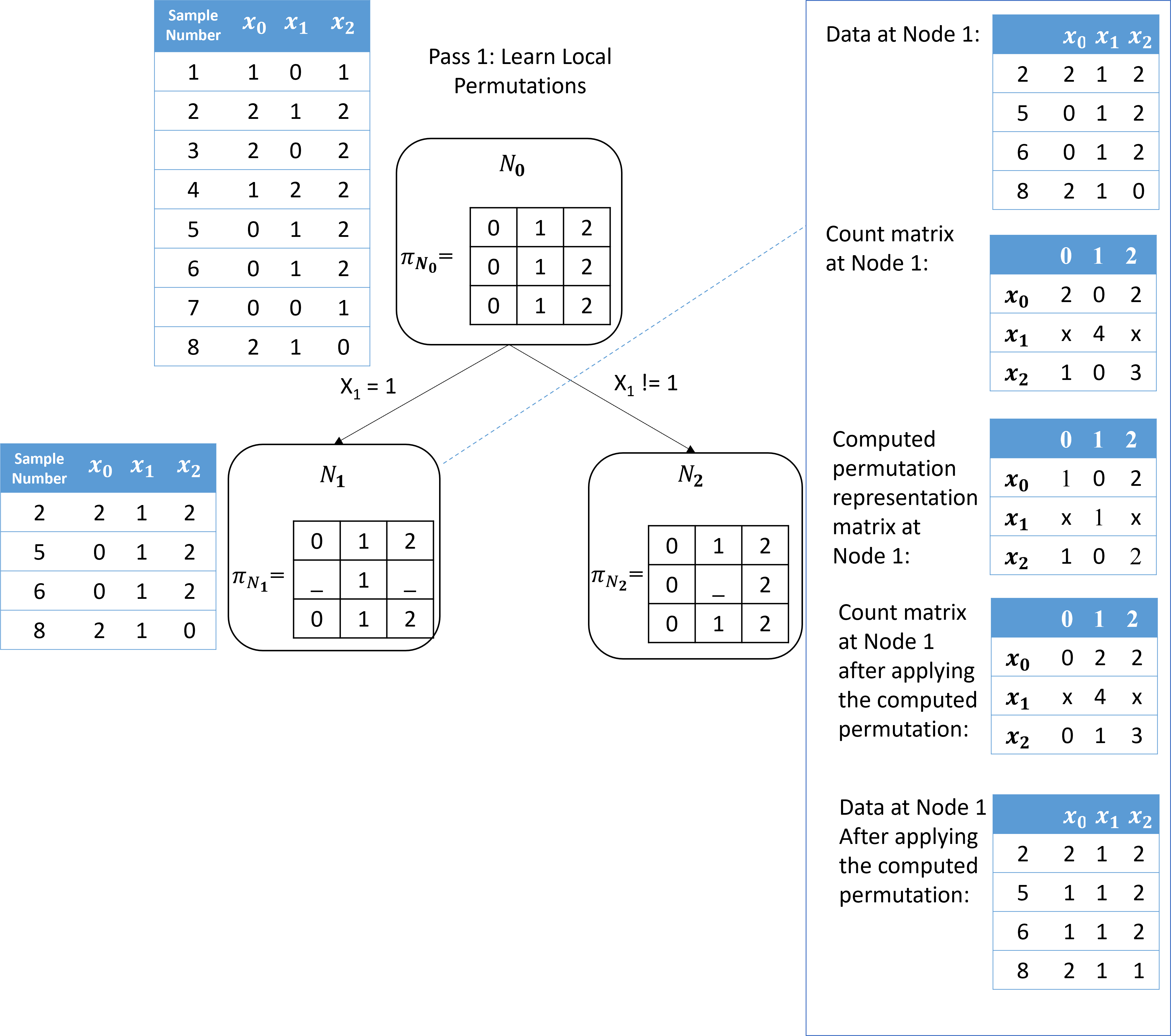}
\caption{Following tree construction, we start pass 1 at the bottom of the tree, the figure shows the steps done at node 1 in full details until the permutation matrix is calculated.
}
\label{fig:pass2a}
\end{figure}

\begin{figure}[H]
\centering
\includegraphics[width=0.6\linewidth]{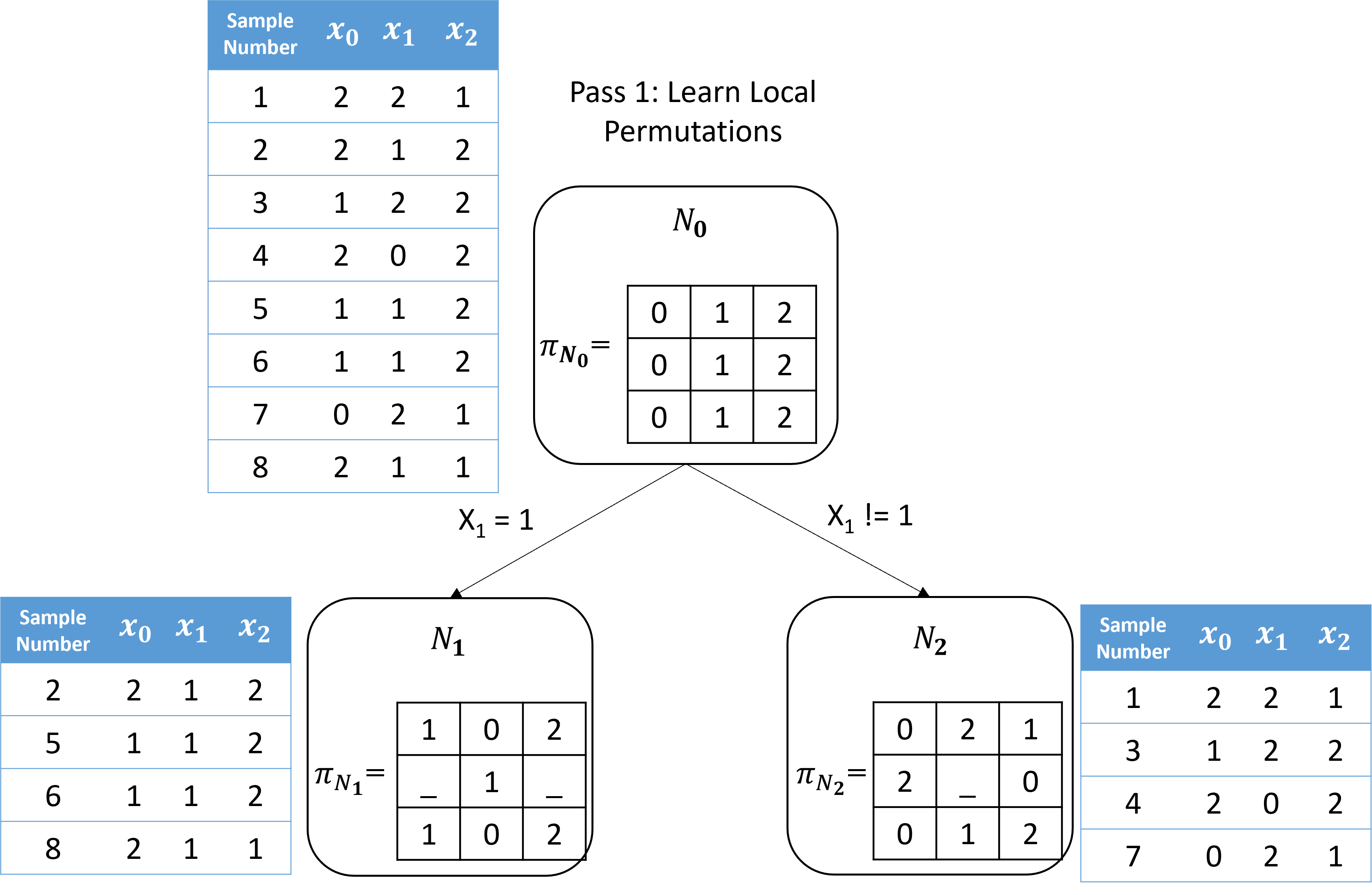}
\caption{The same logic is applied to Node 2, and we propagate the new permuted data up to node 0.
}
\label{fig:pass2b}
\end{figure}

\begin{figure}[H]
\centering
\includegraphics[width=0.6\linewidth]{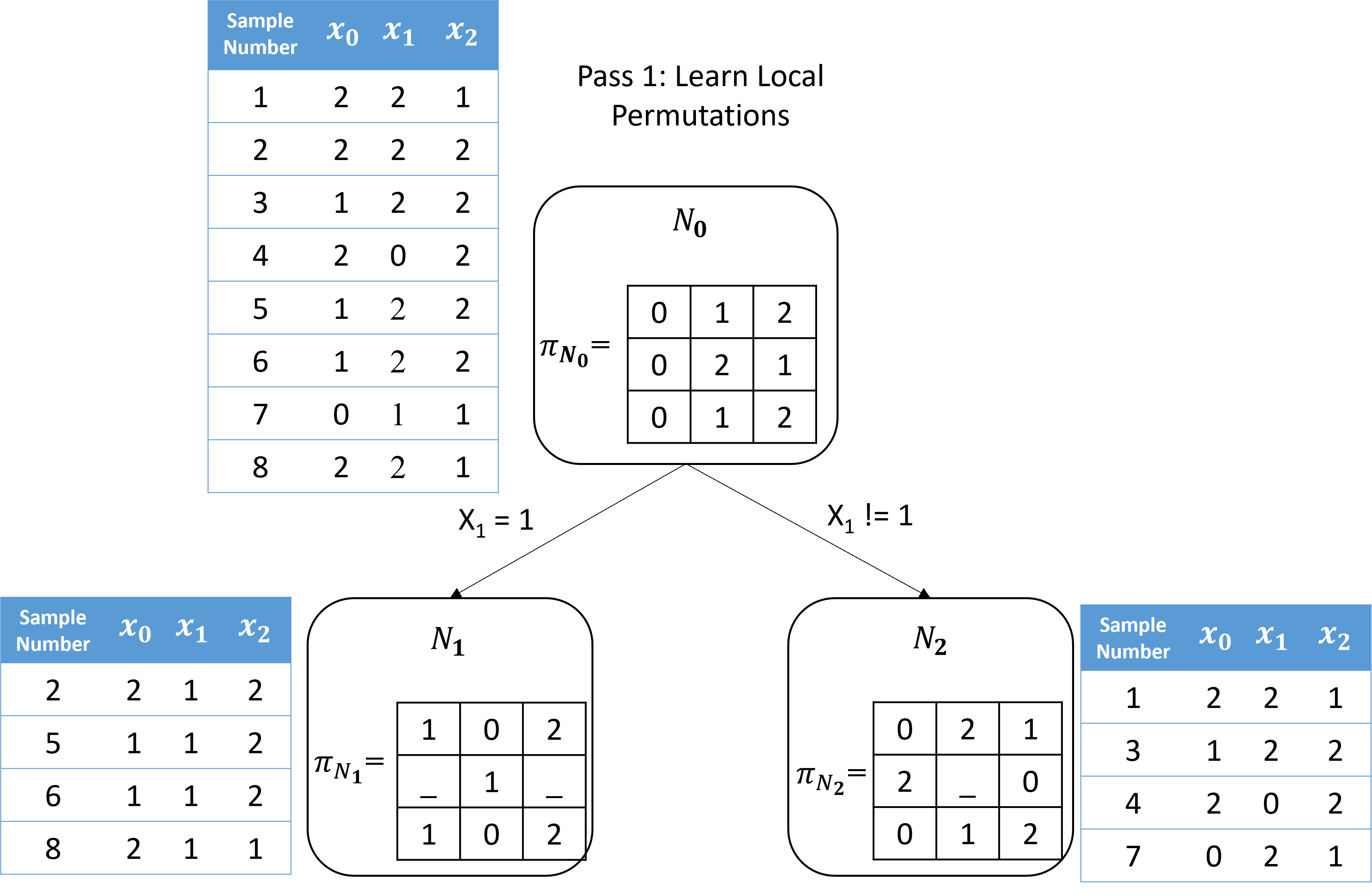}
\caption{At node 0, we use the same logic to compute the permutation matrix at that node -- notice that only the split feature can have a permutation since the others will be already in order. 
}
\label{fig:pass2c}
\end{figure}
\newpage
\begin{figure}[H]
\centering
\includegraphics[width=0.6\linewidth]{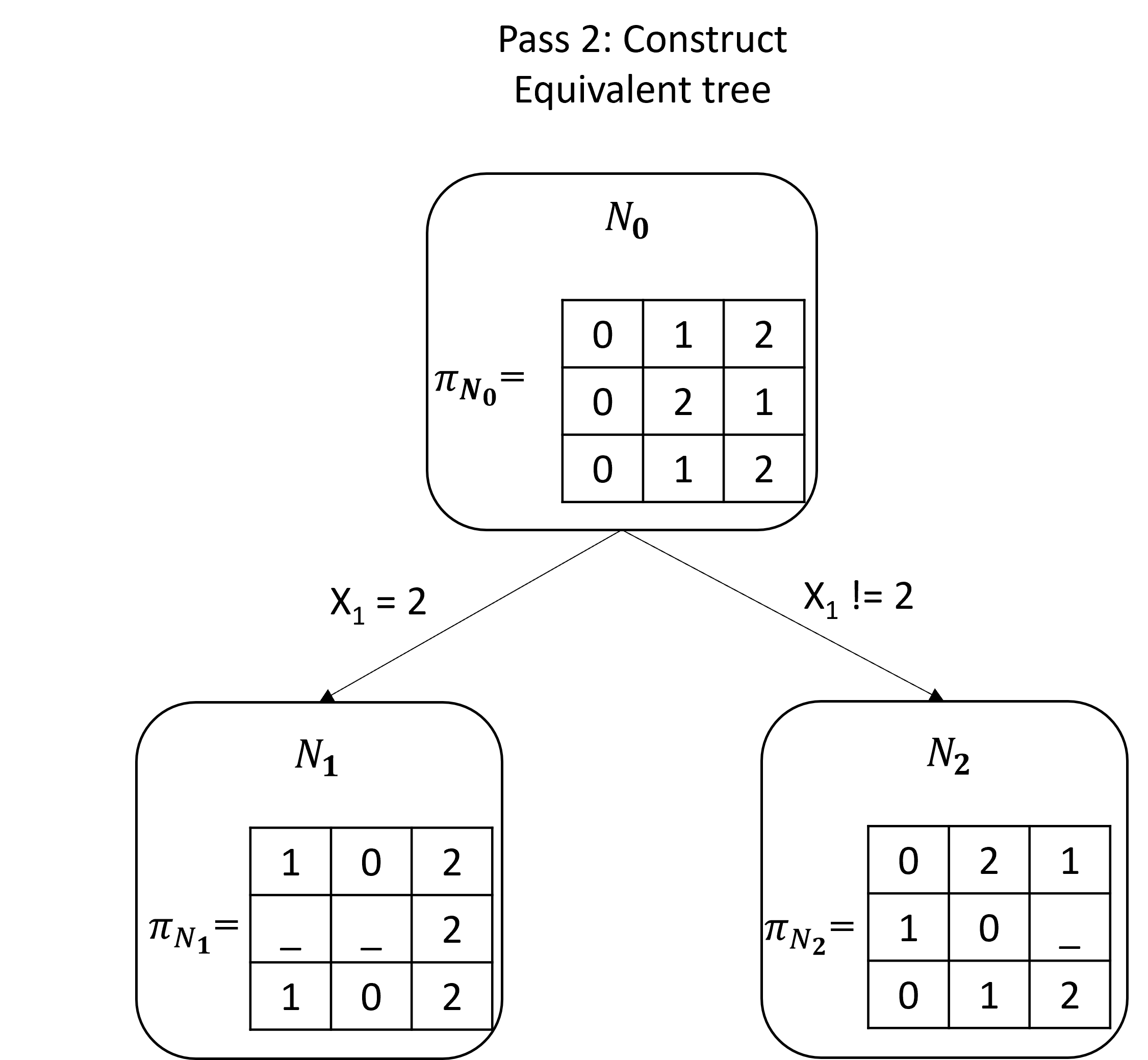}
\caption{In pass 2, we traverse the tree from top to bottom to fix it, "fixing it" includes applying the ancestor's node permutations to the split values, and to the permutation matrices as well of the children nodes. 
}
\label{fig:pass2c}
\end{figure}

\section{Proof of Expressivity of DTF}
\label{universal_proof}
We will prove that our DTF (i.e. a sequential combination of TSPs) can theoretically produce a universal permutation (i.e. an $n$ element permutation is universal if it contains all permutations of length $n$) on its domain. To prove this universality we must demonstrate that our DTF is a generating set of the symmetric group $S_n$ where $S_n$ is an exhaustive collection of permutation subsets of an $n$ elements set. $n$ is the total number of elements in our sample space (i.e. $n=144$ for a 2 feature 12 category domain, and $S_n$ would have a total of $n!$ subsets).

Our proof will rely on the theorem below:
\begin{theorem}[from \cite{pdf:gen_set}]
    \label{theorem:generatingset_adj}
    For $n\geq 2$, $S_n$ is generated by the $n-1$ transpositions/bijective permutations in this cyclic notation form $(1\; 2), (2\; 3),...,(n-1\; n)$.
\end{theorem}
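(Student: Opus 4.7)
The plan is to prove this classical group-theoretic claim by two successive reductions. First, I would show that $S_n$ is generated by the set of \emph{all} transpositions; second, I would show that every transposition lies in the subgroup $H \leq S_n$ generated by the adjacent transpositions $(1\;2), (2\;3), \ldots, (n{-}1\;n)$. Composing these two facts gives $H = S_n$, which is exactly the claim.

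For the first reduction, I would invoke the standard disjoint cycle decomposition: every $\sigma \in S_n$ factors into disjoint cycles, and each cycle admits the factorization
\[ (a_1\;a_2\;\cdots\;a_k) \;=\; (a_1\;a_2)(a_2\;a_3)\cdots(a_{k-1}\;a_k), \]
so $\sigma$ is a product of transpositions. Hence the collection of transpositions generates $S_n$, and it suffices to show $H$ contains every transposition.

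For the second reduction, given $i<j$ I would induct on the gap $d = j - i$. The base case $d = 1$ is immediate since $(i\;i{+}1)$ is itself a generator of $H$. For the inductive step, assuming every transposition of gap strictly less than $d$ lies in $H$, I would use the conjugation identity
\[ (i\;j) \;=\; (j{-}1\;j)\,(i\;j{-}1)\,(j{-}1\;j), \]
which expresses $(i\;j)$ as a product of an adjacent transposition and a transposition of gap $d-1$; both factors lie in $H$ by the inductive hypothesis, so $(i\;j) \in H$ as well.

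The only step that requires any verification at all is the conjugation identity, and it follows from the general fact that conjugating a transposition $(a\;b)$ by any permutation $\rho$ yields $(\rho(a)\;\rho(b))$: since $(j{-}1\;j)$ fixes $i$ and sends $j{-}1$ to $j$, conjugating $(i\;j{-}1)$ by $(j{-}1\;j)$ produces $(i\;j)$. There is no genuine obstacle here; this is a textbook result and the proof is self-contained in a few lines. The mild subtlety one must be careful about is bookkeeping on the induction (one should induct on the gap $d$, not on $j$ or $n$), so that the smaller-gap transposition $(i\;j{-}1)$ is available from the hypothesis.
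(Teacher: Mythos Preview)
Your proof is correct and is the standard textbook argument: reduce to arbitrary transpositions via cycle decomposition, then reduce arbitrary transpositions to adjacent ones by induction on the gap using the conjugation identity $(i\;j) = (j{-}1\;j)(i\;j{-}1)(j{-}1\;j)$.

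There is nothing to compare here, because the paper does not supply its own proof of this statement. The theorem is quoted verbatim from an external reference (\verb|\cite{pdf:gen_set}|) and used as a black box in the expressivity argument for DTFs; the paper's contribution in that section is to exhibit a snake-like Hamiltonian path through the configuration space and show that a single TSP can realize each adjacent swap along that path, after which this cited theorem finishes the job. So your write-up is simply filling in a classical fact the authors chose to cite rather than reprove.
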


\begin{definition}
Given a configuration $\xvec$ and a new value for the $j$-th feature denoted $\tilde{x}_j \neq x_j$, a \emph{single feature swap} is a permutation that permutes $\xvec = x_1 x_2 \cdots x_d$ and $\tilde{\xvec} = x_1 x_2 \cdots \tilde{x}_j \cdots x_d$, $\forall j$ $\exists x_j\in \nodedomain$, while keeping all other configurations the same, which could be denoted in cyclic notation as $(\xvec, \tilde{\xvec})$.
\end{definition}

We will show that a combination of DTFs can express each adjacent transposition corresponding to \autoref{theorem:generatingset_adj}. Let us first illustrate that a single TSP can implement a single feature swap (e.g. $({15342}_7\; {15344}_7)$ where the element has a base of 7 and only a single category on the fifth feature permutes $2 \leftrightarrow 4$ and the other features and categories do not change). 

Suppose there is a TSP with a max depth equal to $d$ where $d$ is the total number of features. Let us focus on the left most non-leaf nodes on each level of the tree where each left node fixes a single category on the non-permuting feature of our desired permuting configuration and let all node permutations by default be an identity. Next, if we store the permuting categories on the second to the leftmost leaf node, we can apply the desired permutation on this leaf node. Thus, obtaining a single feature swap. An example of this can be seen below.

\begin{figure}[H]   
    \centering
    \includegraphics[width=0.6\linewidth]{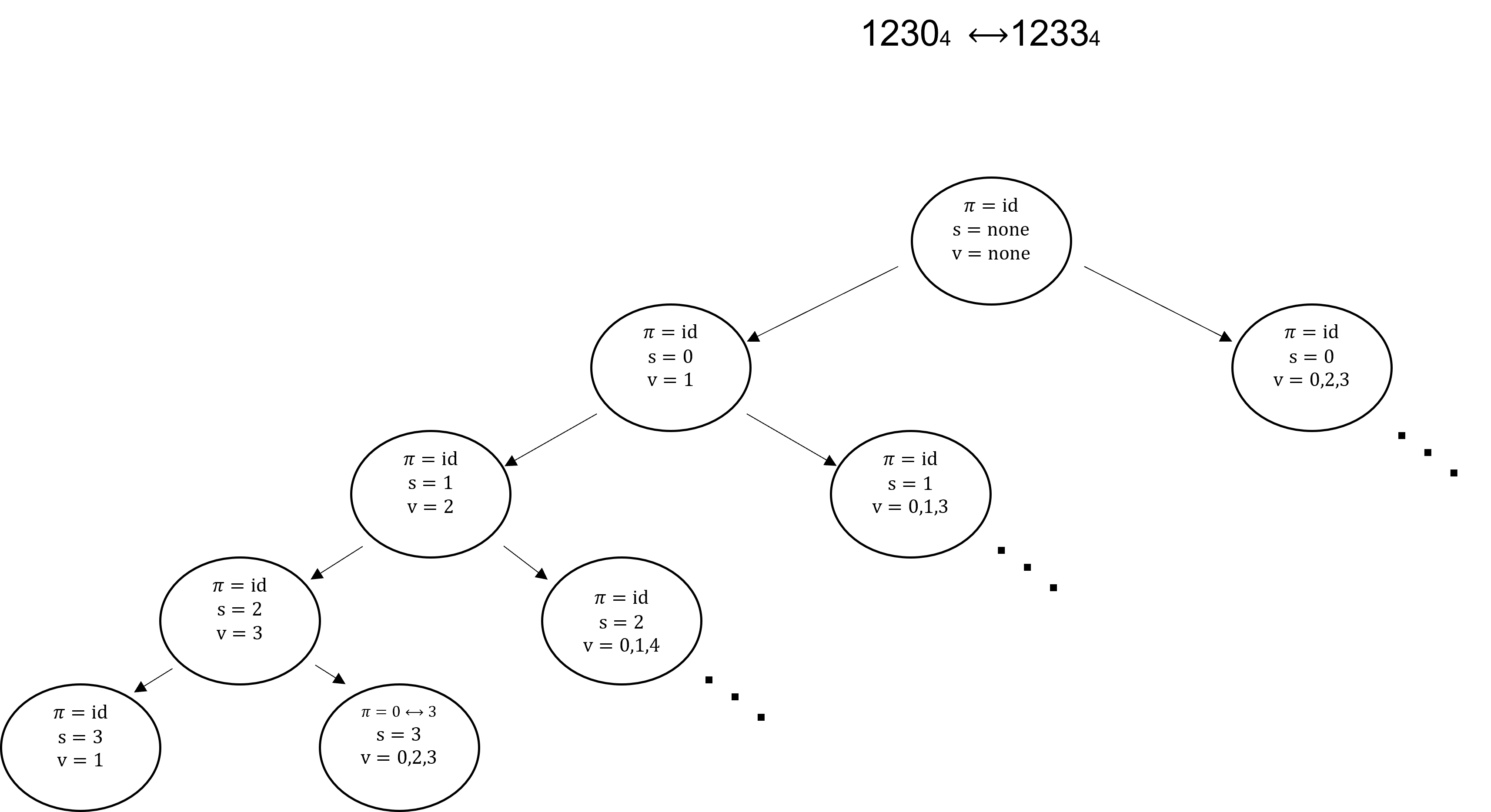}
    \caption{An example of a single feature swap.}
    \label{fig:ex_of_SFPC_perm}
\end{figure}
As seen in \autoref{fig:ex_of_SFPC_perm}, $1230_4 \leftrightarrow 1233_4$ has 4 features, therefore we have a TSP of max depth 5. The non-permuting features are the 0, 1, 2 features, so we fix the categorical values of these features on the leftmost none-leaf nodes. We want to permute on feature 3, thus we store these categorical on the 2nd leaf node and apply our single feature swap on this node (i.e. $0\leftrightarrow 3$).

Above, we demonstrated that we can permute any pair of categories on a single feature and restrict the others. We will use this condition to show that a sequential combination of single feature swaps can traverse the whole permutation space by showing that there is a snake-like path  that can fill the permutation space.

\begin{lemma}
\label{lem:snake-like_path}
There exists a snake-like path that reaches all discrete configurations where adjacent configurations on the path differ in only one feature value.
\end{lemma}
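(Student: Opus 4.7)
The plan is to prove the lemma by induction on the number of features $d$, using a reflected construction analogous to the reflected Gray code. For $d$ features each taking values in $\{0, 1, \dots, k-1\}$, the set of configurations is $\{0, 1, \dots, k-1\}^d$, and we want to exhibit a Hamiltonian path on the graph whose vertices are configurations and whose edges connect two configurations that differ in exactly one coordinate.

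For the base case $d=1$, the configurations are simply the categories $0, 1, \dots, k-1$, and the path $0 \to 1 \to \cdots \to (k-1)$ visits every configuration exactly once, with consecutive elements differing (trivially) in the one and only feature. For the inductive step, assume such a snake-like path $P_{d-1} = (\yvec_0, \yvec_1, \dots, \yvec_{k^{d-1}-1})$ exists on $d-1$ features. I would construct a path $P_d$ on $d$ features by stacking $k$ shifted copies of $P_{d-1}$ and alternating direction: within the $a$-th copy (for $a = 0, 1, \dots, k-1$), fix the new $d$-th feature to the value $a$, and traverse $P_{d-1}$ forward if $a$ is even and backward if $a$ is odd. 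Concretely, define $\xvec_{a \cdot k^{d-1} + i} \triangleq (\yvec_{i}, a)$ for even $a$ and $\xvec_{a \cdot k^{d-1} + i} \triangleq (\yvec_{k^{d-1}-1-i}, a)$ for odd $a$.

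I then need to verify the adjacency condition in two cases. Within a single copy, two consecutive configurations $\xvec_m, \xvec_{m+1}$ share the same value of the $d$-th feature and differ in one of the first $d-1$ features by the inductive hypothesis applied to $P_{d-1}$. At the boundary between copy $a$ and copy $a+1$ (i.e., when $m+1 = (a+1) k^{d-1}$), the reflected construction ensures that the last configuration of copy $a$ and the first configuration of copy $a+1$ share the same $(d-1)$-dimensional prefix --- namely $\yvec_{k^{d-1}-1}$ if $a$ is even and $\yvec_0$ if $a$ is odd --- so the two consecutive configurations differ only in the $d$-th feature (which changes from $a$ to $a+1$). Finally, the total length of $P_d$ is $k \cdot k^{d-1} = k^d$ and each configuration is visited exactly once, so $P_d$ is a Hamiltonian snake-like path on the $d$-feature configuration space.

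The main obstacle is purely bookkeeping: one must be careful that the endpoints of the forward and reversed copies of $P_{d-1}$ really do match up so that the only coordinate that changes at each copy-boundary is the new $d$-th coordinate. This is automatic from the reflected construction, but is the only nontrivial step in an otherwise routine inductive argument. No further ingredients beyond elementary induction are needed, and the resulting path is exactly the sequence of adjacent transpositions that, together with the single-feature-swap capability of a TSP established above, generates the full symmetric group via \autoref{theorem:generatingset_adj}.
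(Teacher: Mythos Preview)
Your proposal is correct and takes essentially the same approach as the paper: both argue by induction on the number of features using the reflected Gray-code construction, stacking copies of the lower-dimensional path with the new coordinate fixed and alternating the traversal direction so that copy boundaries change only the new coordinate. If anything, your version is slightly more careful than the paper's in tracking the endpoints of $P_{d-1}$ and verifying the adjacency condition at the boundaries between copies.
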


\begin{proof}
We will prove \cref{lem:snake-like_path} by induction.
   We will prove that a n-dimensional discrete space can be reduced to have the same cardinality as a single dimensional discrete space (i.e. $\mathbb{Z}_+^n \rightarrow \mathbb{Z}$). Thus, we show that there is a full adjacent single feature varying path that can traverse the whole n-dimensional space.

\paragraph{Base case $(n=1)$}
    Let the space have a base number of $b+1$ where $b\geq 0$. Obviously, We can use a traversal path of $0\leftrightarrow1\leftrightarrow2\leftrightarrow\cdots\leftrightarrow b$ for the single dimension, thus proving there is a 1-D discrete path.

\paragraph{Induction step$(n=k\Rightarrow n=k+1)$}
    Let us assume for up to a $k$-dimensional permutation space with a base number of $b+1$, the $k$-dimensional discrete space can be traversed with a 1-D discrete path. Now, let us look at the $k+1$ dimensional discrete space. We know that this traversal path is valid $0\underbrace{0\cdots0}_k \leftrightarrow \cdots \leftrightarrow 0\underbrace{b\cdots b}_k$ with our assumption that the $k$-dimensional space can be traversed with a 1-D discrete path. Rather than traversing the path $0\underbrace{b\cdots b}_k \leftrightarrow \cdots \leftrightarrow 1\underbrace{0\cdots 0}_k$, we traverse the path $0\underbrace{b\cdots b}_k \leftrightarrow 1\underbrace{b\cdots b}_k$. Again, we know that we can traverse the path $1\underbrace{b\cdots b}_k \leftrightarrow \cdots \leftrightarrow 1\underbrace{0\cdots 0}_k$, and thus we can traverse the path $1\underbrace{0\cdots 0}_k \leftrightarrow 2\underbrace{0\cdots 0}_k$. A repetition of this traversal process shows us that the $k+1$ discrete permutation space has a snake-like 1-D path traversal (i.e. $\underbrace{0\cdots 0}_{k+1} \leftrightarrow \cdots \leftrightarrow \underbrace{b\cdots b}_{k+1}$). Note, this is just one path of many that can be taken to traverse the space.
    
    Therefore, there exists a snake-like path that traverses the whole discrete permutation space. 
\end{proof}

Now by using the snake-like path from \autoref{lem:snake-like_path} we can put all the configurations into an 1D sequence of categorical values (which has $k^d$ unique values).
And from above, we know that a single TSP can swap any two of these adjacent categorical values.
Thus, these TSPs that swap any two adjacent values is a generating set for all possible permutations of the $k^d$ configurations by \autoref{theorem:generatingset_adj}.
Therefore, a composition of these TSPs can express any possible permutation (albeit possibly an exponentially large number of TSPs) and thus our TSPs with independent node permutations do not hinder the expressivity of DTFs.

\section{Proofs for invertibility of TSPs}
\label{inv_proofs}
As a helpful idea, we first define the range of a node to simplify the proofs in certain cases.

\begin{definition}[Range of a Node]
We define the \emph{range of a node}, denoted $\mathcal{R}(\node)$, as the image of $\mathcal{D}(\node)$ under $\pi_{\node}$, i.e., $\mathcal{R}(\node) \triangleq \{\pi_{\node}(\xvec) : \xvec \in \mathcal{D}(\node) \}$.
\end{definition}

\subsection{Proof of \cref{lem:recoverability-of-tree-path}}

\RecoverabilityOfTreePath*

\begin{proof}

First, note that because $\pi$ is a permutation (i.e., one-to-one mapping) and $\pi_{\node}(x) = x, \forall x \not\in \nodedomain$ (ie. configurations outside the domain will remain unchanged), then $\forall \xvec \in \mathcal{D}(\node), \pi_{\node}(x) \in \mathcal{D}(\node)$, i.e., all node permutations $\pi_{\node}$ do not permute configurations from in the domain to outside the domain.

Without loss of generality, we consider trees where all leaf nodes are at the max depth of $M$.

We will denote the tree traversal path of an input $\xvec$ up to tree level $i$ to be $\treepath_{(i)}(\xvec) \triangleq (\node_{[0],\xvec}, \node_{[1],\xvec}, \cdots,\node_{[i],\xvec})$, where $\node_{[i]}$ is the tree node that $\xvec$ reaches at $i$-th level of the tree 
, and where $\node_{[0]}$ is the root node.
We will usually suppress the dependence on $\xvec$ if this is clear from the context and merely write $\treepath_{(i)}(\xvec) \triangleq (\node_{[0]}, \node_{[1]}, \cdots,\node_{[i]})$.

Let $\yvec \triangleq \sigma_{\tree_{(i)}}(\xvec)$ denote the output of our TSP forward evaluation.

For all $\xvec \in \zset^d$, let $\yvec \triangleq \sigma_{\tree_{(M)}}(\xvec) \equiv \pi_{\node_{[M]}} \circ \cdots \circ \pi_{\node_{[1]}} \circ \pi_{\node_{[0]}}(\xvec)$ denote the output of our TSP forward evaluation.
We want to prove that $\treepath_{(M)}(\xvec)$ can be recovered from $\yvec$ and $\sigma_{\tree}$. 
Let $\treepath_{(i)}'(\yvec) \triangleq \node_{[0]}', \node_{[1]}', \cdots,\node_{[i]}'$ where we traverse the decision tree of $\tree$ \emph{without} applying node permutations 
If we can prove $\treepath_{(M)}'(\yvec)=\treepath_{(M)}(\xvec)$
then we are done.

\paragraph{Inductive hypothesis} For all $i \in \{0,1,\cdots,M\}$, $\treepath_{(i)}'(\yvec)=\treepath_{(i)}(\xvec)$.

\paragraph{Base case ($i=0$)} Since both $\yvec$ and $\xvec$ start at the root node, then the path up to $i=0$ is the same.

\paragraph{Induction step} We need to prove that if $\treepath_{(i)}'(\yvec) = \treepath_{(i)}(\xvec)$, then $\treepath_{(i+1)}'(\yvec) = \treepath_{(i+1)}(\xvec)$.

\paragraph{Proof} From assumption of inductive hypothesis, we know that $\node_{[i]}' = \node_{[i]}$ (i.e., $\node_{[i]}$ and the corresponding $\node_{[i]}'$ are the same node for level $i$).

Let $\xvec^{(i)} = \sigma_{\tree(i)}(\xvec) = \pi_{\node_{[i]}} \circ \cdots \circ \pi_{\node_{[1]}} \circ \pi_{\node_{[0]}}(\xvec)$ where $\xvec^{(M)} \equiv \yvec$ 

Note that only the value $v$ of the split feature $s$ for both $\xvec^{(i)}$ and $\yvec$ is relevant for determining whether to go left or right.

We prove that the chosen nodes are the same using \emph{contradiction}.

\hspace{10pt} Suppose $x_s^{(i)} = v$ such that $\xvec$ goes left and suppose $y_s \neq v$ such that $\yvec$ would go right. 

We know that the $s$-th part of the domain of the left child has only $v$ in it, i.e., $\mathcal{D}_s(\node_{\lleft}) = \{v\}$.
Also, we know that domains of children are always smaller disjoint subsets of the parent, i.e., $\mathcal{D}_s(\node_{[l]}) \subseteq \mathcal{D}_s(\node_{\lleft})$ for all $l>i$. 

Thus, $x_s^{(l)}=v$ for all $l>i$ because the invertibility constraint ensures that we cannot permute a value inside the domain to a value outside the domain.
However, this is a contradiction to our assumption that $x^{(M)}_s \equiv y_s \neq v$ .
Therefore, if $\xvec$ goes left, then $\yvec$ will also go left.

\hspace{10pt} In a similar way, now suppose $x_s^{(i)} \neq v$ such that $\xvec$ goes right and suppose $y_s = v$ such that $\yvec$ would go left. 
The $s$-th part of the domain of the right child has $\mathcal{D}_s(\node_{\rright})= \{a: a \neq v, a \in \mathcal{D}_s(\node)\}$.
Again, $\mathcal{D}_s(\node_{[l]}) \subseteq \mathcal{D}_s(\node_{\rright})$ for all $l>i$ because every child is a subset of the parent domain.

Therefore, $\xvec_s^{(l)}\in \mathcal{D}_s(\node_{\rright})$ for all $l>i$, and thus in particular $x_s^{(M)} \in \mathcal{D}_s(\node_{\rright})$, where $x_s^{(M)} \equiv y_s$ by definition.
However, this contradicts our assumption that $y_s=v$ (i.e., goes left) because $v \not\in \mathcal{D}_s(\node_{\rright})$.

Hence, if $x$ goes right, $y$ will also go right. 

\hspace{10pt} Combining these two we get that $\node_{[i+1]}' = \node_{[i+1]}$ 
(i.e., they will both go left or both go right), and thus we can recover the path for $i+1$ by adding the child node to the path for $i$, i.e., $\treepath'_{(i+1)}(\yvec) = \treepath_{(i+1)}(\xvec)$. 
This proves our inductive step and concludes the proof of the lemma.
\end{proof}

\subsection{Proof of \cref{thm:invertibility-constraint}}

\begin{proof}
\cref{lem:recoverability-of-tree-path} (proven above) states that the TSP traversal path for any input can be recovered from the output. Thus, the output path is identical to the input path $\mathcal{P}_{(i)}'(\yvec) = \mathcal{P}_{(i)}(\xvec)$.
Therefore, $\xvec = \sigma_{\tree}^{-1}(\yvec) \equiv \pi_{\node_{[0]}'}^{-1} \circ \cdots \circ \pi_{\node_{[m-1]}'}^{-1} \circ \pi_{\node_{[m]}'}^{-1}(\yvec)$ because each node permutation is itself invertible by the definition of a permutation.
This can be seen as traversing the tree from the corresponding leaf node to the root node and applying the inverse node permutations along the path.

\end{proof}

\subsection{Necessary and Sufficient Condition for Invertibility with Proof}

For better understanding of TSPs, we now present a condition (i.e., disjoint ranges of leaf nodes) that is both necessary and sufficient for invertibility.
Note that this theorem can be easily used to prove \cref{thm:invertibility-constraint} as a corollary because the original invertibility constraint set is a subset of the disjoint range of leaf nodes constraint.
However, the proof here does not provide an efficient algorithm for determining the leaf node for the inverse, while the proof of \cref{lem:recoverability-of-tree-path}  does provide an efficient algorithm (i.e., merely traverse the tree as described in the lemma proof to determine the path).

\begin{theorem}[TSP Necessary and Sufficient Invertibility Constraint]
\label{thm:necessary-and-sufficient-invertibility-constraint}
A TSP is invertible if and only if the range of each leaf node is disjoint from all other leaf nodes, i.e., $\mathcal{R}(\node) \cap \mathcal{R}(\node') = \emptyset,  \forall \node, \node' \in \mathcal{T}_{\textnormal{leaf}}$ such that $\node \neq \node'$, where $\mathcal{T}_{\textnormal{leaf}}$ are the set of leaves in the TSP tree. 
\end{theorem}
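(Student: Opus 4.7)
The plan is to prove both implications by exploiting the fact that a TSP partitions its input space via the set of leaves, and that on each block of the partition $\sigma_\tree$ acts as a composition of (restrictions of) full-space permutations, hence as a bijection onto that leaf's range.

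For each leaf $\node \in \leafset$, let $\mathcal{O}(\node) \triangleq \{\xvec \in \zset^\ndim : \xvec \text{ reaches } \node \text{ under the forward pass of } \sigma_\tree\}$. I would first establish two preliminary facts:
(i) the family $\{\mathcal{O}(\node)\}_{\node \in \leafset}$ partitions $\zset^\ndim$, because at every internal node the split is mutually exclusive and exhaustive; and
(ii) restricted to $\mathcal{O}(\node)$, the map $\sigma_\tree$ equals $\pi_\node \circ \pi_{\anc}$, where $\pi_{\anc}$ is the composition of ancestor permutations along the root-to-$\node$ path, and this restriction is a bijection from $\mathcal{O}(\node)$ onto $\range(\node)$. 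The key substep of (ii) is a short induction on tree depth showing that $\pi_{\anc}$ sends $\mathcal{O}(\node)$ bijectively onto $\domain(\node)$, which follows directly from the recursive definitions of $\domain(\node_{\lleft})$ and $\domain(\node_{\rright})$. Consequently $|\mathcal{O}(\node)| = |\range(\node)|$, and summing over leaves gives $\sum_{\node \in \leafset} |\range(\node)| = |\zset^\ndim|$.

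For the ``$\Leftarrow$'' direction, assume the leaf ranges are pairwise disjoint. Injectivity: if $\sigma_\tree(\xvec_1) = \sigma_\tree(\xvec_2) = \yvec$, then by disjointness $\yvec$ lies in a unique $\range(\node)$, so both $\xvec_1$ and $\xvec_2$ lie in the same $\mathcal{O}(\node)$, and the bijection of (ii) forces $\xvec_1 = \xvec_2$. Surjectivity: since $\bigsqcup_{\node \in \leafset} \range(\node) \subseteq \zset^\ndim$ and the disjoint union already has cardinality $|\zset^\ndim|$ by (ii), equality holds, so every $\yvec \in \zset^\ndim$ is attained. For the ``$\Rightarrow$'' direction I would argue by contradiction: suppose $\sigma_\tree$ is invertible yet two distinct leaves $\node \neq \node'$ share some output $\yvec \in \range(\node) \cap \range(\node')$. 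By (ii), pick preimages $\xvec \in \mathcal{O}(\node)$ and $\xvec' \in \mathcal{O}(\node')$ with $\sigma_\tree(\xvec) = \sigma_\tree(\xvec') = \yvec$; since $\mathcal{O}(\node) \cap \mathcal{O}(\node') = \emptyset$ by (i), $\xvec \neq \xvec'$, contradicting injectivity.

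The main obstacle is the bookkeeping behind fact (ii), specifically proving that the cumulative ancestor permutation maps $\mathcal{O}(\node)$ \emph{onto} (not merely into) $\domain(\node)$, which is where the inductive step on depth needs some care; once this is in place, the remainder is a clean disjoint-union-and-cardinality argument that is strictly weaker than the sufficient constraint of \cref{thm:invertibility-constraint}.
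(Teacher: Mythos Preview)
Your proposal is correct. The necessity direction is essentially identical to the paper's: both pick $\yvec \in \range(\node)\cap\range(\node')$, produce two distinct preimages in disjoint input blocks, and contradict injectivity. The sufficiency direction, however, is genuinely different. The paper argues constructively: given $\yvec$, disjointness picks out a unique leaf $\node$ with $\yvec\in\range(\node)$, the root-to-$\node$ path is then determined, and the inverse is obtained by composing the inverse node permutations back up that path. You instead set up the partition $\{\mathcal{O}(\node)\}_{\node\in\leafset}$, prove the per-leaf bijection $\mathcal{O}(\node)\to\range(\node)$ via your fact (ii), and finish with a disjoint-union-plus-cardinality argument for injectivity and surjectivity. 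Your route is more explicit about the bookkeeping the paper glosses over (in particular, that $\pi_{\anc}$ sends $\mathcal{O}(\node)$ \emph{onto} $\domain(\node)$, and hence that $|\range(\node)|=|\mathcal{O}(\node)|$), while the paper's route has the advantage of exhibiting an actual inverse map rather than merely establishing bijectivity abstractly---though, as the paper itself remarks, that inverse is not the efficient one used in practice.
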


\begin{proof}

\textbf{We use a constructive proof for the if direction (sufficiency).}  
Because each of the permutations themselves are invertible, the primary challenge is \emph{showing that we can find the right path through the tree} (as there could be multiple paths if we don't consider domain related constraints) 

If the disjoint range condition is 
satisfied, then each possible output can be mapped to one of the leaves, 
i.e. $\node_{[M]}$ is the leaf node such that $\yvec \in \mathcal{R}(\node)$.
Given the leaf node, there is only one possible path through the decision tree back to the root.

Thus, the inverse can be computed by traversing from the leaf node to the root node and applying the inverse of each node's permutation.\\

\textbf{To prove the only-if direction (necessity), we will use a proof by contradiction}.

Suppose a TSP is invertible but the disjoint range condition is not satisfied, then $\mathcal{R}(\node) \cap \mathcal{R}(\node') \neq \emptyset$.
Therefore, there exists an output $\yvec$ that is in the range of two leaf nodes, i.e., $\exists \yvec$ such that $\yvec \in \mathcal{R}(\node)$ and $\yvec \in \mathcal{R}(\node')$ where $\node \neq \node'$.
Yet, each input traverses the TSP tree in a deterministic way and thus each unique input will always arrive at the same leaf node.
Therefore, there must exist two distinct inputs $\xvec \neq \xvec'$ such that $\sigma_{\tree}(\xvec) = \sigma_{\tree}(\xvec')=\yvec$.
This means that two distinct inputs map to the same output (i.e., not one-to-one mapping) and violates invertibility.
However, this contradicts our assumption that the TSP is invertible.
\end{proof}

\section{Proof of optimality of rank consistency}
\label{rank_proofs}

\subsection{Proof of \cref{thm:optimality-of-rank-consistent-tsps}}
\OptimalityOfRankConsistentTsps*
\begin{proof} 
The proof is by \emph{contradiction}. 

We will prove that we can construct another equivalent TSP from this TSP that will yield a better log likelihood which will lead to a contradiction of optimality. In particular, there exists a pair of counts that can be switched to be consistent with the global rank that will yield a better TSP.

Suppose a TSP is optimal among TSPs that are tree equivalent but rank consistency was not satisfied.  
Without loss of generality, we will assume the global rank permutations are the identity (i.e., $\pi_j = \pi_{\text{Id}}, \forall j$)  so we can simplify notation.
This would mean that there exists a rank inconsistent (RI) node, i.e., $\exists \tilde{\node}, j \in \{0,\cdots,\ndim-1\}, (a,b) \in \{ (a,b): a < b, a \in \domain(\tilde{\node}), b \in \domain(\tilde{\node}) \}$ such that 
\begin{align}
    c_{\tilde{\node}}(j,a) > c_{\tilde{\node}}(j,b) \,.
\end{align}
Let $\tilde{\pi}$ be the permutation that switches $a$ and $b$ of the $j$-th dimension from the above but leaves all other discrete values untouched.
Now, we alter the current node's permutation and split values as such:
\begin{align}
    \pi_{\tilde{\node}}^{\new} &\triangleq \tilde{\pi} \circ \pi_{\tilde{\node}} \\
    v_{\tilde{\node}}^{\new} &\triangleq \{ \tilde{\pi}(v) : v \in v_{\tilde{\node}} \}
\end{align}
and we alter the permutations and split values at descendant nodes as follows, i.e., $\forall \node \in \desc(\tilde{\node})$ 
\begin{align}
    \pi_{\node}^{\new} &\triangleq \tilde{\pi} \circ \pi_{\node} \circ \tilde{\pi}^{-1}  \\
    v_{\node}^{\new} &\triangleq \{ \tilde{\pi}(v) : v \in v_{\node} \} \,.
\end{align}
First, we show that this modified TSP is tree equivalent by \emph{induction} on the depth of the descendant subtree after the $\tilde{\node}$ node, which is at depth $m$. Note that any nodes that are not $\tilde{\node}$ or its descendants are unchanged and thus already satisfy the tree equivalence property.

The \textbf{base case} is then simply to determine if a configuration that would go left in the original TSP will go left in the new tree for the current $\tilde{\node}$.

Let $\xvec \in \domain(\tilde{\node})$ go left in the original tree, i.e., $\pi_{\tilde{\node},s}(\xvec) \in v_{\tilde{\node}}$.
From this we have that:
\begin{align}
    \pi_{\tilde{\node},s}(\xvec) &\in v_{\tilde{\node}} \\
    \Leftrightarrow \tilde{\pi} \circ \pi_{\tilde{\node},s}(\xvec) &\in v_{\tilde{\node}}^{\new} \\
    \Leftrightarrow \pi_{\tilde{\node},s}^{\new}(\xvec) &\in v_{\tilde{\node}}^{\new} \,,
\end{align}

where the second line is by the definition of $v_{\tilde{\node}}^{\new}$, and the third is by the definition of $\pi_{\tilde{\node}}^{\new}\equiv\tilde{\pi} \circ \pi_{\tilde{\node}}$.
We also note that for these leaf nodes, the domain and range are the same.
Thus, we have that $\forall \xvec, \sigma_{\tree(m+1)}(\xvec) \in \domain(\tilde{\node}_{\lleft}) \Leftrightarrow \sigma_{\tree(m+1)}^{\new}(\xvec) \in \domain(\tilde{\node}^{\new}_{\lleft})$ where the size of the new domain is equivalent $|\domain(\node_{\lleft})|=|\domain(\node^{\new}_{\lleft})|$ (and similarly for the right node), i.e., any configuration that went left in the original TSP will go left in the new TSP where the depth of the subtree of $\tilde{\node}$ is only 1 (i.e., a single split).

For the \textbf{induction} step, suppose the induction hypothesis holds for up to  depth $m+k$, we will prove that it holds for depth $m+k+1$.
Let $\node$ be a node at depth $k$ away from $\tilde{\node}$ which itself is at depth $m$, and let $\sigma_{\tree(m)}$ be the TSP permutation up to depth $m$ 
and $\sigma_{\tree(m+k)}$ is the TSP permutation up to depth $m+k$. 

By the inductive hypothesis, we have that the domains of the nodes are equivalent:
\begin{align}
    \sigma_{\tree(m+k)}(\xvec) &\in \domain(\node) \\
    \Leftrightarrow \sigma_{\tree(m+k)}^{\new}(\xvec) &\in \domain^{\new}(\node) \,.
\end{align}

First, let's show that $\sigma_{\tree(m+k)}^{\new} \equiv \tilde{\pi} \circ \sigma_{\tree(m+k)}$ as follows.
We will denote the tree traversal path of an input $\xvec$ to be $\treepath(\xvec) \triangleq (\node_{[0]}, \node_{[1]}, \cdots,\node_{[M]})$, where $\node_{[M]}$ is the leaf node at max depth $M$ that the input reaches and $\node_{[0]}$ is the root node. Also, note that 
$\tilde{\node} \equiv \node_{{[m]}}$.

We define $\pi_{\anc(\node_{[m]})}(x)\triangleq \sigma_{\tree(m-1)}(\xvec)$  (i.e. $\pi_{\anc(\node_{[m]})}(x)$ is the evaluation of permutations of all ancestral nodes of $\node_{[m]}$) to add intuition to the proofs.

\begin{align}
    \sigma_{\tree(m+k)}(\xvec)&\equiv \pi_{\node_{[m+k]}} \circ \cdots \circ \pi_{\node_{[1]}} \circ \pi_{\node_{[0]}}(\xvec) \\
    &\equiv \pi_{\node_{[m+k]}} \circ \cdots \circ \pi_{\node_{[m]}} \circ \pi_{\anc(\tilde{\node})}(\xvec) \\
    \sigma_{\tree(m+k)}^{\new}(\xvec)&\equiv \pi_{\node_{[m+k]}}^{\new} \circ \cdots \circ \pi_{\node_{[m]}}^{\new} \circ \pi_{\anc(\tilde{\node})}(\xvec) \,.
\end{align}
Now let's expand the new part:
\begin{align}
    \sigma_{\tree(m+k)}^{\new}(\xvec) &=\pi_{\node_{[m+k]}}^{\new} \circ \cdots \circ \pi_{\node_{[m+1]}}^{\new} \circ \pi_{\node_{[m]}}^{\new} \circ \pi_{\anc(\tilde{\node})}(\xvec) \\
    &=(\tilde{\pi} \circ \pi_{\node_{[m+k]}}\circ \tilde{\pi}^{-1}) \circ \cdots \circ (\tilde{\pi} \circ \pi_{\node_{[m+1]}} \circ \tilde{\pi}^{-1}) \notag\\
    &\quad\quad \circ (\tilde{\pi} \circ \pi_{\node_{[m]}}) \circ \pi_{\anc(\tilde{\node})}(\xvec) \\
    &=\tilde{\pi} \circ \pi_{\node_{[m+k]}} \circ \cdots \circ \pi_{\node_{[m+1]}} \notag\\
    &\quad\quad \circ \pi_{\node_{[m]}} \circ \pi_{\anc(\tilde{\node})}(\xvec)\\
    &= \tilde{\pi} \circ \sigma_{\tree(m+k)} \,,
\end{align}

where the second line is by the definition of the new node permutations, the third line is by noticing that this composition is a telescoping composition where the inner $\tilde{\pi}^{-1} \circ \tilde{\pi}$ terms cancel out, and the last line is by the definition of $\sigma_{\tree(m+k)}$.

Next, we show that given our definitions of the descendant nodes above, the same configuration will go left in the new tree, 
\begin{align}
    \sigma_{\tree(m+k)}(\xvec_s) &\in v \\
    \tilde{\pi} \circ \sigma_{\tree(m+k)}(\xvec_s) &\in v^{\new} \\
    \sigma_{\tree(m+k)}^{\new}(\xvec_s) &\in v^{\new} \,,
\end{align}
where the first line is by assumption, the second line is by the definition of $v^{\new}$ and the third line is by our derivation above about the relationship between the new and old permutations.
Thus, again, the same inputs will go left and right and thus the inductive hypothesis holds for $m+1$, i.e., $\forall \node$ at a depth of $m+k+1$:
\begin{align}
    \sigma_{\tree(m+k+1)}(\xvec) &\in \domain(\node) 
    \Leftrightarrow \sigma_{\tree(m+k+1)}^{\new}(\xvec) \in \domain^{\new}(\node) \,.
\end{align}

For the next part of the proof, we need to prove that this equivalent TSP has better negative log likelihood.
Similar to the above derivation we can know that $\sigma_{\tree}^{\new}(\xvec) = \tilde{\pi} \circ \sigma_{\tree}(\xvec), \forall \xvec \in \domain(\tilde{\node})$.
And because $\tilde{\pi}$ only swaps two configurations, then the node counts are all equivalent except that $c_{\node}^{\new}(s,a) = c_{\node}(s,b)$ and  $c_{\node}^{\new}(s,b) = c_{\node}(s,a)$ for all nodes that are equal to $\tilde{\node}$ or descendants of $\tilde{\node}$.
Now we also note that the maximum likelihood solution is actually equivalent to minimizing the sum of feature-wise empirical entropies:
\begin{align}
    &\min_{Q} \frac{1}{n}\sum_{i=1}^n -\log Q(\zvec_i) \\
    &=\min_{Q} \frac{1}{n}\sum_{i=1}^n \sum_{j=1}^k -\log Q_j(z_{i,j})  \\
    &=\sum_{j=1}^k \min_{Q_j} \frac{1}{n}\sum_{i=1}^n  -\log Q_j(z_{i,j})  \\
    &=\sum_{j=1}^k \min_{Q_j} \E_{\hat{Q}_j}[-\log Q_j(z_{i,j})]  \label{eqn:min-cross-entropy} \\
    &=\sum_{j=1}^k \E_{\hat{Q}_j}[-\log \hat{Q}_j(z_{i,j})]  \\
    &=\sum_{j=1}^k H(\hat{Q}_j) \,,
\end{align}

where $\zvec = \sigma_{\tree}(\xvec)$ and $\hat{Q}_j$ is the empirical distribution of the $j$-th dimension of $\zvec$ (i.e., merely the empirical probabilities based on normalizing the counts).
The first equals is by the assumption of independence, the second is by noting that the minimization is decomposable, the third is by the definition of an empirical expectation, the fourth is by the optimal solution (i.e., the empirical probabilities is the best MLE estimate given the empirical distribution) and the last is by the definition of entropy.
Note that \autoref{eqn:min-cross-entropy} can actually be seen as another definition of entropy.
Thus, we only need to compare the entropies for feature $s$ (since all others are equal) $H(\hat{Q}_s^{\new}) < H(\hat{Q}_s)$.
Without loss of generality, let us suppose that $k=2$ (i.e., $\hat{Q}_s$ is a Bernoulli distribution).
Now let $p_Q \triangleq \hat{Q}_s(a)$ and $p_Q^{\new} \triangleq \hat{Q}^{\new}_s(a)$, dropping the notation on $s$ because this is the only variable the changes.
Because the empirical counts for $a$ are larger in the new distribution, we know that $p_Q^{\new} > p_Q$ and additionally we know that $p_Q > (1-p_Q)$ because $c_{\tilde{\node}}(s,a) > c_{\tilde{\node}}(s,b)$ by assumption.
Given that $p_Q > (1-p_Q)$, the derivative of entropy is negative, i.e., $\frac{dH(p_Q)}{dp} = -\log p_Q + \log (1-p_Q) < 0$ because $\log$ is a monotonically increasing function (i.e., $\log p_Q > \log(1-p_Q)$).
We can form a linear upper bound on $H$ by the first order Taylor series expansion around $p_0$:
\begin{align}
    H(p) &\leq H(p_0) + \frac{dH(p_0)}{dp}(p-p_0)
\end{align}
and derive that the entropy of the new is lower:
\begin{align}
    H(p_Q^{\new}) 
    &\leq H(p_Q) + \frac{dH(p_Q)}{dp} (p_Q^{\new}-p_Q)
    < H(p_Q) \,,
\end{align}
where the first line is by the concavity of $H(p)$ and the second is by the fact that $p_Q^{\new}-p_Q > 0$ while $\frac{dH(p_Q)}{dp}<0$.
Thus, the newly constructed TSP is tree equivalent yet it has a lower negative log likelihood (or equivalently lower feature-wise entropy).
Yet, this is a contradiction to our assumption that the original TSP was optimal.
\end{proof}

\section{Tree Equivalence }
\label{tree_eq}
\TreeEquivalence*

\begin{proof}[Proof that $\equivtree$ is indeed an equivalence relation]

\hfill \break

\textbf{Reflexive property} ($\forall \tree, \tree \equivtree \tree$)
This property is trivial by inspection of the definition. For any $\tree$, clearly any tree has the same graph structure as itself and $\forall j, \xvec, \, \sigma_{\tree}(\xvec) \in \domain(\node_j) \Leftrightarrow \sigma_{\tree}(\xvec) \in \domain(\node_j)$, where both statements are almost trivial.

\textbf{Symmetric property} ($\forall \tree_A, \tree_B, \tree_A \equivtree \tree_B \Leftarrow \tree_B \equivtree \tree_A$)
Again, this is easy to prove.  Suppose $\tree_A \equivtree \tree_B$, this means that $\tree_A$ and $\tree_B$ have the same graph structure and that $\forall j, \xvec, \, \sigma_{\tree_A}(\xvec) \in \domain(\node_j^{(A)}) \Leftrightarrow \sigma_{\tree_B}(\xvec) \in \domain(\node_j^{(B)})$. Because the structure is the same, then $\tree_B$ has the same structure as $\tree_A$ and we know that $\forall j, \xvec, \, \sigma_{\tree_B}(\xvec) \in \domain(\node_j^{(B)}) \Leftrightarrow \sigma_{\tree_A}(\xvec) \in \domain(\node_j^{(A)})$. Thus, $\tree_B \equivtree \tree_A$.

\textbf{Transitive property} ($\forall \tree_A, \tree_B, \tree_C, \text{ if } \tree_A \equivtree \tree_B \text{ and } \tree_B \equivtree \tree_C, \text{ then } \tree_A \equivtree \tree_C$)
First, we check the transitivity of property (1): If $\tree_A$ has the same graph structure as $\tree_B$ and $\tree_B$ has the same graph structure as $\tree_C$, then $\tree_A$ and $\tree_C$ have the same graph structure. 
Second, we check the transitivity of property (2):
From $\tree_A \equivtree \tree_B$, we know that $\forall j, \xvec, \, \sigma_{\tree_A}(\xvec) \in \domain(\node_j^{(A)}) \Leftrightarrow \sigma_{\tree_B}(\xvec) \in \domain(\node_j^{(B)})$ and from $\tree_B \equivtree \tree_C$, we know that $\forall j, \xvec, \, \sigma_{\tree_B}(\xvec) \in \domain(\node_j^{(B)}) \Leftrightarrow \sigma_{\tree_C}(\xvec) \in \domain(\node_j^{(C)})$. Because each of these statements are if and only if (i.e., $\Leftrightarrow$), we can clearly conclude that 
$\forall j, \xvec, \, \sigma_{\tree_A}(\xvec) \in \domain(\node_j^{(A)}) \Leftrightarrow \sigma_{\tree_C}(\xvec) \in \domain(\node_j^{(C)})$, which proves the second property is transitive.  Thus, we can conclude that $\tree_A \equivtree \tree_C$.
 
\end{proof}

This equivalence relation sheds additional interpretation of our algorithm.  The splitting criteria determines one tree in the equivalence class of TSPs defined by the tree equivalence relation (specifically one in the equivalence class where all the permutations are the identity).
Our other algorithms then finds the optimal \emph{equivalent} tree in the equivalence class that maintains this split structure, but changes the permutations to be the optimal with respect to NLL.

Informally, this equivalence relation means that any configuration will reach the same node for any tree in the equivalence class.
However, the permutations and split criteria might be different for each TSP in the equivalence class---and in particular, the choice of which permutations and split criteria will affect the NLL of the model.

We present an example of two tree equivalent trees in Figures ~\ref{tree_eq}

\begin{figure}
    \centering
    \subfloat[\centering Tree A ]{{\includegraphics[width=7cm]{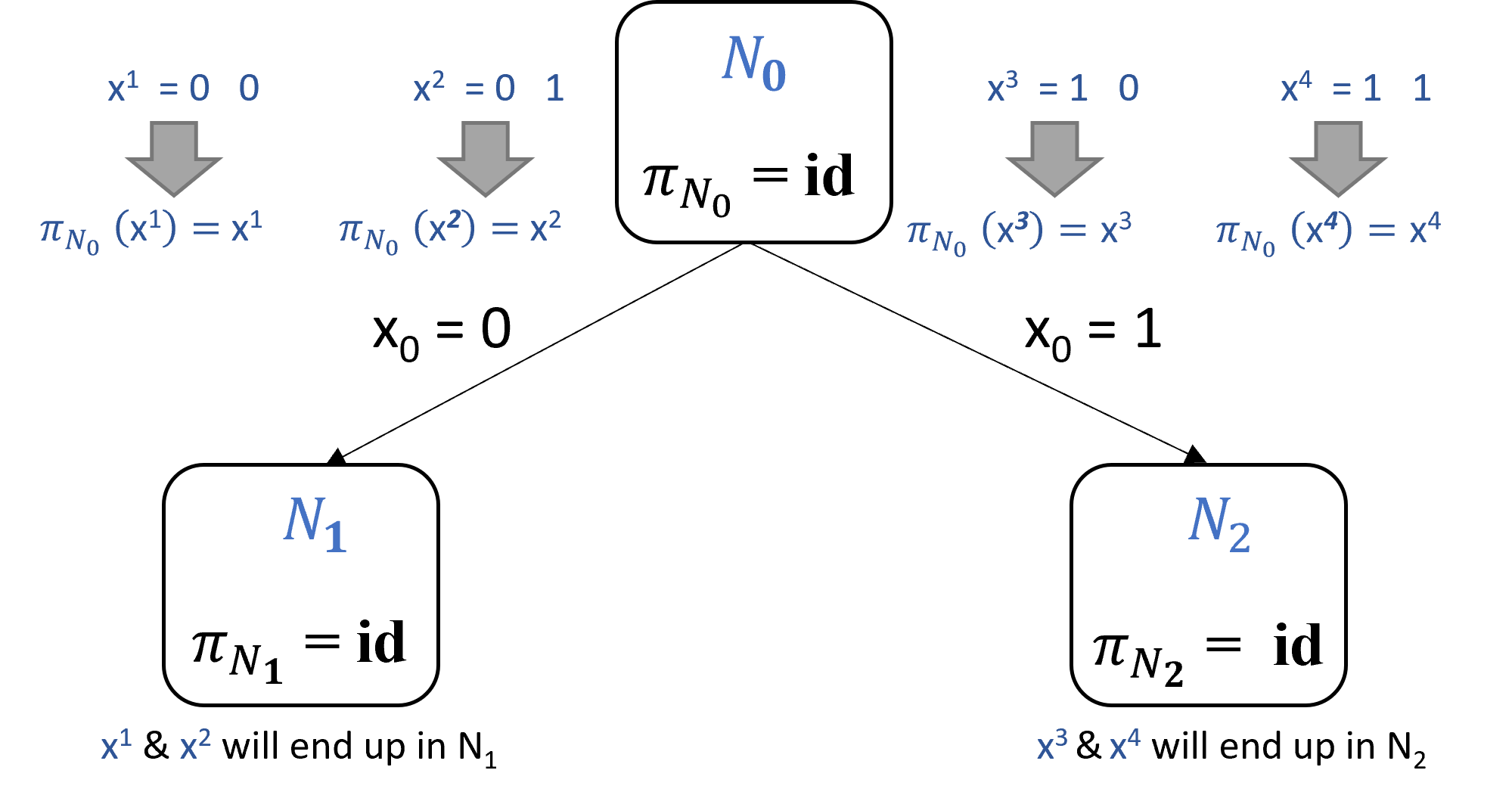} }}
    \qquad
    \subfloat[\centering Tree B]{{\includegraphics[width=8.5cm]{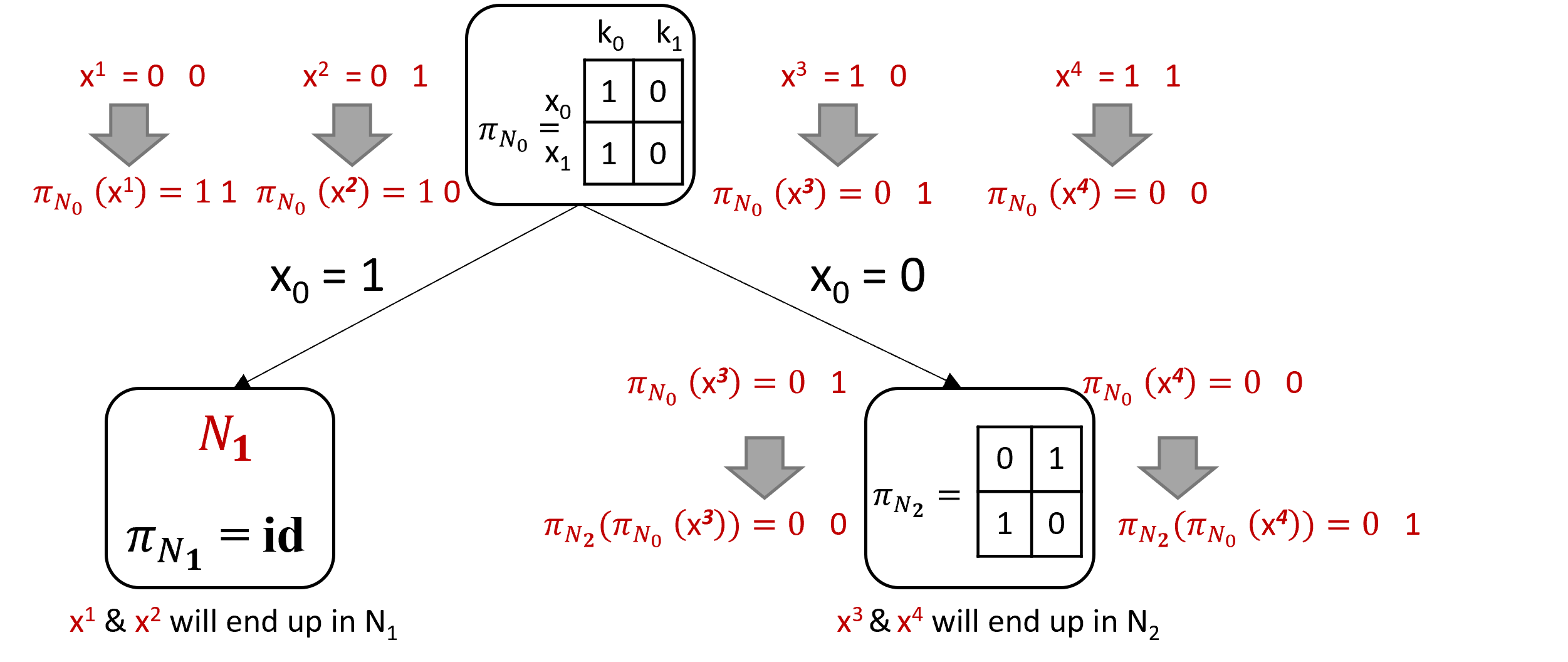} }}
    \qquad
    \caption{ An example of two equivalent trees a) Tree A b) Tree B}
    \label{fig:tree_eq}
\end{figure}

\section{Algorithms pseudo code}
\label{code}
We give the additional pseudo-code for the complete algorithm here.
At a high level, \autoref{alg:learn-tsp}:LearnTSP is the complete algorithm and calls all the other algorithms, it creates a root node, then calls \autoref{alg:construct-tree}:ConstructTree which is for constructing the tree, then learn the local permutations (\autoref{alg:learn-local-permutations}: LearnLocalPermutations) discussed in section~\ref{twopass}), then find the equivalent tree structure (\autoref{alg:construct-equivalent-tree}: ConstructEquivalentTree discussed in section~\ref{twopass}). Finally returning the node of the equivalent tree, which is the node to the final "correct" TSP. \autoref{alg:learn-tsp}:LearnTSP builds the tree recursively given the specified maximum depth of the tree. It utilizes \autoref{alg:find-best-split}:FindBestSplit to find the best split and it also outputs the counts of the categorical values at each node (which will be used by the later algorithms). \autoref{alg:find-best-split}:FindBestSplit simply iterates over all possible splits (that is all dimensions and category values) calculates the scores for each split according to the scoring function (which can be  GLP, or RND. For RND it will just randomly select a dimension to split on and a split value without the need to calculate any scores), following that it shows the pair of split feature and split value that resulted in the best score, and splits the data accordingly and also calculates the domains of the children that are created by this split.

\section{Proofs for Algorithms}
\label{alg_proofs}
Before we prove \autoref{thm:algorithm-optimal}, we first prove several important lemmas.

\subsection{Proof that \autoref{alg:construct-equivalent-tree} constructs an equivalent tree}
\ConstructEquivalentTree*

\begin{proof}
We will prove by \textbf{induction} on the tree depth.  
\paragraph{Base case ($\depth=1$)}
For the base case of depth equal to one, we only have three nodes: the parent node $\node_0$ and two children nodes, where each has an identity permutation $\pi_{\node_i} \triangleq \pi_{\text{Id}}, \forall i$ (by construction from Algorithm 1) 
but also a local permutation $\modpi_{\node_i}$ (by Algorithm 4) . We will distinguish the new nodes created by ConstructEquivalentTree with superscript new for example, the root node will be $\node_0^{\new}$. Each of these nodes has a $\pi \triangleq \pi_{\anc} \circ \modpi \circ \pi_{\anc}^{-1}$ 
(where $\pi_{\anc(\node_0)} \triangleq \pi_{\text{Id}}$), and $v_{\node}^{\new} = \{\modpi \circ \pi_{\anc(\node)}(v) :  v \in v_{\node} \}$ 
It is sufficient to verify that the same input configurations that went left in the original tree will go to the left in the equivalent tree.

First, note that for $\depth=1$, $\pi_{\anc(\node_0)} = \pi_{\text{Id}}$ and thus $\pi_{\node_0}^{\new} = \modpi_{\node_{[0]}}$.
Let $\xvec$ be a configuration that went left in the original TSP.
This means that 
\begin{align}
  x_j &\in v_{\node} \\
  \Leftrightarrow \modpi_{\node_0}(x_j) &\in v^{\new}_{\node} \,,
\end{align}
where the first implications is by the construction of $v_{\node}^{\new}$.
To ensure invertibility, we must also verify that the domain and range are equal for each new node.
The root node has a domain of the entire space and thus any permutation will ensure that the domain and range are equal.
Thus, we only need to verify that the new child permutations only permute their new domains, i.e., $\xvec \in \domain(\node^{\new}) \Leftrightarrow \pi^{\new}(\xvec) \in \domain(\node^{\new})$. 
First, we note that $\range(\modpi) \equiv \domain$ by construction of $\modpi$, i.e., $\xvec \in \domain \Leftrightarrow \modpi(\xvec) \in \domain$.
We also note that the same input $\xvec$ reach this node (where all the permutations were the identity), i.e., $\forall \xvec, \sigma_{\tree}(\xvec) \equiv \xvec \in \domain(\node) \Leftrightarrow \sigma_{\tree}^{\new}(\xvec) \in \domain(\node^{\new})$, but now there are permutations applied at the ancestors of the node $\node$, we have the following:
\begin{align}
    \domain(\node^{\new}) = \{\pi_{\anc}(\xvec) : \xvec \in \domain(\node)\} \,.
    \label{eqn:new-domain-anc}
\end{align}
We prove for the left child as the right child can be proved in the same way.
\begin{align}
    \xvec &\in \domain(\node^{\new}) \\
    \Leftrightarrow \pi_{\anc}^{-1}(\xvec) &\in \domain(\node) \\ 
    \Leftrightarrow \modpi \circ \pi_{\anc}^{-1}(\xvec) &\in \domain(\node) \\
    \Leftrightarrow \pi_{\anc} \circ \modpi \circ \pi_{\anc}^{-1}(\xvec^{\new}) &\in \domain(\node^{\new}) \,
\end{align}

where the first line is from the properties of one-to-one mappings and \autoref{eqn:new-domain-anc}, the second line is by the property of $\modpi$ such that the domain and range are the same , and the third line is also by \autoref{eqn:new-domain-anc}.
Thus, the new TSP has equivalent tree structure and is a valid TSP.

\paragraph{Inductive step}
The key step is to prove that if the property is true all nodes of till depth $m$, any nodes at depth $m+1$ also have equivalent tree structure.
Without loss of generality, we choose the split of one node at depth $m$ and show that the same points would go left (all other nodes at depth $m$ can be proved similarly).
Suppose that $\xvec \in \domain(\node)$ went to the left child in our original TSP, i.e., $x_s \in v$.
By our inductive hypothesis $\pi_{\anc}(\xvec) \in \domain(\node^{\new})$.

Let $\xvec^{\new} \equiv \modpi \circ \pi_{\anc}(\xvec_s)$ be the equivalent configuration in the new TSP, we can derive that:

\begin{align}
    x_s &\in v_\node \\
    \Leftrightarrow \modpi \circ \pi_{\anc}(\xvec_s) &\in v_\node^{\new}  \\
    \Leftrightarrow x_s^{\new} &\in v_\node^{\new}  \,, 
\end{align}
where the second line is by the definition of $v^{\new}$ and the third line is by the definition of $\xvec^{\new}$.
Thus, the samples that went left in the original TSP will also go left in this new TSP.
For this step, the domains of the node can be validated in the same way as in the base case since there is nothing special in the inductive case.

\end{proof}

\subsection{Proof that new TSP is equal to applying local permutations in reverse order}
We will need the following simple lemma about the structure of the new TSP permutation to prove our result in \autoref{thm:learn-local-permutations}.
\begin{restatable}{lemma}{LocalReverseOrder}
\label{Lemma:local-reverse-order}
The new TSP constructed from our algorithms is equal to applying the local permutations in reverse order:
\begin{align}
    \forall M > 0,\quad \sigma_{\tree_{(M)}}^{\new}
    &= \pi_{\node_{[M]}}^{\new} \circ \pi_{\node_{[M-1]}}^{\new} \circ\cdots\circ \pi_{\node_{[0]}}^{\new} \nonumber
    = \tilde{\pi}_{\node_{[0]}} \circ \tilde{\pi}_{\node_{[1]}}\circ\cdots\circ \tilde{\pi}_{\node_{[M]}} \,.
\end{align}
Also, one specific case of this is for ancestor permutations:
\begin{align}
    \pi_{\anc(\node_{[M]})}^\new = \tilde{\pi}_{\node_{[0]}} \circ \tilde{\pi}_{\node_{[1]}}\circ\cdots\circ \tilde{\pi}_{\node_{[M-1]}} \,.
\end{align}
\end{restatable}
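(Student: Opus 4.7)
The plan is to prove both statements by a single induction on depth $M$, leveraging the recursive definition of $\pi_{\node}^{\new}$ baked into \autoref{alg:construct-equivalent-tree}. Concretely, the pseudo-code sets $\pi_{\node}^{\new} \triangleq \pi_{\anc} \circ \modpi_{\node} \circ \pi_{\anc}^{-1}$ and passes $\pi_{\anc}^{\text{child}} \gets \node.\pi \circ \pi_{\anc}$ to the children, which means that by construction
\begin{align*}
    \pi_{\anc(\node_{[M]})}^{\new} = \pi_{\node_{[M-1]}}^{\new} \circ \pi_{\node_{[M-2]}}^{\new} \circ \cdots \circ \pi_{\node_{[0]}}^{\new},
\end{align*}
with $\pi_{\anc(\node_{[0]})}^{\new} \triangleq \pi_{\text{Id}}$. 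So the second claim of the lemma is in fact a direct corollary of the first, once one notes that the right-hand side of the first claim truncated at index $M-1$ equals the ancestor permutation at $\node_{[M]}$.

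Denote $\Phi_M \triangleq \pi_{\node_{[M]}}^{\new} \circ \cdots \circ \pi_{\node_{[0]}}^{\new}$. The base case $M=0$ is immediate: since $\pi_{\anc(\node_{[0]})}^{\new} = \pi_{\text{Id}}$, we have $\Phi_0 = \pi_{\node_{[0]}}^{\new} = \modpi_{\node_{[0]}}$. For the inductive step I would assume $\Phi_{M-1} = \modpi_{\node_{[0]}} \circ \cdots \circ \modpi_{\node_{[M-1]}}$ and compute
\begin{align*}
    \Phi_M &= \pi_{\node_{[M]}}^{\new} \circ \Phi_{M-1} \\
           &= \bigl(\pi_{\anc(\node_{[M]})}^{\new} \circ \modpi_{\node_{[M]}} \circ (\pi_{\anc(\node_{[M]})}^{\new})^{-1}\bigr) \circ \Phi_{M-1} \\
           &= \Phi_{M-1} \circ \modpi_{\node_{[M]}} \circ \Phi_{M-1}^{-1} \circ \Phi_{M-1} \\
           &= \Phi_{M-1} \circ \modpi_{\node_{[M]}},
\end{align*}
where the third equality uses $\pi_{\anc(\node_{[M]})}^{\new} = \Phi_{M-1}$ (by the pseudo-code observation above) and the fourth is telescoping. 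Substituting the inductive hypothesis for $\Phi_{M-1}$ yields the desired reverse-order product of local permutations.

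The second claim then follows by reading off the same induction one level earlier: $\pi_{\anc(\node_{[M]})}^{\new} = \Phi_{M-1} = \modpi_{\node_{[0]}} \circ \cdots \circ \modpi_{\node_{[M-1]}}$. The only subtle point, and the one I would be most careful with, is the order-reversal in the conjugation step: the new permutations compose right-to-left along the path (root last), while their expansion into local $\modpi$'s composes left-to-right along the path (root first). The cancellation $\Phi_{M-1}^{-1} \circ \Phi_{M-1} = \pi_{\text{Id}}$ is what mediates this reversal, so the whole argument hinges on correctly identifying $\pi_{\anc(\node_{[M]})}^{\new}$ with $\Phi_{M-1}$ and tracking indices cleanly; everything else is routine unfolding of definitions.
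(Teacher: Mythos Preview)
Your proposal is correct and follows essentially the same approach as the paper: induction on depth, expanding $\pi_{\node_{[M]}}^{\new}$ via its conjugation definition, identifying the ancestor permutation with the accumulated product $\Phi_{M-1}$, and telescoping. The only cosmetic differences are that the paper starts the base case at depth $1$ rather than $0$ and does not introduce the shorthand $\Phi_M$, but the mechanics of the argument are identical.
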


\begin{proof}
    We will prove by induction. We will first recall some definitions:
    \begin{align}
        \pi &\triangleq \pi_{\anc} \circ \modpi \circ \pi_{\anc}^{-1} \\
        \pi^{-1} &\triangleq \pi_{\anc} \circ \modpi^{-1} \circ \pi_{\anc}^{-1} \\
        \pi_{\anc(\node_{[m]})} &\triangleq \pi_{\node_{[m-1]}} \circ \pi_{\node_{[m-2]}} \circ \cdots \circ \pi_{\node_{[1]}} \circ \pi_{\node_{[0]}}
        \label{eqn:anc_def}\\
        \pi_{\anc(\node_{[m]})}^{-1} &\triangleq \pi_{\node_{[0]}}^{-1} \circ \pi_{\node_{[1]}}^{-1} \circ \cdots \circ \pi_{\node_{[m-2]}}^{-1} \circ \pi_{\node_{[m-1]}}^{-1}
        \label{eqn:inverse_anc_def}
    \end{align}
    
    \paragraph{Base case ($TSP\; \depth=1$)}
    
    We define the new TSP for depth of 1 
    \begin{align}
        \sigma_{\tree_{(1)}}^{\new}
        &= \pi_{\node_{[1]}}^{\new} \circ \pi_{\node_{[0]}}^{\new}\\
        &= (\pi_{\anc(\node_{[1]})}^{\new} \circ \modpi_{\node_{[0]}} \circ  {\pi_{\anc(\node_{[1]})}^{\new}}^{-1}) \circ
        \pi_{\node_{[0]}}^{\new}\\
        &= \pi_{\node_{[0]}}^{\new} \circ \modpi_{\node_{[1]}} \circ  {\pi_{\node_{[0]}}^{\new}}^{-1} \circ
        \pi_{\node_{[0]}}^{\new}\\
        &= \pi_{\node_{[0]}}^{\new} \circ \modpi_{\node_{[1]}}\\
        &=(\pi_{\anc(\node_{[0]})}^{\new} \circ \modpi_{\node_{[0]}} \circ  {\pi_{\anc(\node_{[0]})}^{\new}}^{-1} ) \circ
        \modpi_{\node_{[1]}}\\
        &=\modpi_{\node_{[0]}}\circ\modpi_{\node_{[1]}}\,,
    \end{align}

    where the second equality is just the definition of our permutation for $\pi_{\node_{[1]}}^{\new}$, the third equality is the expansion of the ancestor permutation definitions, the fourth equality is because ${\pi_{\node_{[0]}}^{\new}}^{-1} \circ
    \pi_{\node_{[0]}}^{\new}$ cancel each other. The equation is simplified to the last equation since there exists no ancestor nodes for $\node_{[0]}$, thus we can assign ${\pi_{\anc(\node_{[0]})}^{\new}}$ as an identity. Therefore, Lemma \ref{Lemma:local-reverse-order} holds for the base case.
    
    Furthermore, we can define the new TSP at depth 1 as the new ancestor permutations for $\node_{[2]}$:
    
    \begin{align}
        \pi_{\anc(\node_{[2]})}^{\new} &= \sigma_{\tree_{(1)}}^{\new}= \modpi_{\node_{[0]}}\circ\modpi_{\node_{[1]}}\,,
    \end{align}

    and we can give a more general definition:
    \begin{align}
        \pi_{\anc(\node_{[m]})}^{\new} &= \sigma_{\tree_{(m-1)}}^{\new}= \modpi_{\node_{[0]}}\circ\modpi_{\node_{[1]}}\circ\cdots\modpi_{\node_{[m-1]}}\,,
            \label{eqn:general_new_ancestor_perm}
    \end{align}
    
    \paragraph{Induction step ($TSP\; \depth=m \Rightarrow TSP\;\depth=m+1$)} 
    
    Let us assume for depth of $m$, 
    Lemma \ref{Lemma:local-reverse-order} holds true. Then the new TSP for depth of $m$ using definition from \autoref{eqn:general_new_ancestor_perm} can be written as below:
    
    \begin{align}
        \sigma_{\tree_{(m)}}^{\new}&= \pi_{\anc(\node_{[m+1]})}^{\new} \label{eqn:k_depth_TSP}\\
        &= \pi_{\node_{[m]}}^{\new} \circ \pi_{\node_{[m-1]}}^{\new} \circ\cdots\circ \pi_{\node_{[0]}}^{\new} \\
        &= \modpi_{\node_{[0]}}\circ\modpi_{\node_{[1]}}\circ\cdots\modpi_{\node_{[m]}}\,.
            \label{eqn:k_depth_TSP_reversed}
    \end{align}
    
    Let us now observe our new TSP permutation for depth of $m+1$:
    \begin{align}
        \sigma_{\tree{(m+1)}}^{\new}&=\pi_{\node_{[m+1]}}^{\new} \circ \pi_{\node_{[m]}}^{\new} \circ\cdots\circ \pi_{\node_{[0]}}^{\new}\,.
    \end{align}
    Referring back to \autoref{eqn:k_depth_TSP}, we can rewrite the equality as:
    
    \begin{align}
        \sigma_{\tree_{(m+1)}}^{\new}&=\pi_{\node_{[m+1]}}^{\new} \circ \pi_{\anc(\node_{[m+1]})}^{\new}\\
        &= \pi_{\anc(\node_{[m+1]})}^{\new} \circ \modpi_{\node_{[m+1]}} \circ  {\pi_{\anc(\node_{[m+1]})}^{\new}}^{-1}\circ \pi_{\anc(\node_{[m+1]})}^{\new}\\
        &=\pi_{\anc(\node_{[m+1]})}^{\new} \circ \modpi_{\node_{[m+1]}}\\
        &= \modpi_{\node_{[0]}}\circ\modpi_{\node_{[1]}}\circ\cdots\modpi_{\node_{[m]}}\circ\modpi_{\node_{[m+1]}}\,,
    \end{align}
    
    where the third equality is given by the cancellation of the ancestor and the inverse ancestor permutation on the previous line. The final equality is the expansion of $\pi_{\anc(\node_{[m+1]})}^{\new}$ on \autoref{eqn:k_depth_TSP_reversed}. Therefore, we prove that
    \begin{align}
        \sigma_{\tree_{(M)}}^{\new}
        &= \tilde{\pi}_{\node_{[0]}} \circ \tilde{\pi}_{\node_{[1]}}\circ\cdots\circ \tilde{\pi}_{\node_{[M]}}.
        \label{eqn:permutation_equiv}
    \end{align}
    
\end{proof}

\subsection{Proof that new counts are a permutation of the local counts} 
\begin{lemma}[Relation between local and new node counts]
\label{thm:relation-local-and-new-counts}
The new node counts $c_{\node^{\new}}$ are equal to the local counts $\tilde{c}_\node$ after permutation by the ancestor permutation $\pi_{\anc(\node)}^{\new}$, i.e., $c_{\node^{\new}} = \pi_{\anc(\node)}^{\new}[ \tilde{c}_\node]$.
\end{lemma}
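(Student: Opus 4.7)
The plan is to interpret $\tilde c_\node$ as the count of data points reaching $\node$ (in the original all-identity tree) \emph{after} they have been transformed by the local permutations accumulated bottom-up in the subtree rooted at $\node$. For $\xvec \in D_\node$ (the data that traverses $\node$ in the original tree, which by \autoref{thm:construct-equivalent-tree} is the same set that traverses $\node^{\new}$), I would recursively define a subtree-local transformation $\tilde\sigma_{\subtree(\node)}$ by $\tilde\sigma_{\subtree(\node)}(\xvec) \triangleq \tilde\pi_\node(\xvec)$ at leaves and $\tilde\sigma_{\subtree(\node)}(\xvec) \triangleq \tilde\pi_\node \circ \tilde\sigma_{\subtree(\text{child})}(\xvec)$ at internal nodes (where \emph{child} is the appropriate left/right child under the original split). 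Then, by bottom-up induction on tree depth, I would prove the key identity
\begin{align*}
\tilde c_\node(j, a) = \bigl| \{ i : \xvec_i \in D_\node,\ [\tilde\sigma_{\subtree(\node)}(\xvec_i)]_j = a \} \bigr| \,.
\end{align*}
The base case (leaves) uses that $\tilde c_\node^{\init}$ from Alg.~\ref{alg:construct-tree} directly counts the features of data arriving at the leaf, followed by the permute-counts step of Alg.~\ref{alg:learn-local-permutations}. The inductive step uses that $D_\node$ is the disjoint union of its children's $D$'s, so the combined counts in line~\ref{alg-line:combine-counts} of Alg.~\ref{alg:learn-local-permutations} aggregate the induction hypothesis across both children, and the subsequent permute-counts step applies $\tilde\pi_\node$ as required.

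Next, I would connect this back to the new TSP via \autoref{Lemma:local-reverse-order}. For any $\xvec_i \in D_\node$ whose full root-to-leaf path in the new tree is $\node_{[0]}, \ldots, \node_{[m]} = \node, \ldots, \node_{[M]}$, the lemma expresses $\sigma_{\tree}^{\new}(\xvec_i)$ as the composition $\tilde\pi_{\node_{[0]}} \circ \cdots \circ \tilde\pi_{\node_{[M]}}(\xvec_i)$. Splitting this composition at $\node$ and matching the tail with the recursive definition of $\tilde\sigma_{\subtree(\node)}$ restricted to $\xvec_i$'s path yields
\begin{align*}
\sigma_{\tree}^{\new}(\xvec_i) = \pi_{\anc(\node)}^{\new} \circ \tilde\sigma_{\subtree(\node)}(\xvec_i) \,, \quad \forall\, \xvec_i \in D_\node \,.
\end{align*}
Also, the tree equivalence from \autoref{thm:construct-equivalent-tree} together with the invertibility constraint ensures that the condition $\sigma_\tree^{\new}(\xvec_i) \in \domain(\node^{\new})$ in \autoref{def:node-counts} is equivalent to $\xvec_i \in D_\node$.

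Finally, I would put these two facts together with the feature-wise nature of independent permutations and the definition $\pi_j[c(j)](a) = c(j, \pi_j^{-1}(a))$ to compute
\begin{align*}
c_{\node^{\new}}(j, a)
&= \bigl| \{ i : \xvec_i \in D_\node,\ [\pi_{\anc(\node)}^{\new} \circ \tilde\sigma_{\subtree(\node)}(\xvec_i)]_j = a \} \bigr| \\
&= \bigl| \{ i : \xvec_i \in D_\node,\ [\tilde\sigma_{\subtree(\node)}(\xvec_i)]_j = (\pi_{\anc(\node),j}^{\new})^{-1}(a) \} \bigr| \\
&= \tilde c_\node\bigl(j,\, (\pi_{\anc(\node),j}^{\new})^{-1}(a)\bigr)
 = \pi_{\anc(\node)}^{\new}[\tilde c_\node](j, a) \,,
\end{align*}
which is the claim. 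The main obstacle I anticipate is the bookkeeping in the inductive identity for $\tilde c_\node$: I must carefully align the particular root-to-leaf composition extracted from \autoref{Lemma:local-reverse-order} with the recursive subtree definition along exactly that path, and invoke the disjoint partitioning of $D_\node$ into its children's data sets so that the combined counts in Alg.~\ref{alg:learn-local-permutations} match the feature-wise aggregation on the data side.
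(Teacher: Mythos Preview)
Your proposal is correct and uses the same core ingredients as the paper: bottom-up induction from the leaves, \autoref{Lemma:local-reverse-order} to factor $\sigma_\tree^{\new}$, tree equivalence (\autoref{thm:construct-equivalent-tree}) to match the domain conditions, and the definition of $\pi[\cdot]$ on count matrices. The organization, however, differs. You introduce an auxiliary data-level map $\tilde\sigma_{\subtree(\node)}$ and first prove an explicit characterization of $\tilde c_\node$ as counting $\tilde\sigma_{\subtree(\node)}$-transformed data; the final statement then falls out in one direct calculation. The paper instead runs the induction on the target identity $c_{\node^{\new}} = \pi_{\anc(\node)}^{\new}[\tilde c_\node]$ itself: it checks leaves directly (using $\sigma_\tree^{\new} = \pi_{\anc(\node)}^{\new}\circ\tilde\pi_\node$), and for internal nodes it works purely at the count-matrix level via additivity $c_{\node^{\new}} = c_{\node_{\lleft}^{\new}} + c_{\node_{\rright}^{\new}}$, linearity of $\pi[\cdot]$, and the telescoping fact $\pi_{\anc(\node_{\lleft})}^{\new}\circ\tilde\pi_\node^{-1} = \pi_{\anc(\node)}^{\new}$. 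Your route makes the operational meaning of $\tilde c_\node$ explicit, which is conceptually clarifying; the paper's route avoids the auxiliary function and stays closer to the algorithmic objects, making the linearity/telescoping structure more visible. Either way the bookkeeping you flag (aligning the tail of the \autoref{Lemma:local-reverse-order} decomposition with your recursive $\tilde\sigma_{\subtree(\node)}$ along the data point's path, and the disjoint partition $D_\node = D_{\node_{\lleft}}\sqcup D_{\node_{\rright}}$) goes through without issue.
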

\begin{proof}
Consider any \textbf{leaf node} $N \equiv N_{[M]}$ at depth $M$, where we suppress the dependence on $M$ for now.
At the leaf node, the local counts are originally equal to the raw unpermuted counts based on Line 8 in \autoref{alg:construct-tree}, i.e.,
\begin{align}
    \forall j, a, \quad \tilde{c}_\node^{\textnormal{init}}(j,a) 
    = \sum_{i=1}^\nobs \one(\sigma_{\tree}(\xvec_{i})_j = a \land \sigma_{\tree}(\xvec_i) \in \domain(\node)) 
    = \sum_{i=1}^\nobs \one(x_{i,j} = a \land \xvec_i \in \domain(\node)) 
    \label{eqn:init-local-counts}
\end{align}
where the first equality is by the fact that only inputs that land in the node are counted and the second equality is because the original has all identity node permutations so $\sigma_{\tree}\equiv \pi_{\Id}$.
We now note that an ancestor permutation is a composition of independent permutations so each ancestor permutation can be split into an independent permutation over each feature, i.e.,
\begin{align}
    \pi_{\anc(\node)}(\xvec)_j = \pi_{\anc(\node),j}(x_j).
    \label{eqn:j-out-to-in}
\end{align}
We also note that:
\begin{align}
\begin{aligned}
    \sigma_{\tree}^\new 
    = \tilde{\pi}_{\node_{[0]}} \circ \tilde{\pi}_{\node_{[1]}} \circ \cdots \circ \tilde{\pi}_{\node_{[M-1]}} \circ \tilde{\pi}_{\node_{[M]}}
    = \pi_{\anc(\node_{[M]})}^\new \circ \tilde{\pi}_{\node_{[M]}}
    =\pi_{\anc(\node)}^\new \circ \tilde{\pi}_{\node} \,,
\end{aligned} \label{eqn:sigma-anc-node}
\end{align}
where the first and third equality is due to \autoref{Lemma:local-reverse-order}, and the last is by suppressing notational dependence on $[M]$.
Combining the facts above, we can derive the result for leaf nodes:
\begin{align}
    \forall j, a, \quad 
    c_{\node^{\new}}(j,a)
    &= \sum_{i=1}^\nobs \one(\sigma_{\tree}^\new(\xvec_{i})_j = a \land \sigma_{\tree}^{\new}(\xvec_i) \in \domain(\node^\new)) \\
    &= \sum_{i=1}^\nobs \one(\sigma_{\tree}^\new(\xvec_{i})_j = a \land \xvec_i \in \domain(\node)) \\
    &= \sum_{i=1}^\nobs \one(\pi_{\anc(\node)}^{\new}\circ\tilde{\pi}_{\node}(\xvec_{i})_j = a \land \xvec_i \in \domain(\node)) \\
    &= \sum_{i=1}^\nobs \one(\pi_{\anc(\node),j}^{\new}\circ\tilde{\pi}_{\node,j}(x_{i,j}) = a \land \xvec_i \in \domain(\node)) \\
    &= \sum_{i=1}^\nobs \one(x_{i,j} = \tilde{\pi}_{\node,j}^{-1} \circ \pi_{\anc(\node),j}^{\new-1}(a) \land \xvec_i \in \domain(\node)) \\
    &= \pi_{\anc(\node),j}^{\new}[ \tilde{\pi}_{\node,j}[\tilde{c}_\node^{\textnormal{init}}(j, a)]]\\
    &=\pi_{\anc(\node),j}^{\new}[ \tilde{c}_\node(j, a)] 
\end{align}
where the first equality is by definition, the second is by TSP tree equivalence, the third is by \eqref{eqn:sigma-anc-node}, the fourth is by \eqref{eqn:j-out-to-in}, the fifth is by invertibility, the sixth is by the definition of the $\pi[\cdot]$ operator and \eqref{eqn:init-local-counts}, and the last is by the PermuteCounts line in \autoref{alg:learn-local-permutations}.

The above derivation proves the statement for leaf nodes.
We now want to prove by induction that the statement holds for \textbf{non-leaf nodes}.  Without loss of generality, we will prove for one path between root and leaf by induction on a depth where $N_{[M]}$ is the leaf node and $N_{[0]}$ is the root node. Specifically, we will prove that this holds for $N_{[M-k]}$ where $k \in \{0,1,\dots, M\}$.
The base case of $k=0$ is proved above as $N_{[M]}$ is a leaf node so we only need to prove the inductive case.
Let $N\equiv N_{[M-(k+1)]}$ and let $N_\lleft$ and $N_{\rright}$ be it's left and right nodes which are both at depth $M-k$ (i.e., where the inductive hypothesis is assumed to hold).
First, we establish that the counts of a parent node are merely equal to the sum of the counts of children nodes, i.e.,
\begin{align}
\begin{aligned}
    \forall j, a, \quad c_{\node}(j,a)
    &= \sum_{i=1}^\nobs \one(\sigma_{\tree}(\xvec_i)_j = a \land \sigma_{\tree}(\xvec_i) \in \domain(\node)) \\
    &= \sum_{i=1}^\nobs \one(\sigma_{\tree}(\xvec_i)_j = a \land \sigma_{\tree}(\xvec_i) \in \domain(\node_{\lleft})) + \sum_{i=1}^\nobs \one(\sigma_{\tree}(\xvec_i)_j = a \land \sigma_{\tree}(\xvec_i) \in \domain(\node_{\rright}))\\
    &= c_{\node_{\lleft}}(j,a) + c_{\node_{\rright}}(j,a) \,,
\end{aligned} \label{eqn:counts-split}
\end{align}
where the second equality comes from the fact that $\domain(\node_{\lleft}) \cap \domain(\node_{\rright}) = \emptyset$ and $\domain(\node_{\lleft}) \cup \domain(\node_{\rright}) = \domain(\node)$.
Second, we show a following fact that will be useful in the proof:
\begin{align}
    \pi_{\anc(\node_{\lleft})}^\new \circ \tilde{\pi}_{\node}^{-1} 
    &= \pi_{\anc(\node_{[M-k]})}^\new \circ \tilde{\pi}_{\node_{[M-(k+1)]}}^{-1}  \label{eqn:sigma-tilde-reverse} \\
    &= (\tilde{\pi}_{N_{[0]}} \circ \tilde{\pi}_{N_{[1]}} \circ \cdots \circ\tilde{\pi}_{N_{[M-(k+2)]}} \circ \tilde{\pi}_{N_{[M-(k+1)]}} ) \circ \tilde{\pi}_{\node_{[M-(k+1)]}}^{-1}  \\
    &= \tilde{\pi}_{N_{[0]}} \circ \tilde{\pi}_{N_{[1]}} \circ \cdots \circ \tilde{\pi}_{N_{[M-(k+2)]}}  \\
    &= \pi_{\anc(\node_{[M-(k+1)]})}^\new = \pi_{\anc(\node)}^\new  \label{eqn:sigma-tilde-reverse-2}\,,
\end{align}
where the \eqref{eqn:sigma-tilde-reverse} and \eqref{eqn:sigma-tilde-reverse-2} are by \autoref{Lemma:local-reverse-order} 
Finally, we can prove the inductive step using the two facts above:
\begin{align}
    c_{\node^\new}
    &= c_{\node_{\lleft}^\new} + c_{\node_{\rright}^\new} \\
    &= \pi_{\anc(\node_{\lleft})}^\new[\tilde{c}_{\node_{\lleft}}] + \pi_{\anc(\node_{\rright})}^\new[\tilde{c}_{\node_{\rright}}] \\
    &= \pi_{\anc(\node_{\lleft})}^\new[\tilde{c}_{\node_{\lleft}}] + \pi_{\anc(\node_{\lleft})}^\new[\tilde{c}_{\node_{\rright}}] \\
    &= \pi_{\anc(\node_{\lleft})}^\new[\tilde{c}_{\node_{\lleft}} + \tilde{c}_{\node_{\rright}}] \\
    &= \pi_{\anc(\node_{\lleft})}^\new[\tilde{c}_{\node}^{\textnormal{init}}] \\
    &= \pi_{\anc(\node_{\lleft})}^\new[\tilde{\pi}_{\node}^{-1}[\tilde{c}_{\node}]] \\
    &= \pi_{\anc(\node)}^\new[\tilde{c}_{\node}]
\end{align}
where the first is by \eqref{eqn:counts-split}, the second is by the inductive hypothesis, the third is by noting that the left and right nodes have the same ancestors and defining $\pi_{\anc(\node_{\lleft})}^\new \equiv \pi_{\anc(\node_{\rright})}^\new$, the fourth is by the linearity of a permutation function, the fifth is by the definition $\tilde{c}^{\init}$ from Line 4 of \autoref{alg:learn-local-permutations}, the sixth is by the count permutation in Line 7 of \autoref{alg:learn-local-permutations}, and the last is by the composition of permutations and \eqref{eqn:sigma-tilde-reverse-2}.

\end{proof}

\subsection{Proof that algorithm produces a rank consistent TSP} 
\LearnLocalPermutations*
\begin{proof}
From \autoref{thm:relation-local-and-new-counts}, we know that $c_{\node^\new} = \pi_{\anc(\node)}^\new[\tilde{c}_{\node}]$.
We also note that the local counts are rank consistent by construction via Lines 6 and 7 of \autoref{alg:learn-local-permutations}, i.e., $\tilde{c}_\node \in \rcset$. 
Thus, we will show that applying $\pi_{\anc(\node)}^\new$ maintains the rank consistency property of $\tilde{c}_{\node}$ (similar to how a merge sort will retain the ordering when merging disjoint sets).

First, we can split the ancestor permutation into an independent permutation per feature:
\begin{align}
    \pi_{\anc(\node)}^\new[\tilde{c}_{\node}] = [\pi_{\anc(\node),0}^\new[\tilde{c}_{\node}(0)], \pi_{\anc(\node),1}^\new[\tilde{c}_{\node}(1)], \cdots]
\end{align}
Without loss of generality, we can focus on the $j$-th feature and expend the ancestor permutation based on \autoref{Lemma:local-reverse-order}:
\begin{align}
    \pi_{\anc(\node),j}^\new[\tilde{c}_{\node}(j)] 
    = \tilde{\pi}_{\node_{[0]},j} \circ \tilde{\pi}_{\node_{[1]},j} \circ \cdots \circ \tilde{\pi}_{\node_{[M-1]},j}[\tilde{c}_{\node}(j)] 
\end{align}
We will prove that the local permutations at every level for the feature $j$ will maintain rank consistency. Without loss of generality, let's denote this by $N \equiv N_{m}$, where $0 \leq m \leq M-1$ (this means all nodes are non-leaves).
We note that $\tilde{c}_\node^\init = \tilde{c}_{\node_{\lleft}} + \tilde{c}_{\node_{\rright}}$ from Line 4 of \autoref{alg:learn-local-permutations}.

We will first consider the case where the node split feature is not $j$, i.e., $j \neq s$.
If $j \neq s$, then $\domain_j(\node) = \domain_j(\node_{\lleft}) = \domain_j(\node_{\rright})$.
Thus, because the local counts are rank consistent on their domains, i.e., $\tilde{c}_{\node_{\lleft}}(j) \in \rcset(\domain_j(\node))$ and $\tilde{c}_{\node_{\rright}}(j) \in \rcset(\domain_j(\node))$, then $\tilde{c}^\init_{\node}(j) = \tilde{c}_{\node_{\lleft}}(j) + \tilde{c}_{\node_{\rright}}(j) \in \rcset(\domain_j(\node))$ because adding two vectors that are already ranked ordered will still be ranked ordered even when restricted to a subset of the vector based on the domain.
Now because $\tilde{c}^\init_{\node}(j)$ is already rank ordered, the learned permutation will be the identity, i.e., $\tilde{\pi}_{\node,j} = \pi_{\Id}$.
Clearly, the identity permutation will maintain rank consistency.

Now we consider the case where $j$ is the split feature, i.e., $j = s$.
If $j = s$, then the children domains are a partition of the parent's domain, i.e., $\domain(\node_{\lleft}) \cap \domain(\node_{\rright}) = \emptyset$ and $\domain(\node_{\lleft}) \cup \domain(\node_{\rright}) = \domain(\node)$.
Therefore, the initial local counts are either from the left or right but not both, i.e.,
\begin{align}
    \tilde{c}_{\node}^\init(j,a) = \begin{cases}
    \tilde{c}_{\node_{\lleft}}(j,a), & \textnormal{if }\, a \in \domain_j(\node_{\lleft}) \\
    \tilde{c}_{\node_{\rright}}(j,a), & \textnormal{if }\, a \in \domain_j(\node_{\rright}) \\
    \end{cases}
    \label{eqn:disjoint-counts}
\end{align}
We will now prove by contradiction that $\tilde{\pi}_\node$ maintains rank consistency.
Suppose $\tilde{\pi}_\node$ did not maintain rank consistency of its children, i.e., $\tilde{c}_{\node_{\lleft}}(j, \tilde{\pi}_\node^{-1}(a)) > \tilde{c}_{\node}(j, \tilde{\pi}_\node^{-1}(b))$ for $a < b$ and $a,b \in \domain(\node_{\lleft})$ where we can focus on the left node without loss of generality.
By the assumption on $\tilde{\pi}_\node$ and the fact that $\tilde{\pi}_\node$ sorts the counts from Line 6 of \autoref{alg:learn-local-permutations}, we know that $\tilde{c}^\init_{\node}(j, a) > \tilde{c}^\init_{\node}(j, b)$. 
And by combining this with the fact that $a,b \in \domain_j(\node_{\lleft})$ and the \eqref{eqn:disjoint-counts}, we can infer that $\tilde{c}_{\node_{\lleft}}(j, a) > \tilde{c}_{\node_{\lleft}}(j, b)$.
However, this contradicts the fact that we know $\tilde{c}_{\node_{\lleft}}(j, a) \leq \tilde{c}_{\node_{\lleft}}(j, b)$ because the local counts of the children are rank consistent bye construction, i.e., $\tilde{c}_{\node_{\lleft}}(j) \in \rcset(\domain(\node_{\lleft}))$.
Thus, an ancestor permutation on a split feature $j=s$ will also maintain rank consistency of its children.

Putting it all together, we know that $c_{\node}^\new = \pi_{\anc(\node)}^\new[\tilde{c}_{\node}]$ from \autoref{thm:relation-local-and-new-counts}, we know that $\tilde{c}_{\node} \in \rcset$ by Lines 6 and 7 of \autoref{alg:learn-local-permutations}, and we have just proven that $\pi_{\anc(\node)}^\new$ maintains the rank consistency of the input. Therefore, $\forall \node, \,\, c_{\node}^\new \in \rcset$.
\end{proof}

\subsection{Proof of \autoref{thm:algorithm-optimal}}
\AlgorithmOptimal*
\begin{proof}
The proof follows directly by combining the lemmas, i.e., that the algorithm will produce a TSP that has equivalent tree structure and that this TSP is rank consistent.
Then, we apply \autoref{thm:optimality-of-rank-consistent-tsps} to prove that this is indeed optimal given the tree structure.
\end{proof}

\section{More experimental details and results}
\label{exp_details}
\subsection{Modification of the Bipartite flow code}
\label{modify_bipartite}
Note that the BF model implemented in \cite{bricken_trentbrickpytorchdiscreteflows_2021}, had errors when training dataset with odd dimensions and dimensions greater than 2.
Thus, modifications were made to fix the bipartite model code (that are discussed in more details in the appendix). The model's initial embedding flow layers (commonly used for NLP sequence data) were replaced by a 5 linear layer network with batch normalization and ReLU activation functions on every intermediate output node. A skip connection was implemented between the input and the output for every coupling layer.
The BF model implemented in \cite{bricken_trentbrickpytorchdiscreteflows_2021}, had some bugs in the code (e.g., running the bipartite model for data dimensions $d>2$ would yield a runtime error).Thus, modifications were made to fix the bipartite model code. The model's initial embedding flow layers (commonly used for NLP sequence data) were replaced by a single hidden layer network with an activation function. A ReLU activation function was incorporated into the hidden layer to add non-linearity. The size of the hidden layer was proportional to the feature size times half the dimension $(\frac{1}{2}*k*d)$, since only half of the dimension would be mapped after the split. Each flow layer does a transformation on one of the splits, therefore, at least a paired flow layers (even number of flow layers) is required.

\subsection{More details about Gaussian Copula Synthetic Data Experiments}
\label{Copula}
The generating process can be summarized as follows:
We first generated data from a multivariate normal distribution, which can have strong dependencies between features, and normalized each feature by subtracting the mean and dividing by the standard deviation. 
Then, we applied the CDF of a standard normal distribution to all features independently---which creates uniform marginal distributions.
Finally, we applied the inverse CDF of a discrete distribution---which will generate discrete data; the discrete distribution could be a Bernoulli distributions with different bias parameters $p$ ($k=2$) or more generally a categorical distribution ($k\geq2$).
We then generated four datasets using the Gaussian copula model, with varying degrees of dependency among the dataset’s features, where the underlying Gaussian graphical model is a simple cycle graph. 
Specifically, we set the total correlation (a generalization of mutual information) of the Gaussian distribution (underlying the Gaussian copula), which can be solved in closed form for Gaussian distributions, to be $\{0,1,10,100\}$.
For all experiments, we used the inverse CDF of a Bernoulli distribution with parameters $p \in \{0.5, 0.3, 0.5, 0.2\}$ applied to each of the 4 dimensions separately. or a balanced dimensional transformation.

\subsection{Model space}
\label{model_space}
Below we list the models that we tried for 
$\textnormal{DTF}_{GLP}$, $\textnormal{DTF}_{RND}$ grouped under DTF, as well as the models we tried for AF and BF and DDF in an attempt to find the "best" model for all the experiments of section 4.

\begin{table}[!ht]
\centering
 \caption{Models tried for different exps. }
 \label{tab:real_data_combined}
\begin{tabular}{|c|c|c|c|c|c|}
\hline
\multicolumn{1}{|c|}{Exp} &
\multicolumn{1}{c|}{DTF} &
\multicolumn{1}{c|}{AF} &
\multicolumn{1}{c|}{BF}&
\multicolumn{1}{c|}{DDF}\\ 
\hline
 synthetic & \shortstack{ TSPs $\in \{1,2..,10\}$ \\ M $\in \{2,3,...,8\}$} & hidden layers $\in \{64, 128\}$ & \shortstack{$\alpha \in \{1\}$ \\ $\beta \in \{2, 16\}$} & \shortstack{hidden layers $\in \{128, 256, 512\}$ \\coupling layers $\in \{1,2,..5\}$  }\\ 
\hline
 Mushroom & \shortstack{ TSPs $\in \{1,2..,10\}$ \\ M $\in \{2,3,...,8\}$} & hidden layers $\in \{128\}$ & \shortstack{ $\alpha \in \{2, 4\}$ \\ $\beta \in \{2, 4, 8\}$} & \shortstack{hidden layers $\in \{128, 256, 512\}$ \\coupling layers $\in \{1,2,..5\}$}\\  
\hline
 MNIST & \shortstack{TSPs $\in \{1,2..,30\}$ \\ M $\in \{3,7,10\}$} & hidden layers $\in \{1568\}$ & \shortstack{ init $\in \{id, $w$\backslash$ o$ id\}$ \\ $\alpha \in \{2, 4, 8\}$ \\ $\beta \in \{1, 1/4, 1/16\}$}  & \shortstack{hidden layers $\in \{128, 256, 512\}$ \\coupling layers $\in \{1,2,..5\}$}\\ 
 \hline
 Genetic & \shortstack{TSPs $\in \{1,2..,20\}$ \\ M $\in \{3\}$} & hidden layers $\in \{1610\}$ & \shortstack{init $\in \{id, $w$\backslash$ o$ id\}$ \\ $\alpha \in \{2, 4, 8\}$ \\ $ \beta \in \{1, 1/4, 1/16\}$} & \shortstack{hidden layers $\in \{128, 256, 512\}$ \\coupling layers $\in \{1,2,..5\}$}\\
\hline
\end{tabular}
\end{table}

\paragraph{DDF's details: } We used the mlp architecture for all experiments of that we tested for the DDF model, and varied the number of coupling layers and hidden layers across the different experiments. We trained all models using 10 epochs for both the prior and the neural network training since increasing this number further didn't lead to a significant improvement in the results.

\paragraph{AF and BF details: }
For the AF and BF models, we use a linear layer architecture where 4 flow layers were used for the synthetic data experiments, and 6 flow layers were used for the mushroom, MNIST, and genetic data experiments experiments. We ran for a total of 200 epochs sampling 250 samples in each epoch. 

\paragraph{AF's model architecture }The AF model was constructed based on a Masked Autoencoder for Distribution Estimation (MADE) architecture, and a fixed number of hidden layers were decided for each experiment. By default, all small dimensional dataset ($d\leq10$) was set to 64 hidden layers and medium dimensional dataset ($10\leq d\leq100$) was set to 128 hidden layers. Very large dimensional dataset (MNIST \& Genetic) had hidden layers twice the size of its dimension. 

\paragraph{BF's model architecture: }The BF model had two different 5 linear layer architectural structures. The first BF model scales the initial input size $I$ by a constant $\beta$ and retains both the input and outputs size to $\beta * I$ for all the intermediate linear layers and reduces the size back to its input size at the last linear layer. The second model followed an autoencoder structure where the output of the first two linear layers reduces the input by a constant factor of $\alpha$. The third linear layer retains the size of the input and the last 2 linear layers increases the inputs by a constant factor of $\alpha$, thus rescaling the final layer output size back to its original input size. Moreover, for MNIST and Genetic datasets experiments the models had an extra preprocessing stage where the layers could be initial set to behave as an identity function seen in Table 4 (init $\in \{id, $w$\backslash$ o$ id\}$).

\paragraph{Other notes: } Our DTF model's independent base distribution $Q_{\zvec}$ is the marginal distribution calculated using the counts of each category and for each dimension of our output. The baseline model's prior distribution, however, is a randomized marginal distribution parameter that is optimized as the NLL is minimized in the deep learning model. For a better comparison with our DTF model -where Q is initialized to be the marginal on the input data- we include an initialization stage for the BF model that modifies the model parameter weights to make the first forward pass resemble an identity function. We also initialize the marginal distribution as the frequency counts of the input dataset. This helps the baseline model have the same NLL starting value as that of our DTF model. We include both the identity initialization and the random initialization for the BF baseline experiments for the MNIST dataset and the genetic dataset.

\subsection{The full table for synthetic datasets}
\label{app:syn_results_full}
Below we present the results of Table~\ref{tab:syn_exp_results} in more details.

\begin{table}[ht]
\caption{The full table version of Table~\ref{tab:syn_exp_results}}
\label{tab:syn_exp_results_full}
\begin{center}
\begin{small}
\begin{sc}
\begin{tabular}{p{0.18\linewidth}p{0.12\linewidth}p{0.12\linewidth}p{0.12\linewidth}p{0.12\linewidth}p{0.12\linewidth}}
\toprule
& AF & BF & DDF & 
$\textnormal{DTF}_{GLP}$ & $\textnormal{DTF}_{RND}$\\
\hline
{\small \textbf{8Gaussian}} & \multicolumn{5}{c}{} \\
$\textnormal{NLL}$ &  6.92 \tiny{($\pm$ 0.06)} & 7.21 \tiny{($\pm$ 0.09)} & 6.42 \tiny{($\pm$ 0.03)} & 6.5 \tiny{($\pm$ 0.03)} & 6.94 \tiny{($\pm$ 0.04)}\\
TT  & 155.9 \tiny{($\pm$ 2.2)}  & 231.6 \tiny{($\pm$ 5.2)} & 119.8 \tiny{($\pm$ 0.8)} & 7.3 \tiny{($\pm$ 0.1)} & 0.4 \tiny{($\pm$ 0.0)} \\
NumParams  &  253656 &  571116 & 114796 
& 5650 \tiny{($\pm$ 38)} & 23693 \tiny{($\pm$ 1428)} \\ 
\hline
{\small \textbf{COP-H}} & \multicolumn{5}{c}{} \\
$\textnormal{NLL}$ &  1.53 \tiny{($\pm$ 0.02)} & 1.47 \tiny{($\pm$ 0.06)} & 1.46 \tiny{($\pm$ 0.1)} & 1.33 \tiny{($\pm$ 0.02)} & 1.33 \tiny{($\pm$ 0.02)} \\
TT  & 10.7 \tiny{($\pm$ 0.2)}  & 13.2 \tiny{($\pm$ 0.2)} & 58.1 \tiny{($\pm$ 1.0)} & $\leq$0.1 \tiny{($\pm$ 0.0)} & 0.1 \tiny{($\pm$ 0.0)} \\
NumParams & 21024 & 14112 & 541720 & 34 \tiny{($\pm$ 1)} & 119 \tiny{($\pm$ 8)} \\

\hline
{\small \textbf{COP-M}} & \multicolumn{5}{c}{}  \\
$\textnormal{NLL}$ &  1.76 \tiny{($\pm$ 0.1)} &  1.62 \tiny{($\pm$ 0.05)}& 1.51 \tiny{($\pm$ 0.16)} & 1.4 \tiny{($\pm$ 0.02)} & 1.4 \tiny{($\pm$ 0.02)}\\
TT  & 10.6 \tiny{($\pm$ 0.02)}  & 13.3 \tiny{($\pm$ 0.06)} & 77.9 \tiny{($\pm$ 1.8)} & $\leq$0.1 \tiny{($\pm$ 0.0)} & 0.1 \tiny{($\pm$ 0.0)} \\
NumParams &  21024 & 14112 & 677148 & 43 \tiny{($\pm$ 1)} & 132 \tiny{($\pm$ 11)} \\

\hline

{\small \textbf{COP-W}} & \multicolumn{5}{c}{} \\
$\textnormal{NLL}$ &  2.42 \tiny{($\pm$ 0.02)} & 2.35 \tiny{($\pm$ 0.03)} & 2.29 \tiny{($\pm$ 0.07)} & 2.22 \tiny{($\pm$ 0.02)} & 2.22 \tiny{($\pm$ 0.02)} \\
TT  & 10.5 \tiny{($\pm$ 0.01)}  & 13.2 \tiny{($\pm$ 0.1)} & 77.3 \tiny{($\pm$ 1.7)}
& $\leq$0.1 \tiny{($\pm$ 0.0)} & 0.1 \tiny{($\pm$ 0.0)}  \\
NumParams & 21024 & 14112 & 677148 & 40 \tiny{($\pm$ 0)} & 162 \tiny{($\pm$ 5)} \\
\bottomrule
\end{tabular}
\end{sc}
\end{small}
\end{center}

\end{table}

\subsection{The model that was used for the results}
\label{models_used}

In Table~\ref{tab:model_used} we report the model that corresponded to the results reported in Tables~\ref{tab:syn_exp_results} and ~\ref{tab:real_exp_results}.

\begin{table*}[ht]
\caption{The model parameters for the results presented in tables~\ref{tab:syn_exp_results} and ~\ref{tab:real_exp_results}. nH refers to the number of hidden layers, nC refers to the number of coupling layers, nTSP referes to the number of TSPs and M is the maximum depth }
\label{tab:model_used}
\centering
\begin{tabular}{p{0.12\linewidth}p{0.12\linewidth}p{0.12\linewidth}p{0.12\linewidth}p{0.12\linewidth}p{0.12\linewidth}}
\hline
& AF & BF & DDF  & $\textnormal{DTF}_{GLP}$ & $\textnormal{DTF}_{RND}$\\
\hline

$\textnormal{8 Gaussian}$ & nH = 128 & \shortstack{$\alpha = 1$\\ $\beta = 2$} & \shortstack{nC = 2\\ nH = 128} & 
\shortstack{nTSP = 9\\ M = 2} & \shortstack{nTSP = 10\\ M = 8}\\
\hline

$\textnormal{COP-H}$ & nH = 64 & \shortstack{$\alpha = 1$\\ $\beta = 16$} & \shortstack{nC = 4\\ nH = 256} & 
\shortstack{nTSP = 2\\ M = 2} & \shortstack{nTSP = 10\\ M = 2}\\
\hline

$\textnormal{COP-M}$ & nH = 64 & \shortstack{$\alpha = 1$\\ $\beta = 16$} & \shortstack{nC = 5\\ nH = 256} & 
\shortstack{nTSP = 2\\ M = 2} & \shortstack{nTSP = 10\\ M = 2}\\
\hline

$\textnormal{COP-W}$ & nH = 64 & \shortstack{$\alpha = 1$\\ $\beta = 16$} & \shortstack{nC = 5\\ nH = 256} & 
\shortstack{nTSP = 2\\ M = 2} & \shortstack{nTSP = 10\\ M = 2}\\
\hline

$\textnormal{Mushroom}$ & nH = 128 & \shortstack{$\alpha = 4$\\ $\beta = 8$} & \shortstack{nC = 5\\ nH = 512} & 
\shortstack{nTSP = 8\\ M = 6} & \shortstack{nTSP = 10\\ M = 7}\\
\hline

$\textnormal{MNIST}$ & nH = 1568 & \shortstack{$\alpha = 4$\\ $\beta = 1/16$} & \shortstack{nC = 5\\ nH = 256} & 
\shortstack{nTSP = 21\\ M = 10} & \shortstack{nTSP = 30\\ M = 3}\\
\hline

$\textnormal{Genetic}$ & nH = 1610 & \shortstack{$\alpha = 4$\\ $\beta = 1/16$} & \shortstack{nC = 5\\ nH = 128} & 
\shortstack{nTSP = 6\\ M = 3} & \shortstack{nTSP = 20\\ M = 3}\\
\hline

\end{tabular}
\end{table*}

\subsection{GPU training times comparison}
\label{GPU_train_times}
In section 4 of the main paper, we presented all training times on CPU for our model vs AF or BF. In Table~\ref{tab:GPU_train_times}, we present our model's training time on CPU vs the AF and BF training times on GPU.

\begin{table}[!ht]
\centering
\caption{Training times for baselines (GPU) and our approach (DTF). Our random DTF ($\textnormal{DTF}_{RND}$) trained on a CPU is faster than other methods \emph{even when baselines are trained on a GPU}.
}
\label{table:comparison}

\begin{tabular}{c | c | c | c  |c  |c} 
\hline\hline 
Model & 8Gaussian & COPH & Mushroom & Genetic & MNIST \\ [0.5ex] 
\hline 
AF (GPU) & 42.2 \tiny{$\pm$ 3.5} & 14.6 \tiny{ $\pm$ 0.4} & 7.7 \tiny{ $\pm$ 1.0}  & 23.8 \tiny{ $\pm$ 0.9}  & 305.6 \tiny{ $\pm$ 31.4}  \\
BF (GPU) & 135.8 \tiny{ $\pm$ 0.2} & 20.9 \tiny{$\pm$0.3} & 6.0 \tiny{ $\pm$ 1.3}  & 38.0 \tiny{ $\pm$ 5.4} & 308.7 \tiny{ $\pm$ 4.1}  \\
DDF (GPU) & 79.7 \tiny{$\pm$ 0.8} & 48.8 \tiny{ $\pm$0.9} & 75.2 \tiny{ $\pm$ 1.0}  & 29.5 \tiny{$\pm$1.1}  & 334.3 \tiny{ $\pm$ 8.7}  \\
$\textnormal{DTF}_{GLP}$ & 7.3 \tiny{ $\pm$ 0.1} & \textbf{$\leq$ 0.1 \tiny{ $\pm$ 0.0}} & 9.9\tiny{$\pm$ 0.2}  & 411.5 \tiny{ $\pm$ 2.3}  & 5213.7 \tiny{ $\pm$ 204.9} \\
$\textnormal{DTF}_{RND}$ & \textbf{0.4 \tiny{$\pm$ 0.0}} & \textbf{$\leq$ 0.1 \tiny{$\pm$ 0.0}} & \textbf{0.5 \tiny{$\pm$ 0.0} } & \textbf{5.9 \tiny{$\pm$ 0.0}}  & \textbf{105.6 \tiny{$\pm$ 0.1}} \\
\hline
\end{tabular}
\label{tab:GPU_train_times}
\end{table}

\subsection{Exploration of the samples obtained from sampling the learned distribution}
\label{visuals}
\paragraph{Discretized Gaussian Mixture Model}
To visualize samples that can be generated from the distribution using our model, 
we trained our best DTF models on the 8Gaussian dataset. We then generated 1024 samples that were plotted in  
Figure~\ref{fig:DTF_GLP_gauss_mix} for $\textnormal{DTF}_{GLP}$ and in Figure~\ref{fig:DTF_gauss_mix_rnd} for $\textnormal{DTF}_{RND}$. The original data distribution is plotted in Figure~\ref{fig:orig_gauss_mix}.
We also sampled 1024 samples generated from the distributions that was learned from the best AF, BF, and DDF models and the results of these are plotted in Figures~\ref{fig:AF_toy}, ~\ref{fig:BF_toy} and ~\ref{fig:DDF_samples} respectively.

We notice that the samples that were generated from the distribution we learned from the $\textnormal{DTF}_{GLP}$ model mirror the original samples of data better than the samples generated from the learned distribution of the AF or BF models and is of similar quality compared to those obtained from DDF.

\begin{figure}[ht]
     \centering
     \begin{subfigure}[h]{0.3\textwidth}
         \centering
         \includegraphics[width=\textwidth]{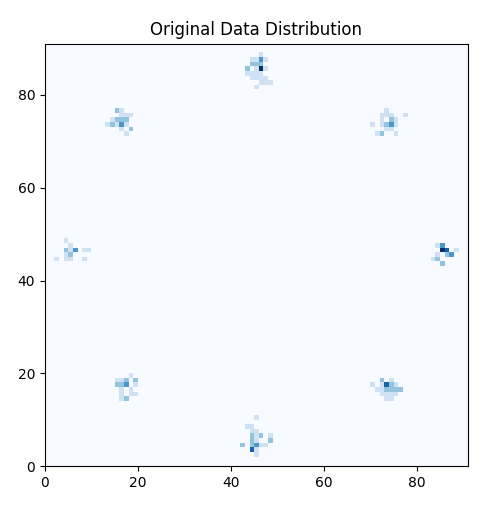}
         \caption{Original data distribution.}
         \label{fig:orig_gauss_mix}
     \end{subfigure}
     \hfill
     \begin{subfigure}[h]{0.31\textwidth}
         \centering
         \includegraphics[width=\textwidth]{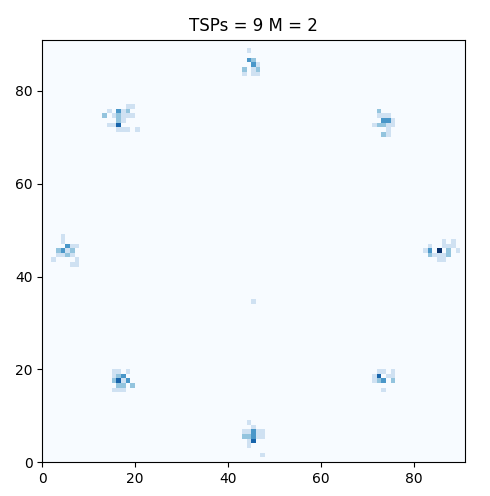}
         \caption{\shortstack{Samples from learned distribution\\ for $\textnormal{DTF}_{GLP}$.}}
         \label{fig:DTF_GLP_gauss_mix}
     \end{subfigure}
    \hfill
     \begin{subfigure}[h]{0.32\textwidth}
         \centering
         \includegraphics[width=\textwidth]{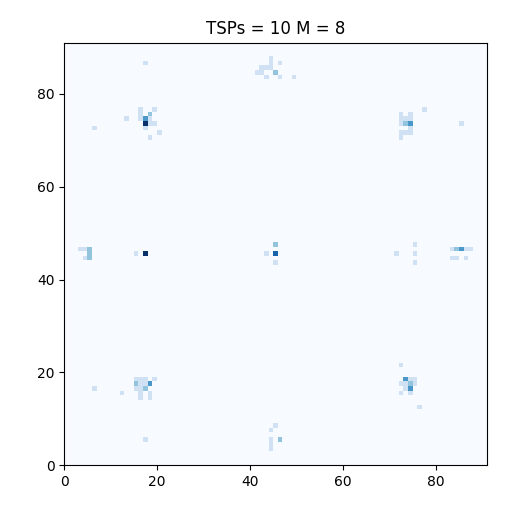}
         \caption{\shortstack{Samples from learned distribution\\ for $\textnormal{DTF}_{RND}$.}}
         \label{fig:DTF_gauss_mix_rnd}
     \end{subfigure}
     \begin{subfigure}[h]{0.29\textwidth}
         \centering
         \includegraphics[width=\textwidth]{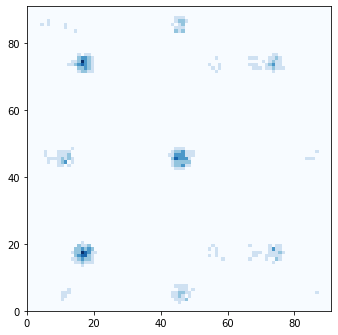}
         \caption{\shortstack{Samples from learned distribution\\ for AF.}}
         \label{fig:AF_toy}
     \end{subfigure}
    \hfill
    \begin{subfigure}[h]{0.29\textwidth}
         \centering
         \includegraphics[width=\textwidth]{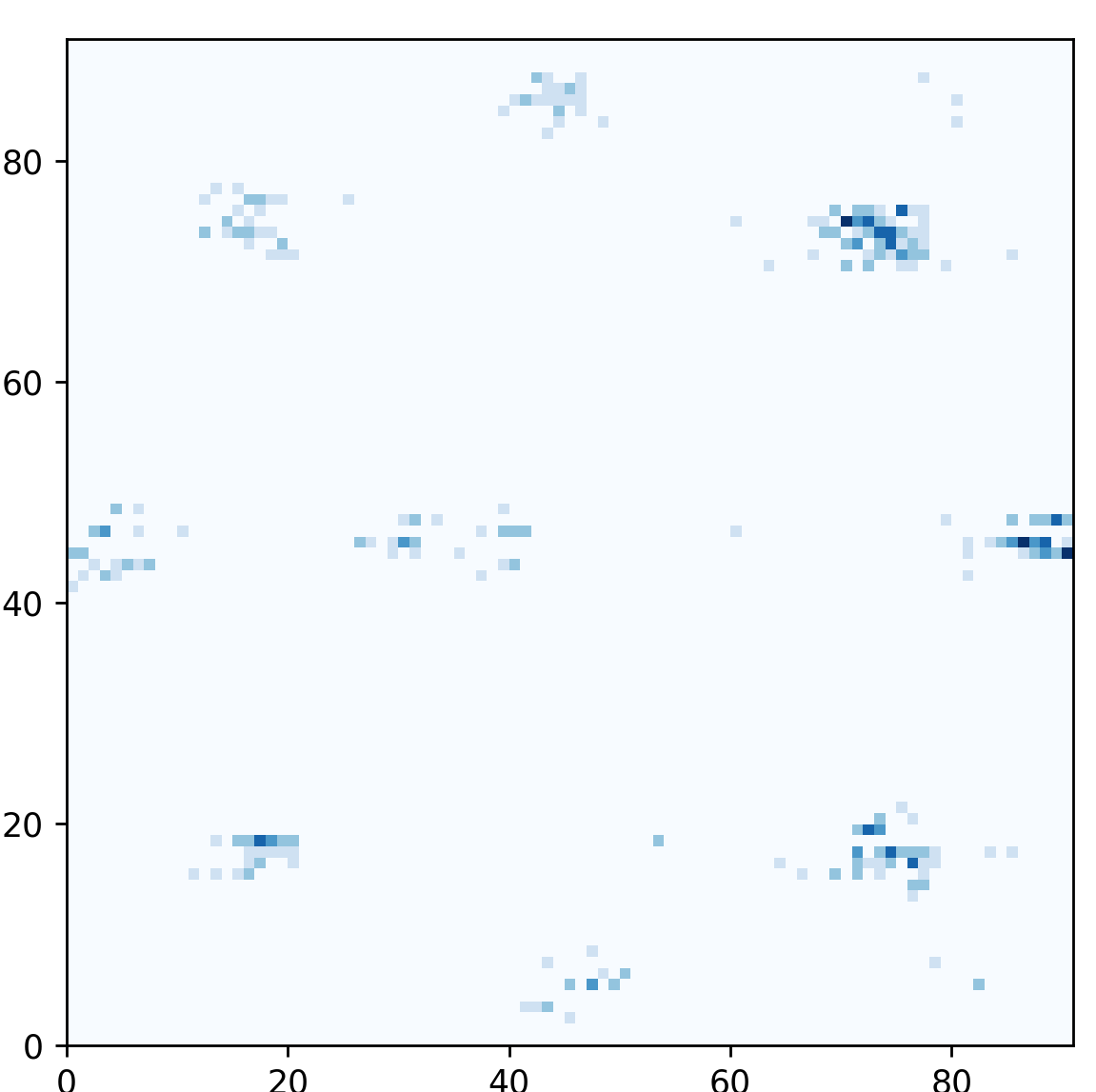}
         \caption{\shortstack{Samples from learned distribution\\ for BF.}}
         \label{fig:BF_toy}
     \end{subfigure}
     \hfill
     \begin{subfigure}[h]{0.31\textwidth}
         \centering
         \includegraphics[width=\textwidth]{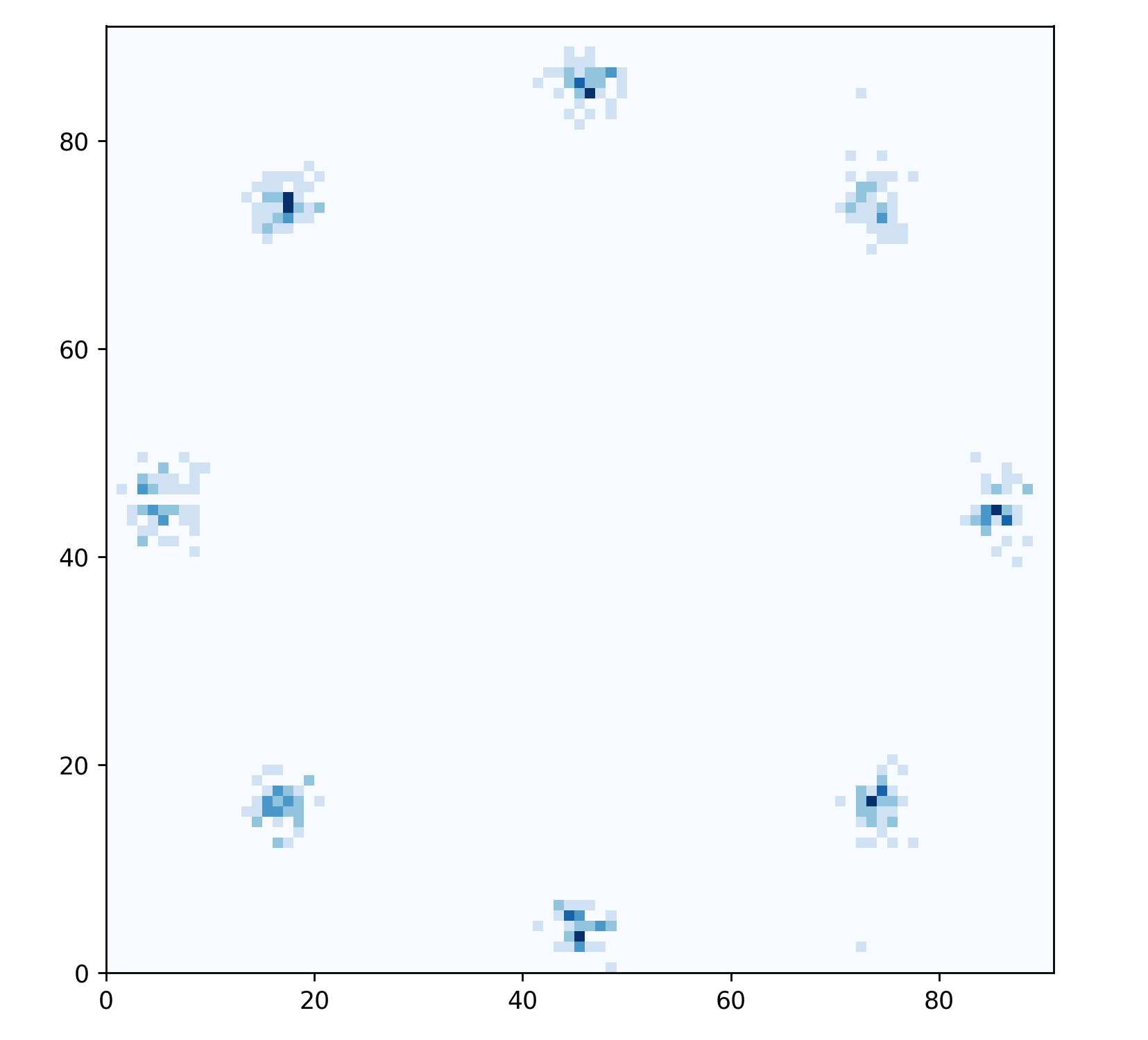}
         \caption{\shortstack{Samples from learned distribution\\ for DDF.}}
         \label{fig:DDF_samples}
     \end{subfigure}
\end{figure}

\end{document}